\newcommand{\Rd}{\mathbb R^d}
\newtheorem{definition}{Definition}
\newtheorem{proposition}{Proposition}
\newtheorem*{proposition*}{Proposition}
\newtheorem{theorem}{Theorem}
\newtheorem*{theorem*}{Theorem}
\newtheorem{lemma}{Lemma}
\newtheorem{assumption}{Assumption}
\crefname{assumption}{Assumption}{Assumptions}
\newcounter{numrellocal}
\renewcommand{\thenumrellocal}{\roman{numrellocal}}
\newcounter{numrelglobal}
\newcommand{\numrel}[2]{
  \stepcounter{numrellocal}
  \refstepcounter{numrelglobal}
  \ltx@label{#2}
 \overset{(\thenumrellocal)}{#1}
}
\title{KALE Flow: A Relaxed KL Gradient Flow for Probabilities with Disjoint Support}
\author{%
  Pierre Glaser \\
Gatsby Computational Neuroscience Unit\\
  University College London\\
  \texttt{pierreglaser@gmail.com} \\
  \And
  Michael Arbel \\
  Universit\'{e} Grenoble Alpes, Inria,  CNRS, \\
 Grenoble INP, LJK,38000 Grenoble, France\thanks{Work mostly completed at the Gatsby Unit.}\\
  \texttt{michael.n.arbel@gmail.com} \\
  \AND
  Arthur Gretton\\
  Gatsby Computational Neuroscience Unit\\
  University College London\\
  \texttt{arthur.gretton@gmail.com}
}
\begin{document}

\vspace*{-8em}

\maketitle

\begin{abstract}
We study the gradient flow for a relaxed approximation to the Kullback-Leibler (KL) divergence
between a moving source and a fixed target distribution.
This approximation, termed the
KALE (KL Approximate Lower bound Estimator), solves a regularized version of
the Fenchel dual problem defining the KL over a restricted class of functions.
When using a Reproducing Kernel Hilbert Space (RKHS) to define the function
class, we show that the KALE continuously interpolates between the KL and the
Maximum Mean Discrepancy (MMD). Like the MMD and other Integral Probability
Metrics, the KALE remains well-defined for mutually singular
distributions. Nonetheless, the KALE inherits from the limiting KL a greater 
sensitivity to mismatch in the support of the distributions, compared with the MMD. These two properties make the
KALE gradient flow particularly well suited when the target distribution is supported on a low-dimensional manifold. Under an assumption of sufficient smoothness of the trajectories, we show the global convergence of the KALE flow. We propose a particle implementation of the flow given initial samples from the source and the target distribution, which we use to empirically confirm the KALE's properties.
\end{abstract}

\section{Introduction}
\label{introduction} 
We consider the problem of transporting probability mass from a source distribution $\mathbb{P}$ to a target distribution $\mathbb{Q}$ using a Wasserstein gradient flow in probability space. 
When the density of the target is well-defined and available, the Wasserstein gradient flow of the Kullback-Leibler (KL) divergence provides a simple way to transport mass towards the target through the Fokker-Planck equation as established in the seminal work of \cite{jordan1998variational}. 
Its time discretization yields a practical algorithm, the  Unadjusted Langevin Algorithm (ULA), which comes with strong convergence guarantees \citep{durmus2018analysis,dalalyan2019user}. A more recent gradient flow approach, Stein Variational Gradient Descent (SVGD) \cite{liu2017stein}, also leverages the analytic expression of the density and constructs a gradient flow of the KL, albeit using a metric different from the Wasserstein metric.

The KL divergence is of particular interest due to its information theoretical interpretation \citep{Shannon:1948} and its use in Bayesian Inference \citep{Blei:2017}. The KL defines a strong notion of convergence between probability distributions, and as such is often widely used for learning generative models, through Maximum Likelihood Estimation \citep{Daniels:1961}. Using the KL as a loss requires  knowledge of the density of the target, however; moreover, this loss is  well-defined only when the distributions share the same support. Consequently, we cannot use the KL in settings where the probability distributions are mutually singular, or when they are only accessible through samples. 
In particular, the Wasserstein gradient flow of the KL in these settings is ill-defined.

Recent works have considered the gradient flow of 
Integral Probability Metrics (IPM) \cite{Mueller97} instead of the KL, in settings where only samples (and not the density) of the target are known. This includes the Maximum Mean Discrepancy (MMD) \citep{arbel_maximum_2019_1} and the Kernelized Sobolev Discrepancy (KSD) \citep{Mroueh:2019,mroueh2020unbalanced}. 
 One motivation for considering these particle flows is their connection with the training of Generative Adversarial Networks (GANs) \cite{gans} using IPMs such as the Wasserstein distance \citep{towards-principled-gans,wgan-gp,sinkhorn-igm}, the MMD \citep{gen-mmd,Li:2015,li_mmd_2017,cramer-gan,Binkowski:2018,arbel2018gradient1} or the Sobolev discrepancy \cite{Mroueh:2017}. 
 As discussed in  \citep[Section 3.3]{Mroueh:2019}, these flows define update equations that are similar to those of a generator in a GAN. Thus, studying the convergence flows can provide helpful insight into conditions for GAN convergence, and ultimately, improvements to GAN training algorithms. 
A second motivation lies in the connection between the training dynamics of infinitely wide 2-layer neural networks and the Wasserstein gradient flow of particular functionals \citep{rotskoff2018neural}.  Thus, analyzing the asymptotic behavior of such flows \citep{mei2018mean,sirignano2018mean,chizat_global_2018} can ultimately provide convergence guarantees for the training dynamics of neural networks. Establishing such results remains challenging for some classes of IPMs, however, such as the MMD  \citep{arbel_maximum_2019_1}.

In this paper, we construct the gradient flow of a relaxed approximation of the KL, termed the KALE (KL Approximate Lower bound Estimator). Unlike the KL, the KALE is well-defined given any source and target, regardless of their relative absolute continuity. The KALE  is obtained by solving a regularized version of the Fenchel dual problem defining the KL, defined over a restricted function class \cite{nguyen_estimating_2010,arbel_generalized_2020_1}, and can be estimated solely from samples from the data. 
   The version of the KALE we consider in this work 
   benefits from two important features that are crucial for defining and analyzing a \emph{relaxed} gradient flow of the KL. (1) We define the function class to be a Reproducing Kernel Hilbert Space (RKHS). This makes the optimization problem defining the KALE convex and allows for practical algorithms computing it. (2) We consider a regularized version of the problem defining the KALE, thus providing a simpler expression for the gradient flow by virtue of the envelope theorem \cite{Milgrom:2002}. In \cref{sec:kale}, we review the KALE, and show that it is a divergence that metrizes the weak convergence of probability measures, while interpolating between the KL and the MMD depending on the amount of regularization. 
 We then construct in \cref{sec:kale_flow} the Wasserstein Gradient Flow of the KALE, and we show global convergence of the KALE flow provided that the trajectories are sufficiently regular.  In \cref{sec:particle_descent}, we introduce the \emph{KALE particle descent} algorithm as well as a practical way to implement it. 
 In \cref{sec:experiments}, we present the results obtained by running the KALE particle descent algorithm on a set of problems with different geometrical properties. We show empirically that the sensitivity to support mismatch of the KALE inherited from the KL leads to well-behaved trajectories compared to the MMD flow, making the KALE flow a desirable alternative when a KL flow cannot be defined.
 \paragraph{Related work.}
The Fenchel dual formulation of the KL, and more generally $ f $-divergences, has a rich history in Machine Learning: \cite{nguyen_estimating_2010} relied on this dual formulation to estimate the KL between two probability distributions when their density ratios belong to an approximating class. They derived a plug-in estimator for the KL which comes with convergence guarantees. 
In the context of GANs, \cite{Nowozin:2016} used the Fenchel dual representation of $f$-divergences, of which the KL is a particular instance, as a GAN critic. Later, \cite{mescheder_adversarial_nodate} used Fenchel duality to estimate the KL in the context of Variational Inference (VI) when the variational distribution is chosen to be an implicit model, thus allowing more flexible models at the expense of tractability of a  KL term appearing in the expression of the ELBO. In both the GAN and VI settings, the function class defining the $f$-divergence was restricted to neural networks.
Recently, \cite{arbel_generalized_2020_1} showed that controlling the smoothness of such a function class results in a divergence, the KL Approximate Lower bound Estimator (KALE), that metrizes the \emph{weak convergence of distributions} \cite{dudley:analysis}, unlike the KL which defines a stronger topology \cite{Van-Erven:2014}. The KALE is therefore well-suited for learning Implicit Generative Models which are only accessible through sampling, as advocated in \cite{arjovsky_wasserstein_2017}. When neural network classes are used, however, the method has no optimization guarantees, as the dual problem becomes non-convex due to the choice of the function class. This is unlike our setting  \citep[and that of][]{nguyen_estimating_2010}, since our dual problem is strongly convex and comes with guarantees. In parallel to work related to $ f $-divergences, \cite{hsieh2018mirrored,bubeck2018sampling,salim2020primal,ahn2020efficient} have investigated the task of sampling in the case where the source and the target have disjoint supports. Again, unlike our setting, these works assume that the log-density of the target distribution is known.

  \section{Interpolating between KL and MMD using KALE}
  \label{sec:kale}
In this section, we introduce the \emph{KALE}, a relaxed approximation of the KL divergence. Although we will use the KALE to define a relaxed KL gradient flow, we show in this section that the KALE is an object of independent interest outside the gradient flow setting: indeed, it is 
a valid \emph{probability divergence} that metrizes the weak
convergence of probability distributions, and interpolates between the KL
and the Maximum Mean Discrepancy. 
\paragraph{Mathematical details and notation} 
We start by introducing some notation.  We denote by $ \mathcal  P(\Rd) $ the set of probability
measures defined on $ \mathbb{R}^d $ endowed with its Borelian $ \sigma $-algebra, and by $ \mathcal  P_2(\Rd) \subset \mathcal  P(\Rd) $ the set of elements of $\mathcal  P(\Rd)$ with finite second moment.
Weak convergence of a sequence of
probability measures $ (\mathbb{ P }_n)_{n \geq 0} $ towards $ \mathbb{ P } $ is written $ \mathbb{
P }_n \rightharpoonup \mathbb{ P} $. 
A positive definite kernel on the set $ \Rd
$ will be denoted $ k: \Rd \times \Rd \longmapsto \mathbb{R} $, with RKHS $ \mathcal H $.
The Dirac delta measure for $ x \in \Rd $ will be written $ \delta_{x} $. We denote by $C^{\infty}_c(\Rd \times (0, +\infty))$ the set of infinitely differentiable functions with compact support on $ \Rd\times (0, +\infty)$,
and  by $C^0_b(\Rd) $ the set of continuous bounded functions from $ \Rd $ to $ \mathbb{R} $. Sets of $ N $ points in $ \Rd $ will be indexed
using a superscript $ \{ x^{(i)} \}_{i=1}^{N}  $, while a sequence of points
in $ \Rd $ will use a subscript: $ \left ( x_n \right )_{n \in
\mathbb{N}} $.  If random, elements of such sets $ X^{(i)} $ or iterates of
such sequences $ X_n $ will be capitalized. If not, they will be kept in
lower-case. For the sake of notational lightness, the choice of the norm used for a specific object (vectors, functions, operators) will be specified with a subscript (e.g. $ \| h \|_{ \mathcal  H} $ for the RKHS norm) only if the said choice is not obvious from the context. This remark also holds when referring to the null element of a vector space ($0_{\mathcal  H} $, $ 0_{\mathbb{R}^d} $, ...).

\subsection{The KL Approximate Lower bound Estimator (KALE)}
\label{sec:KALE}
The central equation to derive the KALE is the (Fenchel) dual formulation of the KL \citep[Lemma 9.4.4]{ambrosio2008gradient}:
\begin{equation}
\label{eq:FL-KL}
\begin{aligned}
  \text{KL}(\mathbb{ P } \mid \mid \mathbb{ Q })  &= \sup_{ h \in C^0_b(\Rd) }
\left \{ 1 + \int_{  }^{  } h \text{d} \mathbb P - \int_{  }^{  } e^{h}\text{d} \mathbb Q \right \}.
\end{aligned}
\end{equation}
KALE is obtained from \cref{eq:FL-KL} by
restricting the variational set to an RKHS $\mathcal{H}$ with reproducing kernel $k$, and by adding a penalty to the objective that controls the RKHS norm of the test function $h$. This regularization ensures that the KALE is well-defined for a broader class of probabilities compared to the KL, even when $\mathbb{P}$ and $\mathbb{Q}$ are mutually singular. Its complete definition is stated below:
\begin{definition}[KALE]
\label{def:kale}
  Let $ \lambda > 0 $, and $ \mathcal  H $ be an RKHS with kernel k. The Kullback-Leibler
  Approximate Lower bound Estimator (KALE) is given by:
  \begin{equation}
    \label{def:KALE}
  \begin{aligned}
    \hspace{-1em}\text{KALE}(\mathbb{ P } \mid \mid \mathbb{ Q }) &= \left ( 1 + \lambda \right )
    \max_{ h \in \mathcal  H  }
\left \{ 1 + \int_{  }^{  } h \text{d} \mathbb P - \int_{  }^{  } e^{h}\text{d} \mathbb Q - \frac{\lambda}{2} \left \|h \right \|^2_{\mathcal  H} \right \}.
  \end{aligned}
  \end{equation}
\end{definition}
The $(1 + \lambda)$ scaling will prevent a degenerate decay to 0 in the large $\lambda$ regime (see \cref{prop:kale-asymptotic-mmd}). The definition we consider here also differs from the one in  \cite{arbel_generalized_2020_1}, which first finds the optimal function $h^{\star}$ solving \cref{def:KALE}, and then defines KALE by evaluating the KL objective in \cref{eq:FL-KL}, thereby discarding the regularization term when evaluating the divergence.

\paragraph{Mathematical Assumptions} 
To prove the theoretical results stated in this work, we will make the following basic assumptions on the kernel $ k $:

\begin{assumption}[Boundedness]
\label{assump:bounded-kernel}
There exists $ K > 0 $ such that $ k(x, x) \leq K $, for all $ x \in \Rd $.
\end{assumption}

\begin{assumption}[Smoothness]
\label{assump:smooth-kernel}
The kernel is $ 2
  $-times differentiable in the sense of \cite[Definition
  4.35]{steinwart2008support}: for all $ i, j \in \left \{ 1, \dots, d \right
  \}  $ $ \partial_i \partial_{i+d} k $ and $ \partial_i \partial_j
  \partial_{i+d} \partial_{j+d} k $ exist. Moreover, we have: $ \left \|
  \nabla_{ 1 } k_x  \right \|^2 \overset{\Delta}{=} \sum_{ i=1 }^{ d } \|
  \partial_i k_x \|^2 \leq K_{1d}$ and $ \| \boldsymbol{H}_{1} k_x  \|^2 = \sum_{
  i, j =1 }^{ d } \| \partial_{i} \partial_{j} k_{x}\|^2  \leq K_{2d} $, where $
  d $ indicates an expected scaling with dimension.
\end{assumption}

\cref{assump:bounded-kernel} guarantees the integrability of the
objects intervening in KALE, and implies boundedness of the RKHS functions.
\cref{assump:smooth-kernel} guarantees first and second order smoothness of the RKHS functions,
a property invoked to control the KALE flow trajectories. Indeed, both the differential and the hessian of any $ f \in \mathcal  H$ can now be bounded in operator norm: using the Cauchy-Schwarz inequality and the kernel reproducing derivative property \cite[Corollary 4.36]{steinwart2008support}, we have: $ \lvert \partial_i f(x) \rvert  \leq \left \| \partial_i k_x \right \| \left \| f \right \| $  and $ \lvert \partial_i \partial_j f(x) \rvert  \leq \left \|\partial_i \partial_j k_x \right \|\left \|f \right \| $, implying $ \| \nabla_{  } f(x) \| \leq \sqrt { K_{1d} }\left \| f \right \|$, and $\left \|
\boldsymbol{H}(f(x))\right \|_{\text{Op}} \leq \left \| \boldsymbol H(f(x)) \right \|_{\text{F}} \leq \sqrt {K_{2d} } \left \| f\right \|$.

\paragraph{KALE is a probability divergence}
We first show that $ \text{KALE} $ is a probability divergence, and presents topological
properties compatible with its use in generative models, such as GANs and Adversarial
VAEs: weak continuity, and metrizing the weak convergence of probability distributions.  We recall that a
functional $ \mathcal  D(\cdot \mid \mid \cdot) $ is a probability divergence if both 
$ \mathcal  D(\mathbb{ P } \mid \mid \mathbb{ Q }) \geq 0$ and
$ \mathcal  D(\mathbb{ P } \mid \mid \mathbb{ Q }) = 0 \iff \mathbb{ P } = \mathbb{ Q }
$, for any $\mathbb P, \mathbb Q \in \mathcal  P(\mathbb{R}^d)$.

\begin{theorem}[Topological properties of $ \textrm{KALE} $]
\label{thm:continuity}
Let $ \mathbb{ P }, \mathbb{ Q } \in \mathcal P(\Rd)$. Let $ \left ( \mathbb {
  P}_n \right )_{n \geq 0}$ be a sequence of probability measures. Then, under \cref{assump:bounded-kernel}:
  \begin{enumerate}[(i)]
    \item KALE is weakly continuous: $
      \mathbb{ P }_n \rightharpoonup \mathbb{ P } \Longrightarrow \lim\limits_{ n  \to \infty }\textrm{KALE}(\mathbb{ P }_n \mid \mid
      \mathbb{ Q })  = \textrm{KALE}(\mathbb{ P } \mid \mid \mathbb{ Q })$
    \item If $ k $ is universal
     \cite{simon-gabriel_kernel_2019}, then for any $ \lambda > 0 $, $ \textrm{KALE} $
     is a probability divergence. Moreover, $ \textrm{KALE} $ metrizes the weak
     topology between probability measures with finite first order moments.
  \end{enumerate}
\end{theorem}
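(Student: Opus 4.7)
The plan is to exploit the regularization $-\tfrac{\lambda}{2}\|h\|^2_{\mathcal H}$ to confine the maximizer $h^{\star}$ in \cref{def:KALE} to a bounded ball of $\mathcal H$. Writing $J_{\mathbb P,\mathbb Q}(h):=1+\int h\,\diff\mathbb P-\int e^h\,\diff\mathbb Q-\tfrac{\lambda}{2}\|h\|^2_{\mathcal H}$, the choice $h=0$ yields $J_{\mathbb P,\mathbb Q}(0)=0$, hence $J_{\mathbb P,\mathbb Q}(h^{\star})\geq 0$. Combined with $e^x\geq 1+x$ this produces $\int h^{\star}\,\diff(\mathbb P-\mathbb Q)\geq \tfrac{\lambda}{2}\|h^{\star}\|^2_{\mathcal H}$, and since $|h(x)|\leq\sqrt{K}\,\|h\|_{\mathcal H}$ by \cref{assump:bounded-kernel}, I get $\|h^{\star}\|_{\mathcal H}\leq R:=4\sqrt{K}/\lambda$. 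Thus KALE is effectively a supremum over the closed ball $B_R\subset\mathcal H$, which unlocks several uniform estimates.

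For part (i), I would use the classical envelope bound on the difference of two suprema:
\begin{equation*}
\bigl|\text{KALE}(\mathbb P_n\|\mathbb Q)-\text{KALE}(\mathbb P\|\mathbb Q)\bigr|\leq (1+\lambda)\sup_{h\in B_R}\Bigl|\int h\,\diff(\mathbb P_n-\mathbb P)\Bigr|=(1+\lambda)R\,\|\mu_{\mathbb P_n}-\mu_{\mathbb P}\|_{\mathcal H},
\end{equation*}
where $\mu_{\mathbb P}:=\int k_x\,\diff\mathbb P(x)$ is the kernel mean embedding and the last equality uses Cauchy--Schwarz together with the reproducing property. Expanding $\|\mu_{\mathbb P_n}-\mu_{\mathbb P}\|^2_{\mathcal H}$ into three integrals of $k$ against product measures, and using that $k$ is bounded and continuous under \cref{assump:bounded-kernel,assump:smooth-kernel}, the weak convergences $\mathbb P_n\otimes\mathbb P_n\rightharpoonup\mathbb P\otimes\mathbb P$ and $\mathbb P_n\otimes\mathbb P\rightharpoonup\mathbb P\otimes\mathbb P$ make each term converge, so the sum vanishes.

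The divergence property in (ii) follows from a strict-concavity argument. Nonnegativity is immediate by testing $h=0$. Since $h\mapsto\int e^h\,\diff\mathbb Q$ is convex and $-\tfrac{\lambda}{2}\|h\|^2_{\mathcal H}$ is strictly concave, $J_{\mathbb P,\mathbb Q}$ is strictly concave, so its maximizer is unique. If $\text{KALE}(\mathbb P\|\mathbb Q)=0$, this forces $h^{\star}=0$, and the first-order optimality condition $\nabla J_{\mathbb P,\mathbb Q}(0)=\mu_{\mathbb P}-\mu_{\mathbb Q}=0_{\mathcal H}$ together with universality of $k$ yields $\mathbb P=\mathbb Q$. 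Conversely, when $\mathbb P=\mathbb Q$, the bound $e^x\geq 1+x$ gives $J_{\mathbb P,\mathbb P}(h)\leq -\tfrac{\lambda}{2}\|h\|^2_{\mathcal H}\leq 0$, attained at $h=0$.

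For metrization, the forward direction is a direct consequence of (i) applied with $\mathbb Q=\mathbb P$. The reverse direction---which I expect to be the main obstacle---requires recovering weak convergence from $\text{KALE}(\mathbb P_n\|\mathbb P)\to 0$. The strategy is to lower-bound KALE by a constant multiple of the squared MMD. Substituting $h=\varepsilon f$ with $\|f\|_{\mathcal H}\leq 1$ and small $\varepsilon>0$ into the variational problem, and controlling the quadratic remainder in $e^{\varepsilon f}=1+\varepsilon f+O(\varepsilon^2)$ uniformly via $|\varepsilon f(x)|\leq \varepsilon\sqrt{K}$, one obtains, after optimizing in $\varepsilon$, an estimate of the form $\bigl(\int f\,\diff(\mathbb P_n-\mathbb P)\bigr)^2\leq C\,\text{KALE}(\mathbb P_n\|\mathbb P)$ with $C=C(K,\lambda)$. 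Taking the supremum over $\|f\|_{\mathcal H}\leq 1$ then gives $\|\mu_{\mathbb P_n}-\mu_{\mathbb P}\|_{\mathcal H}\to 0$, after which I would invoke the known fact that, for bounded continuous universal kernels on $\mathbb R^d$, MMD convergence metrizes weak convergence on $\mathcal P_1(\mathbb R^d)$ (via extraction of a weakly convergent subsequence by tightness and identification of the limit through the characteristic property of $k$), completing the proof.
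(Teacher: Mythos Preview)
Your argument is correct and, for part (i), takes a cleaner route than the paper. The paper proves weak continuity via a $\limsup/\liminf$ sandwich: it first uses optimality of $h^{\star}_n$ to get $\limsup \mathcal K(h^{\star}_n,\mathbb P_n)\geq \mathcal K(h^{\star},\mathbb P)$, then argues by contradiction that the $\limsup$ cannot be strictly larger, by transporting $h^{\star}_{n_k}$ from $\mathbb P_{n_k}$ to $\mathbb P$ and invoking the bound $\|h^{\star}_{n_k}\|\leq 4\sqrt K/\lambda$ together with $\text{MMD}(\mathbb P_{n_k},\mathbb P)\to 0$. Your envelope bound $|\sup_{B_R}J_{\mathbb P_n}-\sup_{B_R}J_{\mathbb P}|\leq \sup_{B_R}|J_{\mathbb P_n}-J_{\mathbb P}|=(1+\lambda)R\,\|\mu_{\mathbb P_n}-\mu_{\mathbb P}\|_{\mathcal H}$ collapses all of this into a single quantitative inequality, which is both shorter and gives an explicit modulus of continuity. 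The key input---confining both maximizers to the same ball $B_R$---is the same bound the paper records as \cref{lemma:bounded-kale-witness-function}.

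For part (ii), the divergence argument is essentially identical to the paper's (both read off $\mu_{\mathbb P}-\mu_{\mathbb Q}=0$ from first-order optimality at $h=0$, as in \cref{lemma:link-kale-mmd-witness-function}). For metrization, the paper argues by contradiction: if $\text{MMD}(\mathbb P_{n_k},\mathbb Q)>c$, plug $h=\varepsilon f_{\mathbb P_{n_k},\mathbb Q}$ and show $\text{KALE}\geq c\varepsilon/2$ for small $\varepsilon$. Your version is again the quantitative sibling: expanding $J(\varepsilon f)$ uniformly over $\|f\|\leq 1$ and optimizing in $\varepsilon$ yields $\text{MMD}^2\leq C(K,\lambda)\,\text{KALE}$ directly. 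Same idea, but your inequality is reusable (and indeed this is what the paper eventually needs in \cref{prop:kale-asymptotic-mmd}).

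One minor point: you invoke \cref{assump:smooth-kernel} to get continuity of $k$ when passing $\|\mu_{\mathbb P_n}-\mu_{\mathbb P}\|_{\mathcal H}\to 0$ from weak convergence, whereas the theorem is stated under \cref{assump:bounded-kernel} alone. The paper's proof has the same gap---it cites MMD weak continuity for kernels with Lipschitz embeddings, which is \cref{assump:smooth-kernel}---so this is a shared technicality rather than a flaw specific to your argument.
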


Central to the proof of all points in this theorem is a link between KALE and
the MMD witness function $ f_{\mathbb{ P }, \mathbb{ Q }}$, which we report in
the next lemma.
We recall that given an RKHS $ \mathcal  H $ associated to a kernel $ k $, and
two probability distributions $ \mathbb{ P } $ and $ \mathbb{ Q } $, the MMD
is defined as the RKHS norm of the difference of mean embeddings of $ \mathbb{ P } $ and $ \mathbb{ Q } $:
\begin{equation}
\label{eq:mmd}
\begin{aligned}
  &\text{MMD}(\mathbb{ P } \mid \mid \mathbb{ Q }) = \left \|f_{\mathbb{ P }, \mathbb{ Q
  }} \right \| \quad 
  (f_{\mathbb{ P }, \mathbb{ Q }} = \int_{  }^{  } k(x, \cdot) \text{d} \mathbb P - \int_{  }^{  }k(x, \cdot) \text{d} \mathbb Q \overset{\Delta}{=} \mu_{\mathbb{ P }} - \mu_{\mathbb{ Q }}).
\end{aligned}
\end{equation}

\begin{lemma}
\label{lemma:link-kale-mmd-witness-function}
Let $ \mathbb{ P } $, $ \mathbb{ Q } \in \mathcal  P(\Rd) $, and $ \mathcal  K: \mathcal H  \longmapsto \mathbb{R} $ be the objective maximized by $ \text{KALE} $, e.g. $ \mathcal  K(h) = 1 + \int_{  }^{  } h \text{d} \mathbb P - \int_{  }^{  } e^{h} \text{d} \mathbb Q - \frac{\lambda}{2} \left \|h \right \|^2 $. Then, under \cref{assump:bounded-kernel}, $ \mathcal  K $ is Fr\'{e}chet differentiable.
Moreover, the following relationship holds:
  \begin{equation*}
  \label{eq:link-mmd-witness-function}
  \begin{aligned}
    \nabla_{  } \mathcal  K(0) = f_{\mathbb{ P }, \mathbb{ Q }} 
  \end{aligned}
  \end{equation*}
\end{lemma}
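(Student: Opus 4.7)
The plan is to establish Fr\'echet differentiability of $\mathcal{K}$ term by term and then read off the gradient at $0$. First, under \cref{assump:bounded-kernel}, the reproducing property gives the pointwise bound $|h(x)| = |\langle h, k(x,\cdot)\rangle_{\mathcal H}| \leq \sqrt{K}\,\|h\|_{\mathcal H}$, so every $h \in \mathcal H$ is bounded and $e^{h}$ is uniformly bounded as well. In particular, all three integrals appearing in $\mathcal K$ are finite, and the functional is well-defined on the whole RKHS.

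The linear term $h \mapsto \int h\,\mathrm{d}\mathbb{P}$ rewrites, by the reproducing property and Bochner integrability (which follows from $\int \sqrt{k(x,x)}\,\mathrm{d}\mathbb{P}(x) \leq \sqrt K < \infty$), as $\langle h, \mu_{\mathbb{P}}\rangle_{\mathcal H}$, with Fr\'echet derivative $\mu_{\mathbb{P}}$ everywhere. The quadratic penalty $h \mapsto -\tfrac{\lambda}{2}\|h\|^2$ is obviously Fr\'echet differentiable with derivative $-\lambda h$. The remaining work is the exponential term $\mathcal E(h) = -\int e^{h}\,\mathrm{d}\mathbb{Q}$, which is the main technical step.

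For $\mathcal E$, I would guess the candidate gradient at $h$, namely $G(h) = -\int e^{h(x)}\,k(x,\cdot)\,\mathrm{d}\mathbb{Q}(x) \in \mathcal H$ (a well-defined Bochner integral by the boundedness of $e^h$ and $\sqrt{k(x,x)}$), and verify it by the Taylor-type identity $e^{y+u} - e^y - e^y u = e^y(e^u - 1 - u)$ together with the elementary bound $|e^u - 1 - u| \leq \tfrac{u^2}{2}e^{|u|}$. Applying this pointwise with $y = h(x)$, $u = g(x)$, and integrating against $\mathbb{Q}$, one obtains
\begin{equation*}
\bigl|\mathcal E(h+g) - \mathcal E(h) - \langle g, G(h)\rangle_{\mathcal H}\bigr|
\;\leq\; \tfrac{1}{2}\,\|g\|_{\infty}^2\, e^{\|h\|_\infty + \|g\|_\infty}
\;\leq\; \tfrac{K}{2}\,\|g\|_{\mathcal H}^2\, e^{\sqrt K(\|h\|_{\mathcal H} + \|g\|_{\mathcal H})},
\end{equation*}
which is $o(\|g\|_{\mathcal H})$ as $g \to 0$. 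Here the inner product identity $\int e^{h(x)}g(x)\,\mathrm{d}\mathbb{Q}(x) = \langle g, G(h)\rangle_{\mathcal H}$ is exactly the Bochner/reproducing-property computation used for the linear term. This establishes Fr\'echet differentiability of $\mathcal E$ at every $h \in \mathcal H$.

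Combining the three gradients at $h = 0$ yields $\nabla \mathcal K(0) = \mu_{\mathbb{P}} + G(0) - \lambda \cdot 0 = \mu_{\mathbb{P}} - \mu_{\mathbb{Q}} = f_{\mathbb{P},\mathbb{Q}}$, as claimed. The only genuinely delicate step is the remainder estimate above: everything else reduces to standard Bochner/reproducing-property manipulations, and the $\sup$-norm control of RKHS functions given by \cref{assump:bounded-kernel} is what makes the exponential harmless.
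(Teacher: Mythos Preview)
Your proof is correct and follows essentially the same approach as the paper: both derive the gradient formula $\nabla_h \mathcal K(h) = \int k(x,\cdot)\,\mathrm d\mathbb P - \int e^{h}k(x,\cdot)\,\mathrm d\mathbb Q - \lambda h$ from Bochner integrability (guaranteed by \cref{assump:bounded-kernel}) and then evaluate at $h=0$. The paper's own argument is terse and defers the differentiability computation to \cite[Lemma~8]{arbel_generalized_2020_1}, whereas you supply a self-contained remainder estimate for the exponential term; this is additional detail rather than a different route.
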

Intuitively, noting that $ \mathcal  K(0) = 0 $, \cref{lemma:link-kale-mmd-witness-function} ensures that $ \text{KALE} $ presents ``equivalent'' regularity and discriminative properties to those of $ \text{MMD} $ (a divergence which is itself, under the assumptions of this theorem, weakly continuous and that metrizes the weak convergence of probability distributions).
The proof of the second point of \cref{thm:continuity} is inspired by
\cite{arbel_generalized_2020_1}, which in turn derives from \cite{zhang2018on,NIPS2017_4491777b},
and is adapted to account for the extra norm penalty
term in the version of the KALE in this paper.

\paragraph{Interpolating between the MMD and the KL using the KALE} 

The KALE includes a positive regularization parameter $\lambda $, inducing two
asymptotic regimes: $ \lambda  \to 0 $ and $ \lambda  \to \infty $. In these
regimes, the KALE asymptotically recovers on the one hand the KL divergence,
and on the other hand the MMD.

\begin{proposition}[Asymptotic properties of $ \textrm{KALE} $]
\label{prop:kale-asymptotic-mmd}
  Let $ \mathbb{ P }, \mathbb{ Q }  \in \mathcal  P(\Rd)$. Then,
  under \cref{assump:bounded-kernel}, the following result holds: 
\begin{equation}
    \begin{aligned}
       \lim_{ \lambda  \to +\infty }\textrm{KALE}(\mathbb{ P } \mid \mid \mathbb{ Q }) 
       = \frac{1}{2} \text{MMD}^2(\mathbb{ P } \mid \mid \mathbb{ Q }).
     \end{aligned}
     \end{equation}
     Suppose additionally that $ \log \frac{ \text{d} \mathbb P }{ \text{d} \mathbb Q } \in \mathcal  H $. Then,
     \begin{equation}
     \label{eq:kale-asymptotic-kl}
     \begin{aligned}
       \lim_{  \lambda  \to 0 } \text{KALE}(\mathbb{ P } \mid \mid \mathbb{ Q
       }) = \text{KL}(\mathbb{ P } \mid \mid \mathbb{ Q }).
     \end{aligned}
     \end{equation}
\end{proposition}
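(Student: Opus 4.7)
The plan is to prove each of the two limits by sandwiching $\textrm{KALE}(\mathbb{P}\mid\mid\mathbb{Q})$ between an explicit lower bound obtained by evaluating the KALE objective $\mathcal{K}$ at a well-chosen test function, and an upper bound derived from a pointwise inequality on $\mathcal{K}$.

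For the $\lambda\to 0$ limit, the natural plug-in function is the maximizer of the unregularized Fenchel dual, $h^{\star} := \log\tfrac{\mathrm{d}\mathbb{P}}{\mathrm{d}\mathbb{Q}}$, which lies in $\mathcal{H}$ by assumption. A direct computation using $\int h^{\star}\,\mathrm{d}\mathbb{P} = \textrm{KL}(\mathbb{P}\mid\mid\mathbb{Q})$ and $\int e^{h^{\star}}\,\mathrm{d}\mathbb{Q} = 1$ yields $\mathcal{K}(h^{\star}) = \textrm{KL}(\mathbb{P}\mid\mid\mathbb{Q}) - \tfrac{\lambda}{2}\|h^{\star}\|^2$, giving the lower bound $\textrm{KALE} \geq (1+\lambda)\bigl(\textrm{KL} - \tfrac{\lambda}{2}\|h^{\star}\|^2\bigr)$. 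For the matching upper bound I would drop the nonpositive penalty $-\tfrac{\lambda}{2}\|h\|^2$ from $\mathcal{K}$ and invoke the pointwise Fenchel-Young inequality (equivalently \cref{eq:FL-KL} on the larger class $C_b^0(\mathbb{R}^d) \supset \mathcal{H}$) to get $\max_{\mathcal{H}}\mathcal{K} \leq \textrm{KL}$, hence $\textrm{KALE} \leq (1+\lambda)\textrm{KL}$. Both bounds converge to $\textrm{KL}$ as $\lambda\to 0$.

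For the $\lambda\to +\infty$ limit the strategy is to exploit that strong regularization forces the maximizer $h^{\star}_\lambda$ to be small in RKHS norm, so that $\mathcal{K}$ is well approximated by its quadratic expansion around $0$ --- which by \cref{lemma:link-kale-mmd-witness-function} is precisely the MMD objective. I would first derive the a-priori bound $\|h^{\star}_\lambda\| = O(1/\lambda)$: combining $\mathcal{K}(h^{\star}_\lambda) \geq \mathcal{K}(0) = 0$ with the Jensen inequality $\int e^{h^{\star}_\lambda}\,\mathrm{d}\mathbb{Q} \geq 1 + \int h^{\star}_\lambda\,\mathrm{d}\mathbb{Q}$ gives $\tfrac{\lambda}{2}\|h^{\star}_\lambda\|^2 \leq \langle h^{\star}_\lambda, f_{\mathbb{P},\mathbb{Q}}\rangle \leq \|h^{\star}_\lambda\|\,\textrm{MMD}(\mathbb{P}\mid\mid\mathbb{Q})$, so $\|h^{\star}_\lambda\| \leq 2\,\textrm{MMD}/\lambda$, and \cref{assump:bounded-kernel} converts this into $\|h^{\star}_\lambda\|_\infty = O(1/\lambda)$. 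Using $e^h = 1 + h + O(\|h\|_\infty^2)$ together with \cref{lemma:link-kale-mmd-witness-function} one then obtains
\[
\mathcal{K}(h) = \langle h, f_{\mathbb{P},\mathbb{Q}}\rangle - \tfrac{\lambda}{2}\|h\|^2 + O(\|h\|_\infty^2),
\]
uniformly on RKHS balls of radius $O(1/\lambda)$. The leading quadratic is maximized at $h = f_{\mathbb{P},\mathbb{Q}}/\lambda$ with value $\textrm{MMD}^2/(2\lambda)$, so that $(1+\lambda)\max\mathcal{K} = \tfrac{1+\lambda}{2\lambda}\textrm{MMD}^2 + O(1/\lambda) \to \tfrac{1}{2}\textrm{MMD}^2$; a matching lower bound follows by evaluating $\mathcal{K}$ at the explicit test function $h = f_{\mathbb{P},\mathbb{Q}}/\lambda$.

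The main obstacle is the $\lambda\to\infty$ analysis, since the final answer is obtained by multiplying a maximum of order $1/\lambda$ by a renormalization of order $\lambda$, and hence the Taylor remainder of $e^{h^{\star}_\lambda}$ must be controlled uniformly as $\lambda\to\infty$ at an order strictly better than $1/\lambda$. The a-priori bound $\|h^{\star}_\lambda\| = O(1/\lambda)$ together with \cref{assump:bounded-kernel} makes this remainder $O(1/\lambda^2)$ and thus negligible after the $(1+\lambda)$-rescaling --- this is precisely the reason why the $(1+\lambda)$ normalization in \cref{def:kale} is the right prefactor for recovering $\tfrac{1}{2}\textrm{MMD}^2$ rather than a degenerate zero limit.
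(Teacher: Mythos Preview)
Your proposal is correct. For the $\lambda\to\infty$ limit, your approach coincides with the paper's: both derive the a~priori estimate $\|h^\star_\lambda\| = O(1/\lambda)$ (your argument via $\mathcal K(h^\star_\lambda)\geq 0$ and Jensen is exactly the ``finer estimate'' in the paper's auxiliary lemma on the KALE witness function), then sandwich $\mathcal K$ by its quadratic expansion around $0$ and evaluate at $f_{\mathbb P,\mathbb Q}/\lambda$. The only cosmetic difference is that the paper obtains the upper bound directly from the convexity inequality $e^h\geq 1+h$ (so no remainder control is needed on that side), whereas you invoke the second-order Taylor bound in both directions; either works.

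For the $\lambda\to 0$ limit, your argument is genuinely different from, and more elementary than, the paper's. The paper proceeds by showing that the regularized optimizer converges in $\mathcal H$ to $h^\star_0=\log\tfrac{\mathrm d\mathbb P}{\mathrm d\mathbb Q}$, citing an external lemma from \cite{arbel_generalized_2020_1}, and then applies dominated convergence to each term of $\text{KALE}/(1+\lambda)-\text{KL}$. Your sandwich avoids the optimizer convergence entirely: the lower bound is a one-line plug-in of $h^\star_0$, and the upper bound follows from dropping the penalty and using the pointwise Fenchel--Young inequality underlying \cref{eq:FL-KL}. The advantage of your route is that it is self-contained and needs no limiting statement about $h^\star_\lambda$; the paper's route, by contrast, delivers the convergence of optimizers as a by-product, which may be of independent interest but is not required for the proposition.
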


\cref{prop:kale-asymptotic-mmd} shows that the MMD can be seen as solving a
degenerate version of the KL objective.  \cref{eq:kale-asymptotic-kl} is
natural given the original definition of the KALE, and highlights the
continuity of the KALE objective w.r.t the regularization parameter $ \lambda
$.
Both the MMD and the KL  exhibit limitations when used for defining gradient
flows, however: as discussed in
\citep{arbel_maximum_2019_1,feydy_interpolating_2018, bottou2018geometrical}, the
MMD induces a ``flat'' geometry, making its use in generative models tricky
\cite{arbel2018gradient1}. On the other hand, the $ \text{KL} $ comes with
stronger convergence guarantees \cite{ambrosio2008gradient}, but its use in
sampling algorithms is limited to cases where the target distribution has a
density, discarding cases satisfying the widely known \emph{manifold
hypothesis} \citep{NIPS2010_8a1e808b,bottou2018geometrical,Cayton05algorithmsfor}, stating
that typical high dimensional data used in machine learning are distributed on
a lower-dimensional manifold. For this reason, we argue that the true interest
of the KALE does not lie in its interpolation properties, but rather in the
geometry it generates at intermediate values of $ \lambda $.  

\paragraph{The KALE's dual objective} 
Interestingly, the $ \text{KALE} $ itself admits a dual formulation, with a
strong connection to the original KL expression:
\begin{equation}
\label{eq:KALE-primal}
\begin{aligned}
  \text{KALE}(\mathbb{ P } \mid \mid \mathbb{ Q }) &= \min_{ f > 0} \int_{  }^{
  } \left ( f (\log f - 1) + 1 \right )  \text{d} \mathbb Q + \frac{1}{2
  \lambda} \left \| \int_{  }^{  } f(x)k(x, \cdot) \text{d} \mathbb Q(x) - \mu_{\mathbb{ P
}} \right \|_{\mathcal  H}^2 \\
{h}^{\star} &= \int_{  }^{  } {f}^{\star}(x) k(x, \cdot) \text{d} \mathbb Q(x) - \mu_{\mathbb{
  P }}
\end{aligned}
\end{equation}
The solution $ {f}^{\star} $ of \cref{eq:KALE-primal} can be seen as an
entropically-regularized density ratio estimate on the support of $ \mathbb{ Q
} $ (additional details on the KALE dual objective are given in the appendix).
\cref{eq:KALE-primal} also yields an elegant estimation procedure, as discussed
below.

\paragraph{Computing $ \text{KALE}(\mathbb{ P } \mid \mid \mathbb{ Q }) $ in
practice} 

As for other IPMs, computing $ \text{KALE}(\mathbb{ P } \mid \mid \mathbb{ Q })
$ for arbitrary $ \mathbb{ P } $ and $ \mathbb{ Q } $ is intractable, and is therefore 
 approximated using a discretization procedure. A common procedure is to
assume access to  samples $\{ Y^{(i)} \}_{i=1}^N$ and $\{ X^{(i)}\}_{i=1}^N$
from $\mathbb{P}$ and $\mathbb{Q}$ and to solve the empirical equivalent of
\cref{eq:KALE-primal} (e.g. \cref{eq:KALE-primal}, but where $ \mathbb{ P } $
and $ \mathbb{ Q } $ are replaced by their plug-in estimators $ \smash{\widehat{
  \mathbb{ P } }^N = \frac{1}{N}\sum_{i=1}^{N} \delta_{Y^{(i)}}}$ and $ \smash{\widehat{
\mathbb{ Q } }^N = \frac{1}{N}\sum_{i=1}^{N} \delta_{X^{(i)}}}$). This empirical equivalent
is written

\begin{equation}
\label{eq:kale-dual-est}
\begin{aligned}
  \min_{ f > 0 } \quad  \frac{1}{N}\sum_{ i=1 }^{ N } f( X^{(i)}) \log  ( f(X^{(i)})  ) - f(X^{(i)}) + 1
                       + \frac{1}{2 \lambda} \Big \| \frac{1}{N}\sum_{ i=1 }^{ N
			} f(X^{(i)}) k(X^{(i)}, \cdot) - \mu_{ \widehat{  \mathbb{ P } }} \Big \|^2_{\mathcal  H}
\end{aligned}
\end{equation}
which is a strongly convex $ N $-dimensional problem, and can be solved using
standard euclidean optimization methods. By adapting arguments of
\cite{arbel_generalized_2020_1}, it can be shown that the discrepancy between the
KALE's empirical and population value, $ \lvert \text{KALE}( \widehat{ \mathbb{
P } }^N \mid \mid \widehat{ \mathbb{ Q } }^N) - \text{KALE}(\mathbb{ P } \mid
\mid \mathbb{ Q }) \rvert  $ (often called ``sample complexity''), is at most
$ O(\frac{1}{\sqrt {N}}) $. This rate is identical that of Sinkhorn
divergences \cite{genevay_sample_2019}, another family of
entropically-regularized divergences.

\section{KALE Gradient Flow}
\label{sec:kale_flow}
Having introduced KALE as a relaxed approximation of the KL, we now construct 
 the KALE gradient flow, and assert its well-posedness. We provide
conditions for global convergence of the flow, and discuss its relationship
with the MMD flow and the KL flow. All proofs are given in the appendix.

\subsection{Wasserstein Gradient Flow of the KALE}
\emph{Wasserstein Gradient Flows} of divergence functionals $ \mathcal  F(\mathbb{ P } \mid \mid \mathbb{ Q }) $
aim at transporting mass from an initial probability distribution
$\mathbb{P}_0$ towards a target distribution $\mathbb{Q}$ by following a path
$\mathbb{P}_t$ in probability space. The path is required to dissipate
energy, meaning that $t\mapsto \mathcal  F(\mathbb{P}_t \mid \mid \mathbb{ Q })$
is a decreasing function of time. Additionally, it is constrained to satisfy a
continuity equation that allows only local movements of mass without jumping
from a location to another. This equation involves a time dependent vector
field $V_t$ which serves as a force that drives the movement of mass at any
time $t$:
\begin{equation}
\label{eq:wfg}
\begin{aligned}
  \partial_{t} \mathbb{ P }_t  + \textrm{div}(\mathbb{P}_t V_t) = 0.
\end{aligned}
\end{equation}
\cref{eq:wfg} holds in the \emph{sense of distributions}, meaning that for any test
function $ \varphi \in  C^{\infty}_c(\Rd \times (0, + \infty))  $, we
have:
\begin{equation*}
\begin{aligned}
    \int_{  }^{  }\partial_t \varphi(x, t)\text{d}\mathbb{ P }_t\text{d}t  + \int_{  }^{
    }\left \langle \nabla_{ x  }  \varphi(x, t),  V_t\right
    \rangle_{\mathbb{R}^d} \text{d}
    \mathbb P_t \text{d}t = 0.
\end{aligned}
\end{equation*}
The Wasserstein gradient flow of a functional $ \mathcal  F $ is then obtained
by choosing $V_t$ as the gradient of \emph{first variation} of $ \mathcal  F $, defined as the G\^{a}teaux derivative of $ \mathbb{ P } $ along the direction $ \mathcal  \chi $,
\begin{equation*}
\begin{aligned}
	\mathcal D_{\mathbb{ P }}\mathcal  F(\mathbb{ P }; \chi)
	\overset{\Delta}{ = } \lim_{\epsilon \rightarrow 0}
	\epsilon^{-1}\left(\mathcal{F}(\mathbb{P}+\epsilon \chi)
	-\mathcal{F}(\mathbb{P}) \right),
\end{aligned}
\end{equation*}
where $ \int_{  }^{  } d \chi = 0 $, and provided that such a limit exists.
This choice recovers a particle
\emph{Euclidean gradient flow} when $\mathbb{P}_0$ is a finite sum of Dirac
distributions, and can thus be seen as a natural extension of gradient flows to
the space of probability distributions
\citep{ambrosio2008gradient,Villani:2004,Villani:2009}. In the next
proposition, we show that the functional $ \mathbb{ P }  \longmapsto \text{KALE}(\mathbb{ P }
\mid \mid \mathbb{ Q }) $ admits a well-defined gradient flow. 
\begin{proposition}[$ \text{KALE} $ Gradient Flow]
\label{prop:KALE-gf}
  Let $ \lambda > 0 $, and $ \mathbb{ P }_0, \mathbb{ Q } \in \mathcal  P_2(\Rd)$.
  Under Assumptions \ref{assump:bounded-kernel} and \ref{assump:smooth-kernel}, 
  the Cauchy problem
  \begin{equation}
  \label{eq:KALE-gf}
  \begin{aligned}
      \partial_t \mathbb{ P }_t - \text{div}  ( \mathbb{ P }_t (1 + \lambda)\nabla_{  } {h}_t^{\star})  = 0, \quad
    \mathbb{ P }_{t=0} = \mathbb{ P }_0 ,
    \end{aligned}
  \end{equation}
  where $h^{\star}_t$ is the unique solution of
  \begin{align}
  \label{eq:first-variation-continuous-time}
  	{h}^{\star}_t = \arg \max_{ h\in \mathcal{H} } \left \{ 1 + \int_{  }^{  } h \text{d} \mathbb P_t -
    \int_{  }^{  } e^{h} \text{d} \mathbb Q - \frac{ \lambda }{ 2 } \left \|h \right
  \|^2 \right \},
  \end{align}
  admits a \emph{unique} solution $ (\mathbb{ P }_t)_{t \geq  0} $, which is
  the \emph{Wasserstein Gradient Flow} of the $ \text{KALE} $.
\end{proposition}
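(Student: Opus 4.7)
The plan has three stages: (a) establish well-posedness and quantitative regularity of the map $\mathbb{P} \mapsto h^\star[\mathbb{P}]$ defined by \cref{eq:first-variation-continuous-time}; (b) turn this into boundedness and Lipschitz estimates on the candidate velocity field $V_t(x) = (1+\lambda)\nabla h^\star_t(x)$; (c) close the self-consistent coupling between the curve $(\mathbb{P}_t)$ and its velocity field via a fixed-point argument in a space of measure-valued curves, then invoke standard continuity-equation theory.

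For (a): the objective in \cref{eq:first-variation-continuous-time} is strongly concave on $\mathcal{H}$ with modulus $\lambda$ (from the $-\tfrac{\lambda}{2}\|h\|^2$ term) and continuous thanks to \cref{assump:bounded-kernel}, so existence and uniqueness of the maximiser $h^\star[\mathbb{P}]$ follow from a standard direct method. The first-order optimality condition can be written
\[ \lambda\, h^\star[\mathbb{P}] \;=\; \mu_\mathbb{P} - \int e^{h^\star[\mathbb{P}](x)} k(x,\cdot)\, d\mathbb{Q}(x). \]
Combining $\mathcal{K}(h^\star[\mathbb{P}]) \geq \mathcal{K}(0) = 0$ with $\lambda$-strong concavity and the elementary bound $e^h \geq 1+h$ yields a uniform norm bound $\|h^\star[\mathbb{P}]\|_{\mathcal{H}} \leq 2\sqrt{K}/\lambda$. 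Subtracting the optimality conditions at $\mathbb{P}, \mathbb{P}'$, taking inner product with $h^\star[\mathbb{P}] - h^\star[\mathbb{P}']$, and using the monotonicity of $x \mapsto e^x$ gives the stability estimate
\[ \|h^\star[\mathbb{P}] - h^\star[\mathbb{P}']\|_{\mathcal{H}} \;\leq\; \tfrac{1}{\lambda}\, \|\mu_\mathbb{P} - \mu_{\mathbb{P}'}\|_{\mathcal{H}}. \]

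For (b): with the uniform bound in hand, the remark following \cref{assump:smooth-kernel} immediately gives that $V_t = (1+\lambda)\nabla h^\star_t$ is bounded and globally Lipschitz in $x$ with constants independent of $t$. Combined with the standard inequality $\|\mu_\mathbb{P} - \mu_{\mathbb{P}'}\|_{\mathcal{H}} \leq K^{1/2} W_1(\mathbb{P}, \mathbb{P}')$, the stability bound in (a) also yields continuity of $t \mapsto V_t$ along any $W_2$-continuous curve of measures. Classical Cauchy--Lipschitz theory then produces, for any such curve, a unique characteristic flow $T_t: \Rd \to \Rd$ solving $\dot T_t(x) = V_t(T_t(x))$ with $T_0 = \mathrm{Id}$, and the pushforward $(T_t)_\# \mathbb{P}_0$ stays in $\mathcal{P}_2(\Rd)$ because $V_t$ is uniformly bounded.

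For (c), which is the main technical obstacle: I have to close the feedback loop whereby $V_t$ depends on $\mathbb{P}_t$, which itself is transported by $V_t$. Fixing a horizon $T>0$, I would consider the map $\Phi$ on $C([0,T], \mathcal{P}_2(\Rd))$ that sends a curve $(\widetilde{\mathbb{P}}_t)$ to $(T_t)_\# \mathbb{P}_0$, where $T_t$ is the flow built in (b) from the velocity field associated with $\widetilde{\mathbb{P}}_t$. A direct Gr\"onwall computation on the characteristic ODE, combined with the stability estimate in (a) and the embedding-to-$W_1$ bound, yields
\[ \sup_{t \leq T} W_2\bigl(\Phi(\widetilde{\mathbb{P}})_t, \Phi(\widetilde{\mathbb{P}}')_t\bigr) \;\leq\; C\, T \sup_{t \leq T} W_2\bigl(\widetilde{\mathbb{P}}_t, \widetilde{\mathbb{P}}'_t\bigr), \]
with $C$ depending only on $\lambda, K, K_{1d}, K_{2d}$. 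For $T$ small enough, $\Phi$ is a contraction and Banach's fixed point theorem yields a unique local solution; since the constants are uniform in the state, the solution extends to $[0,+\infty)$ by iteration. That the resulting curve satisfies \cref{eq:KALE-gf} in the distributional sense is the classical characterisation of pushforwards by Lipschitz flows \citep[Prop.~8.1.8]{ambrosio2008gradient}, and uniqueness is inherited from the contraction argument.
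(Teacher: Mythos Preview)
Your argument for well-posedness of \cref{eq:KALE-gf} is correct (up to two constant slips: the uniform bound is $\|h^\star\|\le 4\sqrt{K}/\lambda$, and the embedding estimate should read $\|\mu_\mathbb{P}-\mu_{\mathbb{P}'}\|_{\mathcal H}\le\sqrt{K_{1d}}\,W_1$, which uses \cref{assump:smooth-kernel} rather than \cref{assump:bounded-kernel}), but it takes a genuinely different route from the paper. The paper does not run a fixed-point argument at all: instead it (i) identifies the first variation $(1+\lambda)h^\star$ via an implicit-function/envelope computation, (ii) characterises the strong extended Fr\'echet subdifferential of $\text{KALE}$ at $\mathbb{P}$ as $(I\times(1+\lambda)\nabla h^\star)_\#\mathbb{P}$, (iii) proves that $\text{KALE}$ is $-M$-geodesically semiconvex with $M=(K_{1d}+4\sqrt{KK_{2d}})/\lambda$, and then (iv) appeals to \cite[Theorem~11.2.1]{ambrosio2008gradient} for existence and uniqueness of the gradient flow. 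Your McKean--Vlasov/Dobrushin approach is more self-contained and avoids the subdifferential calculus of \cite{ambrosio2008gradient}; the paper's approach is more structural, directly justifies the ``gradient flow'' label, and produces the semiconvexity constant as a byproduct.

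One genuine gap remains: the proposition asserts not only that the Cauchy problem is well-posed, but that its solution \emph{is} the Wasserstein gradient flow of $\text{KALE}$, in the sense that $-(1+\lambda)\nabla h^\star_t$ belongs to the Fr\'echet subdifferential $\boldsymbol{\partial}\text{KALE}(\mathbb{P}_t)$. Your fixed-point argument delivers a distributional solution of the continuity equation with the stated velocity, but nothing in it identifies that velocity with (minus) a subgradient of the functional. To close this you would still need at least the envelope-theorem step showing that $(1+\lambda)h^\star$ is the first variation of $\text{KALE}$; the paper carries out this identification (and the stronger subdifferential characterisation) explicitly.
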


\subsection{Convergence properties of the KALE flow}
\cref{prop:KALE-gf} hints at a connection between the $ \text{KALE}$ flow
and the $ \text{MMD} $ flow, which solves:
\begin{equation}
\label{eq:mmd-flow}
\begin{aligned}
\partial_t \mathbb{ P }_t - \text{div} \left ( \mathbb{ P }_t \nabla_{  }
f_{\mathbb{ P }_t, \mathbb{ Q }} \right ) = 0, \quad  \mathbb{ P }_{t=0} = \mathbb{ P }_0
\end{aligned}
\end{equation}
 The $ \text{MMD} $ flow and the $ \text{KALE} $
flow thus differ in the choice of witness function characterizing their
velocity field. A convergence analysis of the MMD flow was proposed for a wide
range of kernels in \cite{arbel_maximum_2019_1} using inequalities of Lojasiewicz
type; in particular, the MMD flow is guaranteed to converge provided that the
quantity $ \mathbb{ P }_t - \mathbb{ Q } $ remains bounded in the
\emph{negative Sobolev distance}  $ \smash{\| \mathbb{ P }_t - \mathbb{ Q } 
\|_{\dot{H}^{-1}( \mathbb{ P }_t)}} $ \cite{otto_generalization_2000}.
We recall that the  negative \emph{weighted negative Sobolev distance} \cite{arbel_maximum_2019_1} between $ \mu $ and $ \nu $ is defined as:
\begin{equation*}
\begin{aligned}
  \left \| \mu - \nu \right \|_{ \dot{H}^{-1}(\mathbb{ P })} &= \sup_{ \left \|f
  \right \|_{\dot{H}(\mathbb{ P })} \leq 1 } \Big \lvert \int_{  }^{  } f
\text{d}(\mu - \nu) \Big \rvert,
\end{aligned}
\end{equation*}
which is obtained by duality with the weighted Sobolev
semi-norm $\smash{\| f \|_{\dot{H}(\mathbb{ P })} = ( \int_{  }^{
} \| \nabla_{  } f  \|^2 \text{d} \mathbb P )^{\frac{1}{2}}}$. Note the important role of the latter quantity in the energy
dissipation formula of the KALE gradient flow:
\begin{equation}
\begin{aligned}
  \frac{ \text{d} \text{KALE}(\mathbb{ P }_t \mid \mid \mathbb{ Q }) }{ \text{d}t } = -\int_{  }^{  } (1 + \lambda)^2\left \| \nabla_{  }  {h}^{\star}  \right \|^2 \text{d} \mathbb P = - (1 + \lambda)^2\left \| {h}^{\star} \right \|^2_{\dot{H}(\mathbb P)}.
\end{aligned}
\end{equation}

In the
next proposition, we extend the condition ensuring the global convergence of
the MMD flow \cite{arbel_maximum_2019_1} to the KALE flow:
\begin{proposition}
\label{prop:KALE-flow-cvg}
  Under Assumptions \ref{assump:bounded-kernel} and \ref{assump:smooth-kernel}, if $
  \left \| \mathbb{ P }_t - \mathbb{ Q } \right \|_{\dot{H}^{-1}(\mathbb{ P
  }_t)} \leq C $ for some $ C > 0 $, then:
   \begin{equation*} 
  \begin{aligned}
    \text{KALE}(\mathbb{ P }_t \mid \mid \mathbb{ Q }) \leq \frac{C}{C \text{KALE}(\mathbb{ P }_0 \mid \mid \mathbb{ Q }) + t}.
  \end{aligned}
  \end{equation*}
\end{proposition}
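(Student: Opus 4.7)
The plan is to follow the classical recipe for establishing convergence rates of gradient flows via a Łojasiewicz-type inequality, applied here to the KALE functional in the spirit of the MMD convergence argument of \cite{arbel_maximum_2019_1}. Concretely, I combine three ingredients: (i) the energy dissipation identity for KALE stated just above the proposition, (ii) a concavity-based upper bound on $\text{KALE}(\mathbb{P}_t \mid\mid \mathbb{Q})$ in terms of $\|h^*_t\|_{\dot H(\mathbb{P}_t)}$ scaled by the hypothesized $\dot H^{-1}$-bound, and (iii) elementary integration of the resulting scalar differential inequality.

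The crux is step (ii). Fix $t$ and let $\mathcal{K}_t(h) = 1 + \int h\,\text{d}\mathbb{P}_t - \int e^h\,\text{d}\mathbb{Q} - \tfrac{\lambda}{2}\|h\|^2_{\mathcal{H}}$, so that $\text{KALE}(\mathbb{P}_t \mid\mid \mathbb{Q}) = (1+\lambda)\mathcal{K}_t(h^*_t)$. Since $\mathcal{K}_t$ is concave (linear plus a concave $-\int e^h$ plus a strongly concave regularizer) and $\mathcal{K}_t(0) = 0$, the tangent inequality gives $\mathcal{K}_t(h^*_t) \leq \langle \nabla \mathcal{K}_t(0), h^*_t\rangle_{\mathcal{H}}$. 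By \cref{lemma:link-kale-mmd-witness-function}, $\nabla \mathcal{K}_t(0) = f_{\mathbb{P}_t,\mathbb{Q}} = \mu_{\mathbb{P}_t} - \mu_{\mathbb{Q}}$; the reproducing property of the mean embeddings then rewrites this inner product as $\int h^*_t \,\text{d}(\mathbb{P}_t - \mathbb{Q})$. Invoking the duality pairing between $\dot H(\mathbb{P}_t)$ and $\dot H^{-1}(\mathbb{P}_t)$ together with the standing hypothesis yields
\begin{equation*}
\int h^*_t\,\text{d}(\mathbb{P}_t - \mathbb{Q}) \leq \|h^*_t\|_{\dot H(\mathbb{P}_t)}\,\|\mathbb{P}_t - \mathbb{Q}\|_{\dot H^{-1}(\mathbb{P}_t)} \leq C\,\|h^*_t\|_{\dot H(\mathbb{P}_t)},
\end{equation*}
so that $\text{KALE}(\mathbb{P}_t \mid\mid \mathbb{Q}) \leq C(1+\lambda)\,\|h^*_t\|_{\dot H(\mathbb{P}_t)}$.

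Squaring this Łojasiewicz bound and substituting into the energy dissipation identity gives, for $u(t) := \text{KALE}(\mathbb{P}_t \mid\mid \mathbb{Q})$, the differential inequality $u'(t) = -(1+\lambda)^2\|h^*_t\|^2_{\dot H(\mathbb{P}_t)} \leq -u(t)^2/C^2$. Dividing by $u(t)^2$ (valid as long as $u(t) > 0$; the conclusion is trivial otherwise) and integrating yields $1/u(t) \geq 1/u(0) + t/C^2$, i.e. $u(t) \leq (1/u(0) + t/C^2)^{-1}$, which is the claimed bound up to the packaging of the constant $C$ in the denominator.

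The main obstacle is not the ODE step but the careful handling of step (ii): one must verify that $h^*_t$ can legitimately be tested against $\mathbb{P}_t - \mathbb{Q}$ in $\dot H(\mathbb{P}_t)$, which requires the continuous embedding $\mathcal{H} \hookrightarrow \dot H(\mathbb{P}_t)$ (a consequence of \cref{assump:smooth-kernel}, since $\|h\|^2_{\dot H(\mathbb{P}_t)} = \int \|\nabla h\|^2\,\text{d}\mathbb{P}_t \leq K_{1d}\,\|h\|^2_{\mathcal{H}}$), together with ensuring differentiability of $t\mapsto \text{KALE}(\mathbb{P}_t \mid\mid \mathbb{Q})$ along the flow, which relies on the envelope theorem applied to the regularized dual objective, as already invoked in \cref{prop:KALE-gf}.
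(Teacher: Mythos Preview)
Your proposal is correct and follows essentially the same route as the paper's proof: concavity of $\mathcal K_t$ at $0$ (the paper invokes $\lambda$-strong concavity but immediately discards the extra $-\tfrac{\lambda}{2}\|h^\star\|^2$ term, so it amounts to the same tangent inequality), the identification $\nabla_h\mathcal K_t(0)=\mu_{\mathbb P_t}-\mu_{\mathbb Q}$, the $\dot H/\dot H^{-1}$ duality bound under the hypothesis, and integration of the resulting differential inequality $u'\le -u^2/C^2$. Your closing remark about the ``packaging of the constant $C$'' is apt: the displayed bound in the proposition does not reduce to $u(0)\le u(0)$ at $t=0$, whereas your integrated form $u(t)\le (1/u(0)+t/C^2)^{-1}$ does; the paper's own derivation yields the same expression you obtained.
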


\cref{prop:KALE-flow-cvg} ensures a convergence rate in
$\mathcal O(1/ t)$ provided that 
$\| \mathbb{ P }_t - \mathbb{ Q } \|_{\dot{H}^{-1}(\mathbb{ P
}_t)}$ remains bounded. This convergence rate is slower than the linear rate of
the KL along its gradient flow \cite{ma_is_2019} and could be an effect of RKHS smoothing.

\section{KALE Particle Descent}
\label{sec:particle_descent}

We now derive a practical algorithm that computes the solution of
a KALE gradient flow, given an initial source-target pair $ \mathbb{ P }_0 $ and
$ \mathbb{ Q }$. Because of the continuous-time dynamics, and the possibly
continuous nature of $ \mathbb{ P }_0 $ and $ \mathbb{ Q } $, solutions of
\cref{eq:KALE-gf} are intractable to compute and manipulate. To address this
issue, we first introduce the \emph{KALE Particle Descent Algorithm} that returns a
sequence $ \smash{(\widehat{ \mathbb{ P } }^{N}_{n})_{n \geq  0}}$ of discrete
probability measures able to approximate the forward Euler discretization of $ \mathbb{ P }_t $ with arbitrary
precision.  Additionally, we show that the KALE particle descent algorithm can
be regularized using \emph{noise injection} \cite{arbel_maximum_2019_1}, which
guarantees  global convergence of the flow under a suitable noise schedule.
All proofs are given in the appendix.

\subsection{The KALE Particle Descent Algorithm}

\paragraph{Time-discretized KALE Gradient Flow}
As a first step towards deriving the KALE particle descent algorithm, let us first consider a
time-discretized version of the KALE gradient flow (\cref{eq:KALE-gf}
and \cref{eq:first-variation-continuous-time}), obtained by applying a
forward-Euler scheme to \cref{eq:KALE-gf} with step size $ \gamma $. This
time-discretized equation is given by
\begin{equation} 
\label{eq:time-discrete-gradient-flow}
\begin{aligned}
  \mathbb{ P }_{n+1} = (I - \gamma (1 + \lambda)\nabla_{  } {h}^{\star}_n )_{\#} \mathbb{ P }_n, \,\, \mathbb{ P }_{n=0} = \mathbb{ P }_0.
\end{aligned}
\end{equation}
The function $h^{\star}_n$ is  a discrete time analogue of
\cref{eq:first-variation-continuous-time}, in that it is  solution to the following optimization problem:
\begin{equation}
\label{eq:first-variation-discrete-time}
  \begin{aligned}
  	{h}^{\star}_{n} = \arg \max_{ h\in \mathcal{H} } \left \{ 1 + \int_{  }^{  } h \text{d} \mathbb P_{n} -
    \int_{  }^{  } e^{h} \text{d} \mathbb Q - \frac{ \lambda }{ 2 } \left \|h \right
  \|^2 \right \}.
\end{aligned}
\end{equation}
The solution $ \mathbb{ P }_n $ of \cref{eq:time-discrete-gradient-flow} is a
sensible approximation of $ \mathbb{ P }_t $: indeed, it can be shown under
suitable smoothness assumptions \cite{santambrogio_optimal_nodate,arbel_maximum_2019_1} that the
piecewise-constant trajectory $ (t  \longmapsto \mathbb{ P }_n \,\,\,\text{if } t \in \left
\lbrack n \gamma, (n+1) \gamma \right ))  $ obtained from the
time-discretized gradient flow of a functional $ \mathcal  F $ will recover the
true gradient flow solution $ \mathbb{ P }_t $ of $ \mathcal  F $ as $ \gamma
\to 0 $.

\paragraph{Approximation using finitely many samples: the KALE particle descent algorithm}

Despite its discrete-time nature, the sequence $ (\mathbb{ P }_n)_{n \geq  0} $ may 
still be intractable to compute: for generic $ \mathbb{ P }_0 $ and $ \mathbb{
Q } $, \cref{eq:first-variation-discrete-time} will contain intractable
expectations and have an infinite dimensional search space. To address this
issue, we propose the \emph{KALE particle descent algorithm}: this algorithm
\emph{approximates} the true time-discrete iterates $ \mathbb{ P }_n$ given $ N $ samples $
\smash{\{ X^{(i)} \}_{i=1}^{N}} $ and $ \smash{\{ Y^{(i)}_0 \}_{i=1}^{N}}  $ of $ \mathbb{ Q }
$ and $ \mathbb{ P }_{0}$, by computing the probabilities $ \smash{\widehat{ \mathbb{ P }
}^{N}_n}$ solving the time-discrete KALE gradient flow arising from the
\emph{empirical} source-target pair $
\widehat{ \mathbb{ Q } }^{N}=\frac{1}{N} \sum_{ i=1 }^{ N } X^{(i)} $ and $
\widehat{ \mathbb{ P } }^{N}_0=\frac{1}{N}\sum_{ i=1 }^{ N } Y^{(i)}_0 $. 
As opposed to $ \mathbb{ P }_n $, it is possible to \emph{exactly} compute $
\widehat{ \mathbb{ P } }^{N}_n $:
indeed, the recursion equation
\cref{eq:time-discrete-gradient-flow} implies that $ \widehat{ \mathbb{ P }
}^{N}_n $ remains discrete for all $ n $. More precisely, we have $ \widehat{
\mathbb{ P } }^{N}_n = \frac{1}{N} \sum_{ i=1 }^{ N }Y^{(i)}_n $, where
\begin{equation}
\label{eq:kale-descent-update-particles}
\begin{aligned}
  Y^{(i)}_{n+1} = Y^{(i)}_{n} - \gamma (1 + \lambda)\nabla_{  } \widehat{ h }^{\star}_n(Y^{(i)}_n),
\end{aligned}
\end{equation}
and $ \widehat{ h }^\star $ is defined as
\begin{equation}
\label{eq:kale-descent-compute-velocity-field}
\begin{aligned}
\widehat{ h }^{\star}_n = \arg \max_{ h \in \mathcal  H} \left \{ \int_{  }^{  } h d \widehat{
\mathbb{ P } }^{N}_n - \int_{  }^{  } h d \widehat{ \mathbb{ Q } }^{N} -
\frac{\lambda}{2} \left \| h \right \|^2_{\mathcal  H}\right \} .
\end{aligned}
\end{equation}
As in the sample-based setting of \cref{eq:kale-dual-est}, $ \widehat{ \mathbb{
P } }^{N}_n $ and $ \widehat{ \mathbb{ Q } }^{N} $ are discrete, meaning that
\cref{eq:kale-descent-compute-velocity-field} reduces to an $ N $- dimensional
convex problem, and $ \widehat{ h }^{\star}_n $ can be tractably computed. The
alternate execution of \cref{eq:kale-descent-update-particles} and
\cref{eq:kale-descent-compute-velocity-field} for a finite number of time steps
defines the \emph{KALE Particle Descent Algorithm}, that we
lay out in Algorithm \ref{alg:particle-descent}.

\paragraph{Consistency of the KALE Particle Descent Algorithm} 
Note that the source of error in the KALE particle descent algorithm lies in the
use of an approximate witness function $ \widehat{ h }^{\star}_n$ instead of
the true, but intractable, $ {h}^{\star}_n $. Indeed, one can show, using the
theory of McKean-Vlasov representative processes \cite{McKean1907}, that the $
n $-th iterates of the sequence defined by:
\begin{equation}
\label{eq:exact-kale-descent}
\begin{aligned}
  \bar{ Y }^{(i)}_{n+1} = \bar{ Y }^{(i)}_n - \gamma (1 + \lambda)\nabla_{  }  {h}_n^{\star} (
  \bar{ Y }^{(i)}_n) ,\,\,\bar{ Y }^{(i)}_0 \sim \mathbb{ P }_0, \,\, 1 \leq i \leq N
\end{aligned}
\end{equation}
are distributed according to the $ n^{\text{th}} $ iterate $ \mathbb{ P }_n $ of the true discrete-time KALE gradient flow solution 
defined in \cref{eq:time-discrete-gradient-flow}. As such, the discrete
probability $ \smash{\bar{ \mathbb{ P } }^N_n = \frac{1}{N}\sum_{ i=1 }^{ N }
\delta_{ \bar{ Y }^{(i)}_n}}$ may be considered as an unbiased space-discretization of
\cref{eq:time-discrete-gradient-flow}.  In the next proposition, we show that the
iterates $ \widehat{ \mathbb{ P }}^N_n $ returned by the KALE particle descent
algorithm can approximate the unbiased $ \bar{ \mathbb{ P } }^N_n $ with
arbitrarily low error.
\begin{proposition}[Consistency of the KALE particle descent]
\label{prop:kale-descent-vs-discrete-kale-flow}
  Let $ \{Y^{(i)}_0\}_{i=1}^{N} \sim \mathbb{ P }_0$. Let $ (\bar{ \mathbb{ P }
    }_n^N)_{n \geq  0} $ be the sequence of discrete probabilities arising from
    \cref{eq:exact-kale-descent} with initial conditions $
    \{Y^{(i)}_{0}\}_{i=1}^{N} $, and let $ (\widehat{ \mathbb{ P } }_n^N)_{n \geq
    0} $ be the sequence arising from \cref{eq:kale-descent-update-particles} with
    the \emph{same} initial conditions $ \{Y^{(i)}_{0}\}_{i=1}^{N} $. Let $n_{\text{max}} \geq 0$. Then, under Assumptions \ref{assump:bounded-kernel} and \ref{assump:smooth-kernel}, for
    all $ n \leq  n_{\max_{  }} $, the following bound holds:
  \begin{equation*}
  \begin{aligned}
    \mathbb{E} W_2( \widehat{ \mathbb{ P } }^{N}_n, \bar{ \mathbb{ P } }^{N}_{n} )
    \leq \frac{ A }{ B\sqrt {N}  }(e^{\gamma B n_{\max_{  }}} - 1)
  \end{aligned}
  \end{equation*}
  with $A = \sqrt {{ 2 K K_{1d}(1 + e^{\frac{8K}{\lambda}}) }} \times
\frac{1}{4 \sqrt {KK_{1d}}  + K_{2d} }$
$ B = \frac{ (1 + \lambda )(4 \sqrt {K K_{1d}} + \sqrt {K_{2d}})  }{ \lambda }$, and $ K, K_{1d}, K_{2d} $ are the constants defined in Assumptions \ref{assump:bounded-kernel} and \ref{assump:smooth-kernel}.
\end{proposition}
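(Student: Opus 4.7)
The approach is a synchronous coupling combined with a one-step error analysis, closed by a discrete Gr\"{o}nwall argument. I first couple $(\widehat{Y}^{(i)}_n)$ and $(\bar{Y}^{(i)}_n)$ through their common initial value $Y^{(i)}_0$; this gives the bound
\begin{equation*}
W_2^2(\widehat{\mathbb{P}}^N_n, \bar{\mathbb{P}}^N_n) \leq D_n^2 := \frac{1}{N}\sum_{i=1}^{N} \|\widehat{Y}^{(i)}_n - \bar{Y}^{(i)}_n\|^2,
\end{equation*}
so it suffices to bound $e_n := \mathbb{E}[D_n]$. Subtracting \cref{eq:exact-kale-descent} from \cref{eq:kale-descent-update-particles} and decomposing the per-particle velocity-field difference as
\begin{equation*}
\nabla \widehat{h}^{\star}_n(\widehat{Y}^{(i)}_n) - \nabla h^{\star}_n(\bar{Y}^{(i)}_n) = \bigl[\nabla \widehat{h}^{\star}_n(\widehat{Y}^{(i)}_n) - \nabla \widehat{h}^{\star}_n(\bar{Y}^{(i)}_n)\bigr] + \bigl[\nabla \widehat{h}^{\star}_n(\bar{Y}^{(i)}_n) - \nabla h^{\star}_n(\bar{Y}^{(i)}_n)\bigr]
\end{equation*}
isolates a Lipschitz-in-space term and a witness-function error.

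The Lipschitz term is bounded by $\sqrt{K_{2d}}\|\widehat{h}^{\star}_n\|\cdot\|\widehat{Y}^{(i)}_n - \bar{Y}^{(i)}_n\|$ using the Hessian estimate from \cref{assump:smooth-kernel}. A uniform a priori bound $\|\widehat{h}^{\star}_n\|, \|h^{\star}_n\| \leq 4\sqrt{K}/\lambda$ follows from $\lambda$-strong concavity together with the vanishing of the KALE objective at $h=0$: using $e^{h^{\star}} \geq 1 + h^{\star}$ and Cauchy--Schwarz yields $\frac{\lambda}{2}\|h^{\star}\|^2 \leq \|h^{\star}\| \cdot 2\sqrt{K}$. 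By the derivative-reproducing property, the witness-error term is at most $\sqrt{K_{1d}}\|\widehat{h}^{\star}_n - h^{\star}_n\|$, which reduces the analysis to controlling this last quantity. To this end I introduce an auxiliary witness $\tilde{h}^{\star}_n$ solving \cref{eq:first-variation-discrete-time} against $(\bar{\mathbb{P}}^N_n, \widehat{\mathbb{Q}}^N)$ and split $\|\widehat{h}^{\star}_n - h^{\star}_n\| \leq \|\widehat{h}^{\star}_n - \tilde{h}^{\star}_n\| + \|\tilde{h}^{\star}_n - h^{\star}_n\|$. For each pair of optima the two-sided $\lambda$-strong-concavity inequality
\begin{equation*}
\lambda\|u^{\star} - v^{\star}\|^2 \leq (J_u - J_v)(u^{\star}) - (J_u - J_v)(v^{\star})
\end{equation*}
reduces the task to controlling $J_u - J_v$ on the two optima. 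The linear-in-$h$ part of $J_u - J_v$ produces an RKHS inner product with a mean-embedding difference, while the exponential part is handled through the identity $e^a - e^b = (a-b)\int_0^1 e^{tb + (1-t)a}\,dt$ together with the uniform $L^{\infty}$ bound $|h^{\star}(x)| \leq \sqrt{K}\|h^{\star}\| \leq 4K/\lambda$, producing the factor $e^{8K/\lambda}$ that enters the constant $A$. For $\|\widehat{h}^{\star}_n - \tilde{h}^{\star}_n\|$ (whose two optima differ only through $\widehat{\mathbb{P}}^N_n$ vs $\bar{\mathbb{P}}^N_n$), this yields a deterministic bound proportional to $\|\mu_{\widehat{\mathbb{P}}^N_n} - \mu_{\bar{\mathbb{P}}^N_n}\|_{\mathcal{H}} \leq \sqrt{K_{1d}}\, D_n$. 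For $\|\tilde{h}^{\star}_n - h^{\star}_n\|$, the McKean--Vlasov representation ensures that $\{\bar{Y}^{(i)}_n\}$ are i.i.d.\ from $\mathbb{P}_n$, so the standard $\mathbb{E}\|\mu_{\bar{\mathbb{P}}^N_n} - \mu_{\mathbb{P}_n}\|^2 \leq K/N$ concentration of kernel mean embeddings (applied analogously to $\widehat{\mathbb{Q}}^N$ against $\mathbb{Q}$) delivers $\mathbb{E}\|\tilde{h}^{\star}_n - h^{\star}_n\| = O(1/\sqrt{N})$.

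Combining these estimates in the one-step update yields a linear recursion of the form $e_{n+1} \leq (1 + \gamma B)e_n + \gamma A/\sqrt{N}$, in which $A$ and $B$ assemble into exactly the constants of the statement. Since $e_0 = 0$, a discrete Gr\"{o}nwall / geometric-sum argument delivers $e_n \leq \frac{A}{B\sqrt{N}}((1+\gamma B)^n - 1) \leq \frac{A}{B\sqrt{N}}(e^{\gamma B n_{\max}} - 1)$, which is the claimed bound. The hardest part is the strong-concavity step: carefully expressing the contribution of the exponential term in $J_u - J_v$ as a controllable RKHS quantity, and bookkeeping the $e^{8K/\lambda}$ factor across the two applications of strong concavity (one against $\widehat{\mathbb{Q}}^N$, one against $\mathbb{Q}$) so that the resulting constants combine into the precise closed-form $A$ prescribed by the proposition.
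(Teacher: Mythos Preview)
Your proposal follows the same skeleton as the paper: synchronous coupling of the two particle systems, a one-step comparison of velocity fields split into a Lipschitz-in-space term and a witness-function error (itself split via an intermediate optimizer), and a discrete Gr\"onwall closure from $e_0=0$. The per-term constants you list line up with the paper's.

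Two places deserve comment. For $\|\widehat{h}^\star_n - \tilde{h}^\star_n\|$ (only the $\mathbb{P}$-argument changes), the paper invokes an implicit function theorem to get $W_2$-Lipschitzness of $\mathbb{P}\mapsto h^\star(\mathbb{P})$; your direct strong-concavity argument is a cleaner alternative that delivers the same $\sqrt{K_{1d}}/\lambda$ bound. For $\|\tilde{h}^\star_n - h^\star_n\|$, however, your ``two-sided'' inequality $\lambda\|u^\star-v^\star\|^2 \le (J_u-J_v)(u^\star)-(J_u-J_v)(v^\star)$ is awkward when $\mathbb{Q}$ changes: the right-hand side contains $\int (e^{u^\star}-e^{v^\star})\,d(\widehat{\mathbb{Q}}^N-\mathbb{Q})$, and $e^{u^\star}-e^{v^\star}$ is neither deterministic (since $u^\star=\tilde{h}^\star_n$ depends on $\widehat{\mathbb{Q}}^N$) nor an RKHS element, so mean-embedding concentration does not apply directly and the $e^a-e^b$ identity alone does not rescue it. The paper instead uses the one-sided gradient form of strong concavity, $\lambda\|u^\star-v^\star\|\le \|\nabla J_u(v^\star)-\nabla J_v(v^\star)\|$, evaluated at the \emph{deterministic} $v^\star=h^\star_n$; the $\mathbb{Q}$-contribution is then $\int k(x,\cdot)e^{h^\star_n(x)}\,d(\widehat{\mathbb{Q}}^N-\mathbb{Q})$, a bona fide mean-embedding fluctuation of the fixed map $x\mapsto k(x,\cdot)e^{h^\star_n(x)}$ with RKHS norm at most $\sqrt{K}e^{4K/\lambda}$, and standard concentration yields the $(1+e^{8K/\lambda})$ factor in $A$. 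Swap your value-form inequality for this gradient form at that single step and your argument goes through unchanged.
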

\cref{prop:kale-descent-vs-discrete-kale-flow} shows that given a finite time
horizon $ n_{\max_{  }} $, and given sufficiently many samples of $ \mathbb{ P
}_0 $ and $ \mathbb{ Q } $, one can approximate an exact discrete KALE flow
between $ n=0 $ and  $ n=n_{\max_{ }} $ with arbitrary precision.  The proof of
\cref{prop:kale-descent-vs-discrete-kale-flow} (given in
\cref{proof:kale-descent-vs-kale-flow}) relies on the regularity of the KALE
witness function $ x  \longmapsto \widehat{ h }^\star_n(x) $, but also on the
regularity of the mapping $ \widehat{ \mathbb{ P } }_n^{N}  \longmapsto
\widehat{ h }^\star_n $ (using the 2-Wasserstein distance as the metric on $
\mathcal P_2(\mathbb{R}^d) $.

	\begin{algorithm}[H]
	\caption{KALE Particle Descent Algorithm}
	\label{alg:particle-descent}
	\begin{algorithmic}
	     \STATE {\bfseries Input:}  $ \{ Y_0^{(i)}\}_{i=1}^N
	     \sim \mathbb{ P }_0$, $\{ X^{(i)}\}_{i=1}^N \sim \mathbb{ Q } $, \texttt{max\_iter}, $ \lambda $, $ k $,  $ \gamma $
	     \STATE {\bfseries Output}  $ \{ Y^{(i)}_{\texttt{max\_iter}}\}_{i=1}^N$
	\FOR{$n=0$ {\bfseries to} \texttt{max\_iter}$-1$}
	\STATE \texttt{f\_star} $\leftarrow$ \texttt{dual\_solve}$(X^{(1)}, Y_i^{(1)},\dots, X^{(N)}, Y_i^{(N)}, k, \lambda) \quad \quad  \quad \quad \quad \quad \quad \quad  $ \# \texttt{See Eq.6}
      \STATE \texttt{h\_star} $\leftarrow$ \texttt{compute\_log\_ratio}$($\texttt{f\_star}$, X^{(1)}, Y_i^{(1)},\dots, X^{(N)}, Y_i^{(N)}, k, \lambda) \,\,\,\,$   \# \texttt{Ditto}
	   \FOR{$j=1$ {\bfseries to} $N$}
	   \STATE \texttt{v} $\leftarrow$ $ (1 + \lambda)$\texttt{grad}$($\texttt{h\_star}$(Y_i^{(j)}))$
	   \STATE $ Y_{i+1}^{(j)} \leftarrow Y_i^{(j)} - \gamma \times $\texttt{v}$ \quad \quad \quad \quad \quad \quad \quad \quad \quad \quad \quad \quad \quad \quad \quad \quad \quad \quad\quad \quad \quad \quad \,\,\,  $ 
	   \ENDFOR
	   \ENDFOR
	\end{algorithmic}
	\end{algorithm}

\subsection{Regularization of  KALE particle descent using Noise Injection}

In practice, guaranteeing the convergence of the KALE gradient flow (and its
corresponding KALE particle descent) by relying on
the condition given in  \cref{prop:KALE-flow-cvg} is cumbersome for two
reasons: first, this condition is hard to check, and second, it does not tell
us what to do when the condition is not met. Noise injection
\citep{arbel_maximum_2019_1, birdal_synchronizing_2020}
is a practical
regularization technique originally introduced for the MMD flow, that trades
off some of the ``steepest descent'' property of gradient flow trajectories
with some additional smoothness (in negative Sobolev norm) in order to improve
convergence to the target trajectory. We recall that the solution of a (discrete time) noise
injected gradient flow with velocity field $ (1 + \lambda)\nabla_{  } {h}^{\star}_n $ and noise schedule $
\beta_n$ is defined as the sequence $ (\mathbb{ P }_n)_{n \geq  0} $ whose iterates verify:
\begin{equation}
\label{eq:noise-injection-population-equation}
\begin{aligned}
  \mathbb{ P }_{n+1} = \left ( (x, u)   \longmapsto x - \gamma  (1 + \lambda)\nabla_{  }
  {h}^{\star}_n (x + \beta_n u) \right )_{\#} (\mathbb{ P }_n \otimes g),
\end{aligned}
\end{equation}
where $ g $ is a standard unit Gaussian distribution.  As we show in the next
proposition, under a suitable noise schedule, noise injection can also be
applied to ensure global convergence of the KALE flow.

\begin{proposition}[Global Convergence under noise injection dynamics]
\label{prop:KALE-noise-injection}
  Let $ \mathbb{ P }_n $ be defined as
  \cref{eq:noise-injection-population-equation}. Let $ (\beta_n)_{n \geq  0} $
  be a sequence of noise levels, and define  $ \mathcal  D_{\beta_n, \mathbb{ P }_n} =
  \mathbb{E}_{ y \sim \mathbb{ P }_n, u \sim g } \left \| \nabla_{  }
  {h}_n^{\star}(x + \beta_n u) \right \|^2 $ with $ g $ the density of a standard
  Gaussian distribution. Then, under
  Assumptions \ref{assump:bounded-kernel}  and \ref{assump:smooth-kernel}, and for a
  choice of  $ \beta_n $ such that:
  \begin{equation*}
  \begin{aligned}
    \frac{ 8 K_{2d} \beta_n^2 }{ \lambda^2 }  \text{KALE}(\mathbb{ P }_n \mid \mid \mathbb{ Q }) \leq \mathcal  D_{\beta_n, \mathbb{ P }_n}(\mathbb{ P }_n),
  \end{aligned}
  \end{equation*}
  the following holds: $  \text{KALE}(\mathbb{ P }_{n+1} \mid \mid
  \mathbb{ Q }) - \text{KALE}(\mathbb{ P }_n \mid \mid \mathbb{ Q }) \leq
  -\frac{\gamma}{2} (1 - 3 \gamma \sqrt {K K_{2d}} )D_{\beta_n, \mathbb{ P }_n}(\mathbb{ P }_n)$.
  Moreover, if $ \sum\limits_{ i=1 }^{ \infty } \beta_i =
  +\infty $, then $ \lim_{  n  \to \infty  }\text{KALE}(\mathbb{ P }_n \mid
  \mid \mathbb{ Q }) = 0 .$
\end{proposition}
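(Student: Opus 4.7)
The proposition is essentially a discrete-time Łojasiewicz-style descent argument adapted from the MMD flow analysis of \cite{arbel_maximum_2019_1}, combined with the extra regularization structure provided by the KALE. The plan is to establish a one-step descent inequality of the stated form, then turn that inequality into a qualitative convergence statement using the noise-schedule divergence hypothesis. The envelope theorem plays a key role: because of the quadratic penalty, the first variation of $\mathbb{P} \mapsto \text{KALE}(\mathbb{P}\|\mathbb{Q})$ at $\mathbb{P}_n$ is exactly $(1+\lambda) h_n^\star$, so the KALE along the transport map $T_n(x,u) = x - \gamma(1+\lambda)\nabla h_n^\star(x+\beta_n u)$ can be analyzed through the scalar function $\phi(\gamma) = \text{KALE}((T_n)_{\#}(\mathbb{P}_n\otimes g)\|\mathbb{Q})$.

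\textbf{One-step descent.} I would Taylor-expand $\phi$ to second order in $\gamma$ using the chain rule for Wasserstein gradients, which gives
\begin{equation*}
\phi(\gamma) - \phi(0) = -\gamma(1+\lambda)\,\mathbb{E}_{x\sim \mathbb{P}_n, u\sim g}\!\left\langle \nabla h_n^\star(x),\nabla h_n^\star(x+\beta_n u)\right\rangle + R(\gamma),
\end{equation*}
where the remainder $R(\gamma)$ contains second-order curvature contributions arising both from the velocity field and from the Hessian of $h_n^\star$ along the particle path. Using the RKHS bounds from \cref{assump:smooth-kernel}, namely $\|\nabla h_n^\star(x)\|\le\sqrt{K_{1d}}\|h_n^\star\|$ and $\|\boldsymbol{H}(h_n^\star(x))\|_{\text{Op}}\le\sqrt{K_{2d}}\|h_n^\star\|$, together with the RKHS norm bound $\|h_n^\star\|^2 \le \tfrac{2}{\lambda(1+\lambda)}\text{KALE}(\mathbb{P}_n\|\mathbb{Q})$ (obtained from strong concavity of $\mathcal{K}$ evaluated at $h=0$), one can bound $|R(\gamma)|$ by a constant multiple of $\gamma^2 \sqrt{KK_{2d}}\,\mathcal{D}_{\beta_n,\mathbb{P}_n}$.

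\textbf{Handling the noise mismatch.} The main technical obstacle is the discrepancy between $\nabla h_n^\star(x)$ and $\nabla h_n^\star(x+\beta_n u)$ inside the first-order term. I would write
\begin{equation*}
\nabla h_n^\star(x) = \nabla h_n^\star(x+\beta_n u) - \int_0^1 \beta_n \boldsymbol{H}(h_n^\star(x+s\beta_n u))\,u\,ds,
\end{equation*}
so the inner product in the first-order term equals $\|\nabla h_n^\star(x+\beta_n u)\|^2$ minus an error term controlled by $\beta_n \sqrt{K_{2d}}\|h_n^\star\|\cdot\sqrt{K_{1d}}\|h_n^\star\|$ in expectation. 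Using $\|h_n^\star\|^2 \lesssim \text{KALE}/\lambda(1+\lambda)$ and Young's inequality $ab \le \tfrac12 a^2 + \tfrac12 b^2$ with an appropriate weighting, this error can be absorbed into a fraction of $\mathcal{D}_{\beta_n,\mathbb{P}_n}$ provided $\tfrac{8K_{2d}\beta_n^2}{\lambda^2}\text{KALE}(\mathbb{P}_n\|\mathbb{Q}) \le \mathcal{D}_{\beta_n,\mathbb{P}_n}$, which is precisely the hypothesis. Combining everything yields the claimed descent inequality $\text{KALE}(\mathbb{P}_{n+1}\|\mathbb{Q}) - \text{KALE}(\mathbb{P}_n\|\mathbb{Q}) \le -\tfrac{\gamma}{2}(1-3\gamma\sqrt{KK_{2d}})\mathcal{D}_{\beta_n,\mathbb{P}_n}$ once $\gamma$ is small enough that the prefactor is positive.

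\textbf{From descent to convergence.} Once this inequality is in hand, the divergence claim follows by telescoping: summing gives $\sum_n \mathcal{D}_{\beta_n,\mathbb{P}_n} < \infty$, since KALE is nonnegative and decreases at each step. I would then argue by contradiction that $\text{KALE}(\mathbb{P}_n\|\mathbb{Q})$ must converge to $0$: if it stayed bounded below by some $\varepsilon>0$, the descent hypothesis would imply $\mathcal{D}_{\beta_n,\mathbb{P}_n} \ge c\,\varepsilon$ for some $c>0$ along a subsequence (tying $\mathcal{D}$ to KALE through the witness function, using that KALE$(\mathbb{P}_n\|\mathbb{Q}) \le C(1+\lambda)\|h_n^\star\|\cdot\|\mu_{\mathbb{P}_n}-\mu_{\mathbb{Q}}\|$ plus regularity of $h_n^\star$), contradicting summability in the regime $\sum\beta_n=+\infty$. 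The hardest step will be closing this last argument cleanly — namely converting summability of $\mathcal{D}_{\beta_n,\mathbb{P}_n}$ into vanishing of KALE while only using the noise-schedule divergence assumption, which likely requires a careful lower bound relating $\mathcal{D}_{\beta_n,\mathbb{P}_n}$ to $\beta_n\,\text{KALE}(\mathbb{P}_n\|\mathbb{Q})$ through the smoothness of $h_n^\star$.
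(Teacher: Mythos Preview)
Your one-step descent argument is essentially the paper's: Taylor-expand along the pushforward, use the envelope theorem so the first variation is $(1+\lambda)h_n^\star$, control second-order terms via the RKHS Hessian bound $\sqrt{K_{2d}}\|h_n^\star\|$, and absorb the $\nabla h_n^\star(x)$ vs.\ $\nabla h_n^\star(x+\beta_n u)$ mismatch using the Hessian bound, $\|h_n^\star\|^2 \lesssim \text{KALE}/\lambda$, and the noise-schedule hypothesis. The paper does exactly this (deferring the routine calculus to \cite[E.1]{arbel_maximum_2019_1}), arriving at the intermediate inequality
\[
\text{KALE}(\mathbb{P}_{n+1}\mid\mid\mathbb{Q}) - \text{KALE}(\mathbb{P}_n\mid\mid\mathbb{Q}) \le -\gamma\Big(1-\tfrac{3}{2}\gamma\sqrt{KK_{2d}}\Big)\mathcal{D}_{\beta_n} + \gamma\sqrt{K_{2d}}\,\beta_n\|h_n^\star\|\,\mathcal{D}_{\beta_n}^{1/2},
\]
and then absorbing the last term into $\tfrac{\gamma}{2}\mathcal{D}_{\beta_n}$ via the hypothesis.

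Where you diverge from the paper is the \emph{closing} step, and your version is both more complicated and incomplete. You propose to telescope, obtain $\sum_n \mathcal{D}_{\beta_n}<\infty$, and then argue by contradiction that KALE cannot stay bounded away from zero --- but you correctly flag that you do not have a clean lower bound of the form $\mathcal{D}_{\beta_n}\gtrsim \beta_n\cdot\text{KALE}$ to close this. The paper avoids this entirely: it simply \emph{reuses} the noise-schedule hypothesis $\mathcal{D}_{\beta_n}\ge \tfrac{8K_{2d}\beta_n^2}{\lambda^2}\text{KALE}(\mathbb{P}_n\mid\mid\mathbb{Q})$ a second time, inserting it into the descent inequality to obtain a multiplicative recursion
\[
\text{KALE}(\mathbb{P}_{n+1}\mid\mid\mathbb{Q}) \le \big(1-\Gamma\beta_n^2\big)\,\text{KALE}(\mathbb{P}_n\mid\mid\mathbb{Q}),
\]
with $\Gamma$ an explicit constant. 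Convergence to zero then follows from the divergence of the product $\prod_n(1-\Gamma\beta_n^2)^{-1}$, exactly as in \cite{arbel_maximum_2019_1}. You should drop the contradiction route and use this direct recursion instead; it is where the ``hardest step'' you identify simply evaporates.
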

As in \cite{arbel_maximum_2019_1}, convergence of the regularized KALE flow is
guaranteed when the noise schedule satisfies an inequality for all $ n $, which
is hard to check in practice. Nonetheless, we empirically observe that in all
our problems a small, constant noise schedule can help the KALE flow reach a
lower KALE value at convergence.

Let us stress that the noise injection scheme given in \cref{prop:KALE-noise-injection} is a
\emph{population} scheme that includes an intractable convolution.  To use
noise injection in the KALE particle descent algorithm, we approximate this convolution
using a single sample $ U_n^{(i)} $ for each particle update. \cref{eq:kale-descent-update-particles} becomes:
\begin{equation}
\label{eq:kale-descent-update-particles-noise-injection}
\begin{aligned}
  Y^{(i)}_{n+1} = Y^{(i)}_{n} - \gamma (1 + \lambda)\nabla_{  } \widehat{ h }^{\star}_n(Y^{(i)}_n + \beta_n U^{(i)}_n) , \,\,\, U^{(i)}_n \sim \mathcal  N(0, 1).
\end{aligned}
\end{equation}

\paragraph{Implementation}
The particle descent algorithm can be implemented using automatic
differentiation software such as the \verb|pytorch| library in python. This
allows us to easily compute the gradient of the log-density ratio estimate
$\smash{\widehat{h}^{\star}_{n}}$ appearing in the particle update rule
\cref{eq:kale-descent-update-particles}.

Computing $\smash{\widehat{h}^{\star}_{n}}$ can be achieved using 
methods such as gradient descent,  coordinate  descent or higher order
optimization methods such as Newton's method and L-BFGS \cite{l_bfgs_paper}.

\section{Experiments}
\label{sec:experiments}
In this section, we empirically study the behavior of the KALE particle descent algorithm in three settings reflecting different topological properties for the source-target pair:
a pair with a target supported on a hypersurface (zero
volume support), a pair with disjoint supports of positive volume, and a pair
of distributions with a positive density supported on $ \mathbb R^d $.

{\bf KALE flow for targets defined on hypersurfaces} 
Our first example consists in a target \emph{supported} (and uniformly
distributed) on a lower-dimensional surface  that defines three non-overlapping rings. 
The initial source is a Gaussian distribution with a mean in the vicinity of
the target $ \mathbb{ Q } $.  This setting is a perfect candidate to illustrate
the failure modes of both the KL and the MMD when used in particle descent
algorithms: on the one hand, the measures $ \mathbb{ P }_0 $ and $ \mathbb{ Q }
$ are mutually singular, and thus the KL gradient flow from $ \mathbb{ P
}_{0} $ to $ \mathbb{ Q } $ does not exist.
By contrast, the KALE is well-defined in this case, and inherits from the KL an
increased sensitivity to support discrepancy. For that reason, 
we hypothesize that the trajectories of the KALE flow will converge towards a
better limit compared to its MMD flow counterpart.  We sample $ N=300 $ points
from the target and the initial source distribution and run an implementation
of \cref{alg:particle-descent}  for $ n=50000 $ iterations. The complete set of
parameters is given in the appendix.
\begin{figure}[h]
    \includegraphics[width=1\textwidth]{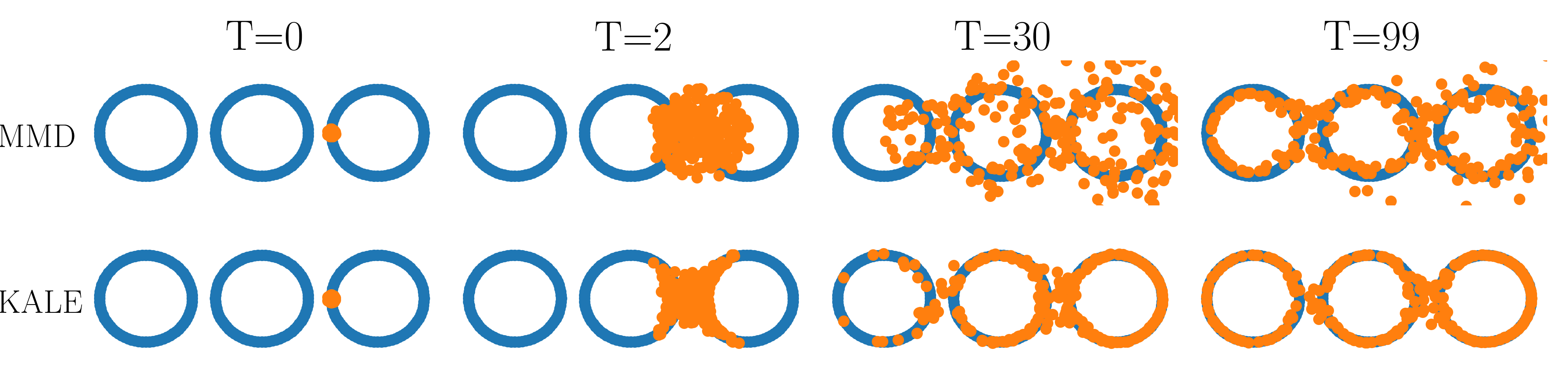}
    {\caption{MMD and KALE flow trajectories for ``three rings'' target}\label{fig:kale_flow_ring}}
\end{figure}
Results are plotted in \cref{fig:kale_flow_ring}. We indeed notice that the
KALE flow trajectory remains close to the target support and recovers the
target almost perfectly. This illustrates the ability of the KALE flow to
\emph{relax} the hard support-sharing constraints of the KL flow into soft
support closeness constraints.  These soft constraints are not present in the
MMD flow, where particles of the source can remain scattered around the plane.

{\bf KALE flow between probabilities with disjoint support} 
In our second example, we consider a source/target pair that are
supported on disjoint subsets each with a finite, positive volume (unlike the
previous example). The support of the source and the target consist
respectively of a heart and a spiral, and the two distributions have a  uniform
density on their support.  Again, because the supports of the source and the
target are disjoint, the KL flow cannot be defined, nor simulated for this pair.
We run a KALE particle descent algorithm, and compare it as before with an MMD
flow, as well as with a ``Sinkhorn descent algorithm''
\cite{feydy_interpolating_2018}. Results are in
\cref{fig:kale_flow_shape_transfer}.

\begin{figure}[h]
    \centering
    \includegraphics[width=1.\textwidth]{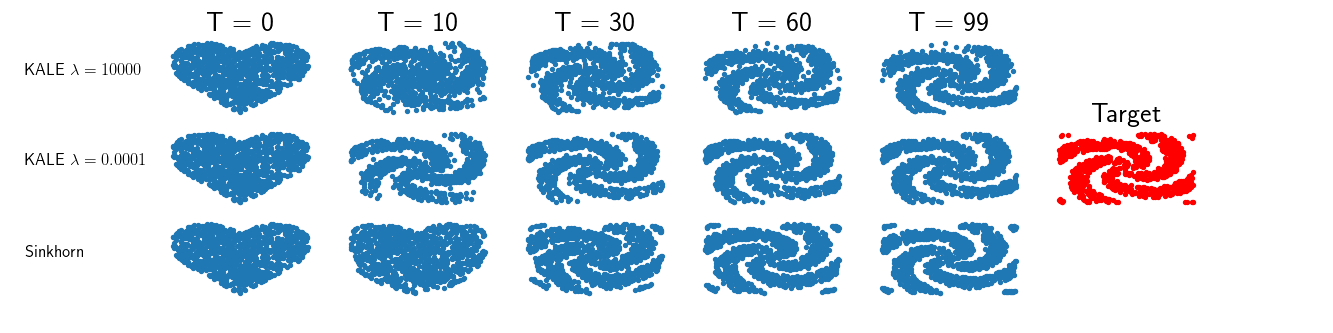}
    \caption{Shape Transfer using the KALE flow}
    \label{fig:kale_flow_shape_transfer}
\end{figure} 

As we can see, the soft support-sharing constraint informing the KALE
flow forces the source to quickly recover the spiral shape, much before
the Sinkhorn and MMD flow trajectories.  However, compared to Sinkhorn, the two
KALE-generated spirals have a harder time recovering outliers, disconnected
from the main support of the spiral.

{\bf KALE flow for probabilities with densities}\label{subsec:experiments-mog}
We consider the setting where the target admits a positive density on $
\mathbb{R}^d $. Hence, unlike in the two
previous examples, the KL gradient flow is well-defined, and can be simulated using the Unadjusted Langevin Algorithm (ULA). 
Echoing the interpolation property of the KALE between the MMD and the KL shown in \cref{prop:kale-asymptotic-mmd}, we propose to investigate whether this property is preserved in a \emph{gradient flow} setting.
We consider a balanced mixture of 4 Gaussians with means located on the 4 corners of the unit square for the target  and a source distribution given by a unit Gaussian in the vicinity of the unit square. We then run KL, MMD, and  KALE flows with different values of $\lambda$, and compute the Wasserstein distance between reference particles at iteration $n$ from either the MMD or KL flow and particles obtained from the KALE flow at the same iteration $n$. The choice of the Wasserstein distance is natural for Wasserstein Gradient Flows.
As shown in \cref{fig:KALE-flow-mog-kl}, 
for ``small'' values of $ \lambda $, particles from a KALE flow remain close to the ULA particles, while for ``large'' ones they remain close to the MMD particles (\cref{fig:KALE-flow-mog-mmd}).
\begin{figure*}[h]
\centering

\begin{subfigure}[t]{0.32\textwidth}
    \centering
    \includegraphics[width=0.99\textwidth]{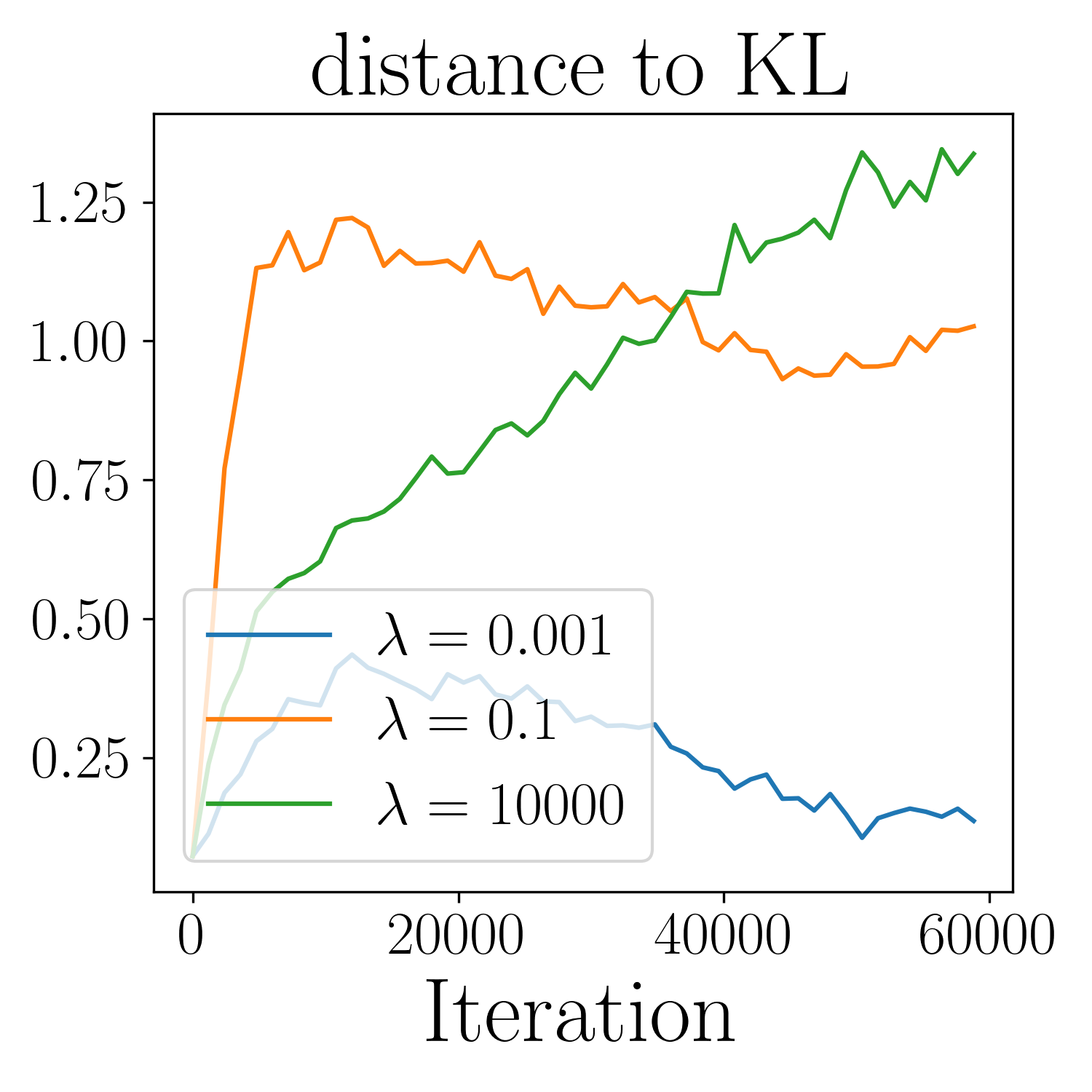} %
    \caption{}\label{fig:KALE-flow-mog-kl}
\end{subfigure} %
\begin{subfigure}[t]{0.32\textwidth}
    \centering
    \includegraphics[width=0.99\textwidth]{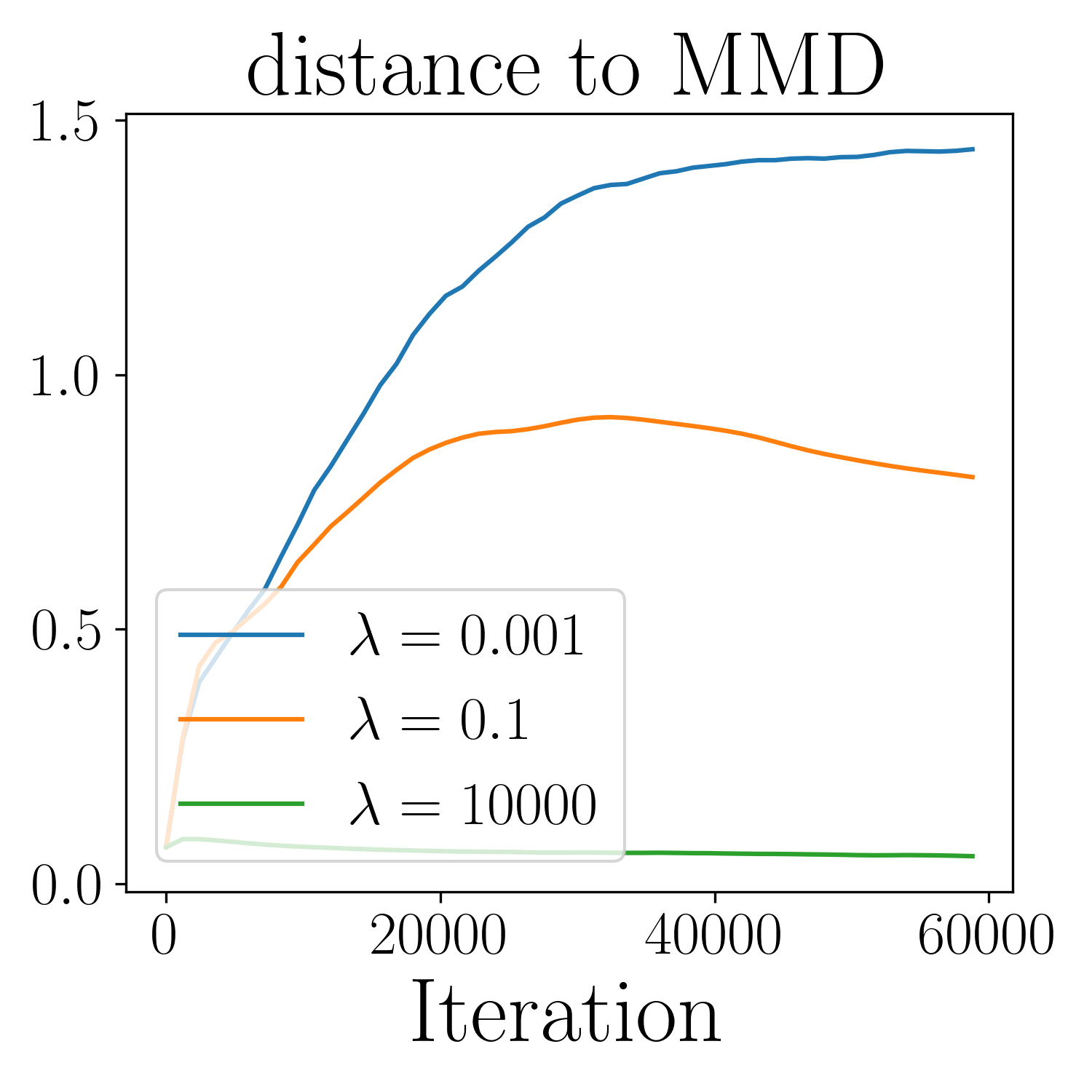} %
    \caption{}\label{fig:KALE-flow-mog-mmd}
\end{subfigure} %
\begin{subfigure}[t]{0.32\textwidth}
    \centering
    \includegraphics[width=0.99\textwidth]{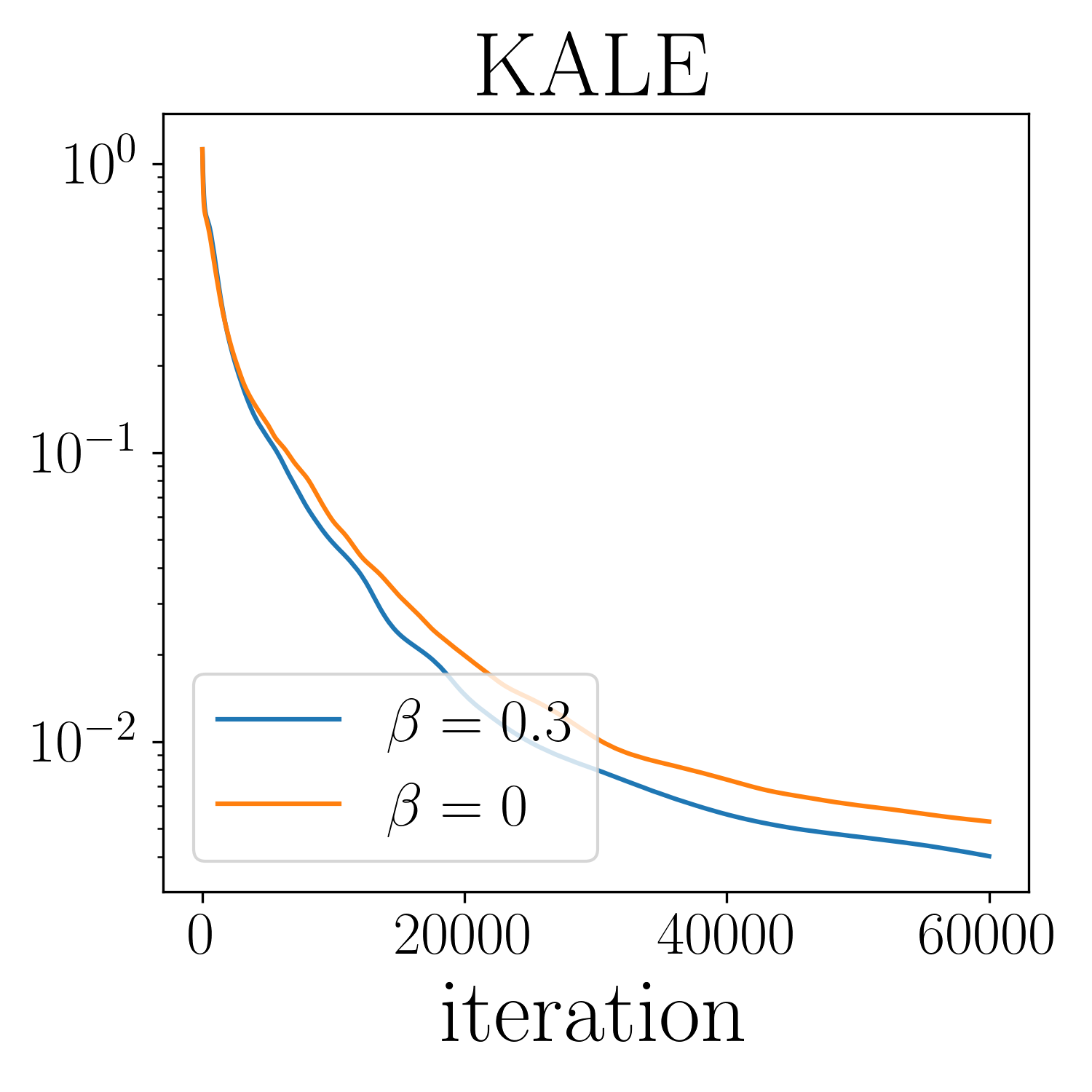} %

    \caption{}\label{fig:KALE-flow-mog-noise-injection}
  \end{subfigure}
\caption{ (a): Evolution of the Wasserstein distance between reference
  particles from the ULA algorithm and the KALE particle descent algorithm with
  various values of $ \lambda $. (b) Left: same as (a), but taking particles from the MMD flow
as reference. (c) Evolution of the KALE along the trajectories of a KALE descent algorithm with the same mixture of Gaussians as target, using $\lambda=0.1$. Orange: without noise injection. Blue: with noise injection
using a constant noise schedule.}
\end{figure*}

{\bf Impact of noise injection} 
On all three examples, using a regularized KALE flow with an appropriately
tuned $ \beta_n $  schedule always improves the proximity to the global
minimum $ \mathbb{ P }_{\infty} = \mathbb{ Q } $. Its effect is particularly
impactful in the mixture of Gaussians example, where a small, constant noise
schedule $ \beta_n $ allows for faster mixing times for $ \mathbb{ P }_n,$ as
opposed to its unregularized counterpart, see
\cref{fig:KALE-flow-mog-noise-injection}. We provide further details on the
impact of noise injection in the appendix.

\section{Discussion and further work}
We have constructed the $ \text{KALE} $ flow, a gradient flow
between probability distributions that relaxes the KL gradient flow for
probabilities with disjoint support. Using the \emph{KALE Particle Descent
Algorithm}, we have shown on several examples that in cases where a KL gradient
flow cannot be defined, trajectories of the KALE flow empirically exhibit
better convergence properties when compared to the MMD flow, a flow that the
KALE is also able to interpolate. In
cases where the KL flow can be defined, we notice empirically that the KALE
flow can \emph{approximate} the trajectories of the KL flow, but using only
information from \emph{samples} of the target. This latter property is in sharp contrast
with KL Gradient Flow discretizations like the Unadjusted Langevin Algorithm:
in this regard, we could use the KALE flow as a sample-based approximation of
the KL flow, which is to our knowledge a novel concept. 
Future work would analyze  when the KALE flow is a consistent estimator of the KL flow in the large sample limit.

\newpage 

\bibliographystyle{abbrvnat}
\bibliography{final_bib}

\clearpage

\clearpage
\appendix
{\large \bf Appendix for }{\large \it KALE flow: A relaxed KL Gradient Flow for Probabilities with Disjoint Support}

The appendix is structured as follows: in \cref{app-sec:background}, we give
additional details on the variational formulation of the KL divergence as well as Wasserstein
gradient flows. 
In \cref{proof:thm-continuity},
\ref{proof:kale-asymptotic-mmd}, we give the proofs for all statements made
about the static properties of the KALE, while in \cref{proof:KALE-diff} to
\ref{proof:kale-descent-vs-kale-flow} we give proofs for all statements made
about the KALE flow and descent algorithm. \cref{app-sec:auxiliary-lemmas}
contains some additional technical lemmas that are used throughout the
appendix. Finally, in \cref{app-sec:numerical-details}, we provide 
details on the experiments discussed in the main body, and the impact of 
noise injection on KALE particle descent trajectories.

\section{Mathematical Background}
\label{app-sec:background}
In this section, we lay out in more depth the theoretical framework behind the
tools used in this paper. We first review the variational formulation of the
KL, and more generally $ f $-divergences.  We discuss how this variational
formulation can be used beyond the context of  statistical estimation of the
KL, which is the original context it was considered for
\cite{nguyen_estimating_2010}. We then provide additional details about
Wasserstein gradient flows, and the theoretical tools used to study them.
\subsection{The use of the variational formulation of $ f $-divergences}
$ f $-divergences, first described in \cite{ali1966general}, form a family of
divergences between probability measures parametrized by a convex, lower semi-continuous function $ f $. 
The divergence $ D_{f} $ between two probabilities measures $ \mathbb{ P } $ and $ \mathbb{ Q } $ is defined as:
\begin{equation*}
\begin{aligned}
  D_{f}(\mathbb{ P } \mid \mid \mathbb{ Q }) = \begin{cases}
    \int_{  }^{  } f( \frac{ \text{d} \mathbb P }{ \text{d} \mathbb Q }) \text{d} \mathbb Q & \text{ if }  \mathbb{ P } \ll \mathbb{ Q }\\
   +\infty & \text{ otherwise}
  \end{cases} 
\end{aligned}
\end{equation*}
Apart from the KL, which we will discuss later, other well known instances of $ f
$-divergences include the  $ \chi^2 $ divergence, the Hellinger divergence and the
Total Variation.  Requiring the function $ f $ to be convex allows to use the
theory of Fenchel duality to frame $ D_{f} $
as the solution of an optimization problem:
\begin{proposition}[{\cite[Lemma 9.4.4]{ambrosio2008gradient}}]
  For any $ \mathbb{ P } $, $ \mathbb{ Q } $ $ \in \mathcal  P( \mathbb{R}^d) $, we have:
  \begin{equation}
  \label{eq:variational-formulation-f-div}
  \begin{aligned}
    D_{f}(\mathbb{ P } \mid \mid \mathbb{ Q }) = \sup_{ h \in C_{b}^{0}(\mathbb{R}^d) } \left \{ \int_{\mathbb{R}^d}^{  } h(x) \text{d} \mathbb P - \int_{  }^{  } {f}^{\star}( h(x)) \text{d} \mathbb Q \right \} 
  \end{aligned}
  \end{equation}
\end{proposition}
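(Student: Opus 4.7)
My plan is to prove the two inequalities separately, using the Fenchel--Young inequality as the core tool, together with an approximation argument to move from a measurable optimizer to a continuous bounded test function.

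\textbf{Easy direction} ($D_f \geq \sup$). The Fenchel--Young inequality asserts $f(x) + f^{\star}(y) \geq xy$ for all $x, y$ in the domains. When $\mathbb{P} \ll \mathbb{Q}$, write $\rho = \mathrm{d}\mathbb{P}/\mathrm{d}\mathbb{Q}$ and apply this pointwise with $x = \rho(y)$ and $y = h(y)$ to get $f(\rho(y)) \geq \rho(y) h(y) - f^{\star}(h(y))$. Integrating against $\mathbb{Q}$ yields the desired inequality for every $h \in C_b^0(\mathbb{R}^d)$. In the singular case $\mathbb{P} \not\ll \mathbb{Q}$, both sides should equal $+\infty$: the left by convention, and the right because, by outer/inner regularity of Borel measures and Urysohn's lemma, one can construct continuous bounded test functions $h_n$ that are large on a compact subset of a $\mathbb{Q}$-null set carrying $\mathbb{P}$-mass, while remaining near $0_{\mathcal{H}}$-agreeing values on the support of $\mathbb{Q}$, which drives $\int h_n \,\mathrm{d}\mathbb{P} - \int f^{\star}(h_n) \,\mathrm{d}\mathbb{Q}$ to $+\infty$.

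\textbf{Hard direction} ($D_f \leq \sup$). Assume $\mathbb{P} \ll \mathbb{Q}$ and $D_f(\mathbb{P} \mid\mid \mathbb{Q}) < +\infty$; otherwise a similar blow-up construction handles the infinite case. The natural candidate maximizer is a measurable selection $h^{\star}(y) \in \partial f(\rho(y))$, for which Fenchel--Young holds with \emph{equality}, so that $\int h^{\star} \,\mathrm{d}\mathbb{P} - \int f^{\star}(h^{\star}) \,\mathrm{d}\mathbb{Q} = D_f(\mathbb{P} \mid\mid \mathbb{Q})$. Since $h^{\star}$ is in general only measurable and unbounded, I would approximate it in two stages. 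First, truncate to $h^{\star}_M := \max(-M, \min(M, h^{\star}))$ and invoke dominated/monotone convergence (with an envelope supplied by Fenchel--Young itself) to show the objective at $h^{\star}_M$ tends to $D_f$ as $M \to \infty$. Second, approximate the bounded measurable $h^{\star}_M$ by a bounded continuous function via Lusin's theorem applied to the finite reference measure $\tfrac{1}{2}(\mathbb{P} + \mathbb{Q})$, which controls both $\int h \,\mathrm{d}\mathbb{P}$ and $\int f^{\star}(h) \,\mathrm{d}\mathbb{Q}$ simultaneously.

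\textbf{Main obstacle.} The delicate step is the joint control in the second approximation: a single continuous bounded $h$ must make $\int h \,\mathrm{d}\mathbb{P}$ close to $\int h^{\star}_M \,\mathrm{d}\mathbb{P}$ \emph{and} $\int f^{\star}(h) \,\mathrm{d}\mathbb{Q}$ close to $\int f^{\star}(h^{\star}_M) \,\mathrm{d}\mathbb{Q}$. The first integral is stable because we uniformly approximate on a set of large $\mathbb{P}$-measure and the functions are uniformly bounded by $M$; the second is stable because $f^{\star}$ is Lipschitz on $[-M,M]$ by convexity, giving a uniform modulus of continuity. Coordinating these two estimates and then letting $M \to \infty$ along a diagonal subsequence completes the proof. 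Beyond this step, the argument is essentially bookkeeping once the pointwise Fenchel--Young identity is fixed.
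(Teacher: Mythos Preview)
The paper does not supply its own proof of this proposition: it is quoted verbatim as a background result from \cite[Lemma 9.4.4]{ambrosio2008gradient} and used without further justification. There is therefore nothing in the paper to compare your argument against.

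That said, your sketch is the standard route to this identity and is essentially sound. Two minor points are worth tightening. First, in the singular case you implicitly assume that the domain of $f^{\star}$ is unbounded above (equivalently, that $f$ has superlinear growth at $+\infty$), so that taking $h_n$ large on a $\mathbb{Q}$-null set carrying $\mathbb{P}$-mass actually drives the objective to $+\infty$; without such a condition the sup can stay finite even when $\mathbb{P}\not\ll\mathbb{Q}$, and the stated equality fails. Second, in the truncation step your dominated-convergence envelope needs $[-M,M]$ to lie in the interior of the effective domain of $f^{\star}$ so that $f^{\star}$ is bounded there; this is automatic for the KL instance ($f^{\star}(u)=e^{u}-1$) but deserves a sentence in general. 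The Lusin-type approximation with reference measure $\tfrac{1}{2}(\mathbb{P}+\mathbb{Q})$ and the Lipschitz control of $f^{\star}$ on compacta are exactly the right tools for the final step.
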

Where $ {f}^{\star} $ is the Fenchel convex conjugate \cite{rockafellar_convex_1970} of the convex function $ f $, defined as:
\begin{equation*}
\begin{aligned}
  {f}^{\star}(u) = \sup_{ x \in \mathbb{R}^d } \left \langle u, x \right \rangle  - f(x)
\end{aligned}
\end{equation*}
The $ \text{KL} $ divergence is a particular instance of $ f $-divergence using the pair $ (f, {f}^{\star}) $:
\begin{equation*}
\begin{aligned}
  f(x) = \begin{cases}
    x (\log x - 1) + 1 & \text{ if } x > 0\\
    1&  \text{ if } x = 0 \\
    +\infty & \text{ if } x < 0
  \end{cases} 
  ,\,\,\quad  {f}^{\star}(u) = e^{u} - 1
\end{aligned}
\end{equation*}
\paragraph{M-estimation procedures for $ \text{KL}(\mathbb{ P } \mid \mid \mathbb{ Q }) $} 
The dual formulation in \cref{eq:variational-formulation-f-div} is an
optimization problem with an objective depending on $ \mathbb{ P } $ and $
\mathbb{ Q } $ only through \emph{expectations}. By relying on the theory of
M-estimation,  \cite{nguyen_estimating_2010} showed that it was possible to
\emph{consistently} approximate the population solution of
\cref{eq:variational-formulation-f-div} using only samples $
\{Y^{(i)}\}_{i=1}^{N} $ and $ \{X^{(i)}\}_{i=1}^{N} $ of $ \mathbb{ P } $ and $
\mathbb{ Q } $. In particular, they showed that the solution of the sample-based, regularized problem:
\begin{equation}
\label{eq:kl-estimation-penalized}
\begin{aligned}
  \sup_{ h \in \mathcal  H } \left \{ 1 + \int_{  }^{  } h \text{d} \widehat{ \mathbb{ P } }^{N} - \int_{  }^{
  } e^{h} \text{d} \widehat{ \mathbb{ Q } }^{N} + 1 - \frac{\lambda_N}{2}I(h) \right \}
\end{aligned}
\end{equation}
(where $ I(h) $ is a convex complexity penalty) will converge in probability to
the solution of \cref{eq:variational-formulation-f-div}, provided that $
\lambda_N $ decays to $ 0 $ as $ \frac{1}{\sqrt {N}} $ and that the complexity
of the function class $ \mathcal  H $ is small enough.
However, their setting is general and does not exploit the specificity of an
RKHS $\mathcal H$ with a penalty  $ I(h) = \left \| h \right
\|^2_{\mathcal  H} $.  Consistency for the latter case (a case which is
tightly linked to the definition of the KALE), was proved by
\cite{arbel_generalized_2020_1} using tools from RKHS theory.

\paragraph{Why KALE differs from simple KL estimation} 
The addition of the regularization term $ \frac{ \lambda_N }{ 2 }I(h) $ (where the KALE objective is retrieved using $I(h) = \| h \|^2$) to
\cref{eq:kl-estimation-penalized} makes the solution of
\cref{eq:variational-formulation-f-div} non-infinite for the mutually singular
empirical distributions $ \widehat{ \mathbb{ P } }^{N} $ and $ \widehat{
\mathbb{ Q } }^{N} $. However, the KL population objective
\cref{eq:variational-formulation-f-div} is unregularized, reflecting the fact
that the KL is infinite for mutually singular population $ \mathbb{ P } $ and $
\mathbb{ Q } $. It is the goal of \cref{sec:kale} is to show that extending the
regularization technique introduced in an estimation setting to the
KL population objective results in a relaxed solution to the KL problem that is
a valid divergence measure between $ \mathbb{ P } $ and $ \mathbb{ Q } $. The $
\text{KALE} $ thus leverages the \emph{biases} of the $ \text{KL} $ estimates to remain well-defined
for mutually singular distributions: in the present context, the primary interest of $ \text{KALE} $ is not to
estimate the KL, but to provide a {\em KL alternative} for mutually singular
distributions. This justifies the definition of the KALE with
a positive $ \lambda $ given in \cref{def:kale}. Note that a sample-based approximation of $ \text{KALE}(\mathbb P \mid \mid \mathbb Q) $ is now:

\begin{equation}
\label{eq:kale-estimation-penalized}
\begin{aligned}
  \max_{ h \in \mathcal  H } \left \{1 + \int_{  }^{  } h \text{d} \widehat{ \mathbb{ P } }^{N} - \int_{  }^{
  } e^{h} \text{d} \widehat{ \mathbb{ Q } }^{N} - \frac{\lambda}{2} \| h \|^2 \right \}
\end{aligned}
\end{equation}
We emphasize that unlike in \cref{eq:kl-estimation-penalized}, $ \lambda $ is now kept
fixed.

\subsection{Wasserstein Gradient Flows}

\paragraph{The Wasserstein Geometry}
The theory of Wasserstein-2 gradient flows considers the set of probability
measures on $ \mathcal  P_2(\mathcal  X) $ (where $ \mathcal  X $ is a separable Hilbert Space set to $ \mathbb{R}^d $ in our case) with finite $ 2^{\text{nd}} $ order
moments, endowed with the Wasserstein-2 metric, defined, given $ \mathbb{ P
}_0, \mathbb{ P }_1 \in \mathcal  P_2(\Rd) $, as:

\vspace{-1em}

\begin{equation}
\label{eq:wasserstein}
\begin{aligned}
  W_2(\mathbb{ P }_0, \mathbb{ P }_1) = \left ( \inf_{ \gamma \in 
    \Gamma(\mathbb{ P }_0, \mathbb{ P }_1) § } \int_{  }^{  } \left \| x - y
\right \|^2 \text{d}  \gamma (x, y) \right )^{\frac{1}{2}}
\end{aligned}
\end{equation}
$ \Gamma(\mathbb{ P }_0, \mathbb{ P }_1) $ denotes the sets of \emph{admissible
transport plans} between $ \mathbb{ P }_0$ and $ \mathbb{P}_1 $:
\begin{equation*}
\begin{aligned}
  \Gamma(\mathbb{ P }_0, \mathbb{ P }_1) = \left \{  \gamma \in \mathcal
  P(\Rd \times \Rd); \quad (\pi^{1})_{\#} \gamma = \mathbb{ P
}_0,\,\, (\pi^{2})_{\#} \gamma = \mathbb{ P }_1\right \} 
\end{aligned}
\end{equation*}
where $ \pi^{1}: (x, y)  \longmapsto x $ and $ \pi^{2}  (x, y)  \longmapsto y $
are the canonical projections on $ \Rd \times \Rd $. In the
proofs, we will often consider \emph{constant speed geodesics} between two
probabilities $ \mathbb{ P }_0 $ and $ \mathbb{ P }_1 $, defined as paths $
\left ( \mathbb{ P }_t \right )_{0 \leq t \leq 1} $ of the form: $$ \mathbb{ P
}_t = \left ( (1 - t) \pi^{1} + t \pi^{2} \right )_{\#} \gamma$$ where $ \gamma
\in \Gamma_o(\mathbb{ P }_0, \mathbb{ P }_1) $ is an \emph{optimal coupling},
in the sense that it minimizes the objective defining the $ W_2(\mathbb{ P }_0,  \mathbb{ P }_1) $
distance in \cref{eq:wasserstein}. Convexity along geodesics, or \emph{geodesic
convexity} is a property of functionals in $ (\mathcal  P_2(\Rd), W_2)
$:
\begin{definition}[Geodesic convexity, {\cite[Definition 9.1.1]{ambrosio2008gradient}}]
	We say that a functional $ \mathcal  F $ is $ -M $-geodesically semiconvex for some $ M>0 $ if for any
	$ \mathbb{ P }_0, \mathbb{ P }_1 $ and constant speed geodesic $ \mathbb{ P }_t,\,\, t
		\in \left \lbrack 0, 1 \right \rbrack $ between $ \mathbb{ P
		}_0 $ and $ \mathbb{ P }_1 $, the following holds:
	\begin{equation*}
		\begin{aligned}
		  \mathcal  F(\mathbb{ P }_t) \leq (1 - t) \mathcal  F(\mathbb{
		  P }_0) + t \mathcal  F(\mathbb{ P }_1) + M t \left ( 1 - t
		\right ) W_2(\mathbb{ P }_0, \mathbb{ P }_1)^2.
		\end{aligned}
	\end{equation*}
\end{definition}

\paragraph{Wasserstein Gradient Flows} 
The set $ ( \mathcal  P_2(\mathbb{R}^d), W_2 ) $ is a metric space
and not a Hilbert space. Because of that, the notion of gradient (flow) of a functional $
\mathcal  F $  cannot  easily be defined through duality with the
differential of $ \mathcal  F $, and porting the notion of  ``gradient flow'' to
the space $ (\mathcal  P_2(\Rd), W_2) $ thus requires characterizing gradient
flows trajectories in a Hilbertian-free way. Examples of such characterizations
include curves of maximal slope \cite[Section 11.1.1]{ambrosio2008gradient}, or
identification with limit curves of \emph{minimizing moment schemes}. We refer
to \cite{santambrogio__2016} for an introduction of gradient flows Wasserstein
spaces.
The formal definition of Wasserstein-2 gradient flows as given in \cite{ambrosio2008gradient} is as follows:
\begin{definition}[Gradient Flows {\cite[Definition 11.1.1]{ambrosio2008gradient}}]
  \label{def:formal-gradient-flow}
  We say that an absolutely continuous map  $ \left ( t  \longmapsto \mathbb{ P }_t \in
  \mathcal P_2(\mathbb{R}^d) \right ) $ is a solution of the
  \emph{Wasserstein-2 gradient flow} equation:
  \begin{equation}
  \label{eq:formal-gradient-flow}
  \begin{aligned}
  \partial_t \mathbb{ P }_t + \text{div} \left ( \mathbb{ P }_t v_t \right ) = 0,
  \end{aligned}
  \end{equation}
  if $ \left (  I \times (- v_t) \right ) \in \boldsymbol{\partial} \mathcal  F(\mathbb{ P }_t) $,
  where $ \boldsymbol{\partial}  \mathcal  F(\mathbb{ P }_t) $ is the extended Fr\'{e}chet
  subdifferential of $ \mathcal  F $ evaluated at $ \mathbb{ P }_t$.
\end{definition}

For common functionals such as the sum of (sufficiently smooth) potential, interaction and internal energy terms,
\begin{equation*}
\begin{aligned}
   \mathcal  F(\mathbb{ P }) = \int_{  }^{  } V(x) d \mathbb{ P }(x) + \int_{  }^{  } W(x - y) d \mathbb{ P }(x) d \mathbb{ P }(y) + \int_{  }^{  } f(p(x)) d \mathbb{ P }(x)
\end{aligned}
\end{equation*}
(where $\mathbb{P}$ is assumed to be regular, of the form $\textrm{d}\mathbb{P}(x) = p(x) \textrm{d}x$),
\cite{ambrosio2008gradient} have identified solutions of the very general \cref{eq:formal-gradient-flow} with solutions of the more familiar
\begin{equation}
\label{eq:classical-gradient-flow}
\begin{aligned}
  \partial \mathbb{ P }_t  - \text{div} \left (\mathbb{ P }_t \nabla_{  }   \frac{ \delta \mathcal  F }{ \delta \mathbb{ P } }(\mathbb{ P }_t) \right ) = 0,
\end{aligned}
\end{equation}
where $ \frac{ \delta \mathcal  F }{ \delta \mathbb{ P } } $ is the first variation of $ \mathcal  F $, defined (when it exists) as the function $ v $ verifying:
\begin{equation*}
\label{eq:first-variation}
\begin{aligned}
  \lim_{ \epsilon  \to 0 } \frac{ \mathcal  F(\mathbb{ P } + \epsilon d \chi ) - \mathcal  F(\mathbb{ P }) }{ \epsilon } = \int_{  }^{  } v(x) \text{d} \chi, \quad \chi = \mathbb{ P } - \mathbb{ Q }
\end{aligned}
\end{equation*}
For any $ \mathbb{ Q } \in \mathcal  P_2(\mathbb{R}^d) $.
Note that this case includes the MMD (given regularity assumptions on the
kernel), as discussed in \cite{arbel_maximum_2019_1}, but does not include the
KALE, which is not a functional studied in \cite{ambrosio2008gradient}, and to our knowledge, a
novel object of study in the Wasserstein gradient flow literature. In
\cref{proof:KALE-diff}, we show that the identification between
\cref{eq:formal-gradient-flow} and \cref{eq:classical-gradient-flow} still
holds for the case of KALE, by identifying elements of its (strong)
\emph{extended Fr\'{e}chet subdifferential}.  For completeness, we recall the
definition of a strong extended Fr\'{e}chet subdifferential:
\begin{definition}[(Strong) Extended Fr\'{e}chet subdifferential, {\cite[Definition
  10.3.1]{ambrosio2008gradient}}]\label{def:extended-frechet-subdifferential}
  Let $ \mathcal  F: \mathcal  P_2(\Rd)  \longmapsto \left( 
  -\infty, +\infty \right \rbrack $ be a proper, geodesically convex
  functional that is lower semicontinuous w.r.t $ W_2 $. We say that $ \gamma
  \in \mathcal  P_2(\Rd \times \Rd) $ belongs to the strong extended
  Fr\'{e}chet subdifferential $ \boldsymbol{\partial} F(\mathbb{ P }_0) $ if $ \left ( \pi^{1}
  \right )_{\#} \gamma = \mathbb{ P }_0$, and for every $ \mathbb{ P }_1 \in
  \mathcal  P_2(\Rd) $ and $ \boldsymbol{\mu} \in
  \Gamma(\gamma, \mathbb{ P }_1) $:
	$$ \mathcal  F(\mathbb{ P }_1) - \mathcal  F(\mathbb{ P }_0) \geq \int_{ X^3 }^{  } \left \langle x_2, x_3 - x_1 \right
	  \rangle d \boldsymbol{\mu} + o(W_{2, \boldsymbol{\mu}}(\mathbb{ P }_0, \mathbb{ P }_1))$$
	  where $ W^{2}_{2, \boldsymbol{\mu}}(\mathbb{ P }_0, \mathbb{ P
	  }_1) = \int_{  }^{  } \left \| x_1 - x_3 \right \|^2 d
	  \boldsymbol{\mu}(x_1, x_2, x_3) $.
\end{definition}

\section{Proof of \cref{thm:continuity}} 
Throughout this proof, we will consider the function $ \mathcal  K: \mathcal  H  \times \mathcal  P(\mathbb{R}^d)  \to \mathbb{R} $ given by:
\begin{equation} \label{eq:K-appendix}
\begin{aligned}
  \mathcal  K(h, \mathbb{ P }) = \max_{  } \left \{  1 + \int_{  }^{  } h \text{d} \mathbb P - \int_{  }^{  } e^{h} \text{d} \mathbb Q - \frac{\lambda}{2} \left \|h \right \|^2 \right \}
\end{aligned}
\end{equation}
$ \mathcal  K $ has the same expression as the one of \cref{lemma:link-kale-mmd-witness-function}, with a supercharged signature to include the dependency in $ \mathbb{P} $, which we will use in this proof.
\paragraph{Proof of \cref{lemma:link-kale-mmd-witness-function}} 
The proof follows directly from {\cite[Lemma 8]{arbel_generalized_2020_1}}. By
\cref{assump:bounded-kernel}, all integrability requirements are satisfied
(the two \emph{Bochner} integrals in the next equation are well-defined because of \cref{assump:bounded-kernel}). Following this,
the gradient of $ \mathcal  K $ is given by:
\begin{equation*} \label{eq:kale-obj-grad}
\begin{aligned}
  \nabla_{ h } \mathcal  K(h, \mathbb{ P }) = \int_{  }^{  } k(x, \cdot)
  \text{d} \mathbb P - \int_{  }^{  } k(x, \cdot) e^{h} \text{d} \mathbb Q - \lambda h.
\end{aligned}
\end{equation*}
And its evaluation at $ 0 $ given in \cref{lemma:link-kale-mmd-witness-function} follows.
\qed

\label{proof:thm-continuity}
\paragraph{Proof that the $ \text{KALE} $ is weakly continuous}
Let $ \left ( \mathbb{ P }_n \right )_{n \in \mathbb{N}} $ such that $ \mathbb{ P }_n $
weakly converges to $ \mathbb{ P } $. Let $ {h}^{\star} = \arg \max_{ h } \mathcal  K(h,
	\mathbb{ P }) $ and $ {h}^{\star}_n = \arg\max_{ h } \mathcal K(h, \mathbb{ P }_n) $. We will show that both:
\begin{equation*} 
	\begin{aligned}
		 \limsup\limits_{n  \to \infty} \mathcal  K( {h}^{\star}_n, \mathbb{ P }_n) = \mathcal
		K( {h}^{\star}, \mathbb{ P })                                                            \quad \text{ and } \quad 
		 \liminf\limits_{n  \to \infty} \mathcal  K( {h}^{\star}_n, \mathbb{ P }_n) = \mathcal
		K( {h}^{\star}, \mathbb{ P }).
	\end{aligned}
\end{equation*}
The result on $ \text{KALE} $ follows since when $ \lambda $ is kept fixed, $
	\mathcal  K $ and $ \text{KALE} $ differ only by a multiplicative factor.
	We focus on proving the first ($ \limsup $) equality, the arguments for $ \liminf $ being identical.

First, by optimality of $ {h}^{\star}_n $ w.r.t $ \mathbb{ P }_n $, we have:
$\mathcal  K( {h}^{\star}_n, \mathbb{ P }_n)
\geq \mathcal  K( {h}^{\star}, \mathbb{ P }_n)$, implying
		$$\limsup_{ n  \to \infty } \mathcal  K( {h}^{\star}_n, \mathbb{ P }_n) \geq
		\limsup_{ n  \to \infty } \mathcal K( {h}^{\star}, \mathbb{ P }_n).$$
Since $ \mathbb{ P }_n \rightharpoonup \mathbb{ P } $, the r.h.s verifies
$ \limsup_{n  \to \infty}  \mathcal  K( {h}^{\star}, \mathbb{ P }_n)
		= \lim_{ n \to \infty} \mathcal  K( {h}^{\star}, \mathbb{ P }_n) = \mathcal  K(
		{h}^{\star}, \mathbb{ P })$,
from which we conclude
		$\limsup_{n  \to \infty} \text{KALE}(\mathbb{ P }_n, \mathbb{ Q }) \geq
		\text{KALE}(\mathbb{ P } \mid \mid \mathbb{ Q })$.
To prove the converse, assume that $\limsup_{n  \to \infty} \text{KALE}(\mathbb{ P }_n, \mathbb{ Q }) >
		\text{KALE}(\mathbb{ P } \mid \mid \mathbb{ Q })$. Then there exists $ \epsilon > 0 $ and a subsequence  $n_{k}\rightarrow +\infty$ with $k\rightarrow +\infty$ such that
		$\mathcal  K( {h}^{\star}_{n_{k}}, \mathbb{ P}_{n_k}) \geq \mathcal  K( {h}^{\star},
		\mathbb{ P }) + \frac{\epsilon}{2}$.
Let us now compare $ \mathcal  K( {h}^{\star}_{n_k}, \mathbb{ P }) $ with $
\mathcal  K( {h}^{\star}, \mathbb{ P }) $ :
\begin{equation*} 
	\begin{aligned}
		\mathcal  K( {h}^{\star}_{n_k}, \mathbb{ P }) = \mathcal  K( {h}^{\star}_{n_k}, \mathbb{ P
		}_{n_k}) +  \int_{  }^{  } {h}^{\star}_{n_k} \text{d} (\mathbb P  - \mathbb P_{n_k})
		\geq \mathcal  K( {h}^{\star}, \mathbb{ P }) + \frac{\epsilon}{2} - \frac{ 4 \sqrt {K}\text{MMD}
		( \mathbb{ P } \mid \mid \mathbb{ P }_{n_k} )  }{ \lambda }
	\end{aligned}
\end{equation*}
where for the last step, we used the Cauchy-Schwarz inequality and
\cref{lemma:bounded-kale-witness-function}.  Since the MMD is weakly
continuous for bounded kernels with Lipschitz embeddings {\cite[Theorem 3.2]{bharath_on_the_optimal}},  we have $ \lim_{ k  \to \infty
}\text{MMD}^2(\mathbb{ P }_{n_k} \mid \mid \mathbb{ P })  = 0$: there exists a
$ k_0 $ such that, for $ k > k_0 $, 
		$\mathcal  K( {h}^{\star}_{n_k}, \mathbb{ P }) > \mathcal  K( {h}^{\star},
		\mathbb{ P }) + \frac{\epsilon}{4}$, 
which contradicts the optimality condition defining $ {h}^{\star} $. Hence, we must have
\[
	\limsup\limits_{n  \to \infty} \text{KALE}(\mathbb{ P }_n \mid \mid \mathbb{ Q }) =
	\text{KALE}(\mathbb{ P } \mid \mid \mathbb{ Q }).
\]
The two steps of this proof can be repeated for any convergent subsequence of $
\mathcal K({h}^{\star}_n, \mathbb{ P }_n)$, and as a consequence, we also have:
$\liminf\limits_{n  \to \infty} \text{KALE}(\mathbb{ P }_n \mid \mid \mathbb{ Q
}) = \text{KALE}(\mathbb{ P } \mid \mid \mathbb{ Q })$, which proves the weak
continuity of KALE.
\qed

\paragraph{Proof that $ \text{KALE} $ is a probability divergence that metrizes
the weak convergence of probability distributions}

We first prove positivity and definiteness of KALE, making it a probability
divergence.  Positivity of KALE comes from the fact that $ \mathcal  K(
{h}^{\star}, \mathbb{ P }) \geq \mathcal  K(0, \mathbb{ P }) = 0
$. To prove definiteness of KALE, assume $ \text{KALE}(\mathbb {P} \mid \mid
\mathbb {Q}) = 0 $.
Recall that $ \text{KALE}(\mathbb{ P } \mid \mid \mathbb{ Q }) = 0 \iff {h}^{\star} =
	0$, since $ \mathcal  K( 0, \mathbb{ P }) = 0 $ and the
objective is strongly convex.  The optimality criterion for $ 0_{\mathcal  H} $
can be characterized by differentiating
$ \mathcal  K( h, \mathbb{ P }) $. Using \cref{lemma:link-kale-mmd-witness-function}, and the optimality of $ 0 $, we have:
\begin{equation*} 
	\begin{aligned}
		0 = \nabla_{ h }  \mathcal  K(0, \mathbb{ P }) \overset{\Delta}{=} \int_{  }^{  } k(x, \cdot) \text{d} \mathbb P -
		\int_{  }^{  } k(x, \cdot) \text{d} \mathbb Q = f_{\mathbb{ P }, \mathbb{ Q }},
	\end{aligned}
\end{equation*}
where $ f_{\mathbb{ P }, \mathbb{ Q }} $ denotes the MMD \emph{witness function} between $
	\mathbb{ P } $ and $ \mathbb{ Q } $, i.e. $ \text{MMD}(\mathbb{ P } \mid \mid
	\mathbb{ Q })^2 = \left \|f_{\mathbb{ P }, \mathbb{ Q }} \right \|^2 $. When $ k $ is
universal,  $ f_{\mathbb{ P }, \mathbb{ Q }} = 0 $ is only possible when $ \mathbb{ P }
	= \mathbb{ Q }$, which proves the first implication of the equivalence. The reverse
implication is proven by noticing that
\begin{equation*} 
	\begin{aligned}
		\mathbb{ P } = \mathbb{ Q } \Longrightarrow
		\nabla_{ h } \mathcal  K( 0, \mathbb{ P }) = 0.
	\end{aligned}
\end{equation*}
\textbf{Metrizing weak convergence}
From the weak continuity of KALE associated with the definiteness of KALE proven above, we have $
\mathbb{ P }_n \rightharpoonup \mathbb{ Q } \Longrightarrow \text{KALE}
(\mathbb{ P }_n \mid \mid \mathbb{ Q })  \to 0 $.
For the converse, assume that $ \text{MMD}(\mathbb{P}_n \mid \mid \mathbb{Q})$ doesn't converge to $0$. Therefore, there exists a subsequence  $n_k$ with $n_k\rightarrow +\infty$ when $k\rightarrow +\infty$ and such that $ \text{MMD}(\mathbb{P}_{n_k} \mid \mid \mathbb{Q}) > c>0$ for some $c>0$.   Fix $ \epsilon > 0 $. We have that:

\begin{equation*} 
	\begin{aligned}
		\text{KALE}(\mathbb{ P }_{n_k} \mid \mid \mathbb{ Q }) & \geq  \mathcal  K( \epsilon \times
		f_{\mathbb{ P }_{n_n}, \mathbb{ Q }} ) = \left \langle \nabla_{ h } \mathcal  K(
		0,
		\mathbb{ P }_{n_k}), \epsilon f_{\mathbb{ P }_{n_k}, \mathbb{ Q }} \right \rangle         
		                                          + \mathcal  O ( \epsilon^2 
		\|f_{\mathbb{ P }_{n_k}, \mathbb{ Q }}  \|)                                                                           \\
		                                          & = \epsilon \|f_{\mathbb{ P }_{n_k},
		\mathbb{ Q }}  \| + \mathcal
		O(\epsilon^2  \|f_{\mathbb{ P }_{n_k},
		\mathbb{ Q }}  \|^2).                                \\
	\end{aligned}
\end{equation*}
Now, recall that $\Vert f_{\mathbb{ P }_{n_k}, \mathbb{ Q }} \Vert =
\text{MMD}(\mathbb{ P }_{n_k} \mid \mid \mathbb{ Q }) \geq  c > 0$, 
implying that for sufficiently low $ \epsilon $, we will have:
		$\text{KALE}(\mathbb{ P }_{n_k} \mid \mid \mathbb{ Q }) > \frac{c\epsilon}{2},\,\,\, \forall k \geq  n_k$.
		Thus, $ \text{KALE}(\mathbb{ P }_n \mid \mid \mathbb{ Q }) $
		does not tend to 0. Hence, by contradiction $ \text{MMD}(\mathbb{P}_n
		\mid \mid \mathbb{Q})$ converges to $0$ which implies that
		$\mathbb{P}_n $ converges weakly to $\mathbb{Q}$ since the MMD
		metrizes weak convergence. This concludes the proof of
		\cref{thm:continuity}.
		\qed

\section{Proof of \cref{prop:kale-asymptotic-mmd}}
\label{proof:kale-asymptotic-mmd}
\subsection{Proof of (i)} 
To prove that KALE converges to the MMD as $ \lambda $ increases, we will show
the following inequalities:
\begin{equation*} 
	\begin{aligned}
		\frac{1}{2}\text{MMD}^2(\mathbb{ P } \mid \mid \mathbb{ Q }) - \mathcal  O\left(\frac{1}{\lambda}\right)
		\leq \text{KALE}(\mathbb{ P } \mid \mid \mathbb{ Q })
		\leq  \frac{1}{2} \text{MMD}^2(\mathbb{ P } \mid \mid \mathbb{
		Q }) + \mathcal  O\left(\frac{1}{\lambda}\right)
	\end{aligned}
\end{equation*}
To prove the right inequality, we recall that $ \mathcal  K(h, \mathbb{ P })
\leq \int_{  }^{  } h \text{d} \mathbb P - \int_{  }^{  } h \text{d} \mathbb Q
- \frac{\lambda}{2} \left \| h \right \|^2$, which holds by
convexity of the exponential. The right-hand side is maximized for $
{h}^{\star} = \frac{ f_{\mathbb{ P }, \mathbb{ Q}} }{ \lambda } $ and equals $
\frac{ \text{MMD}^2(\mathbb{ P } \mid \mid \mathbb{ Q })}{2 \lambda} $.
Consequently, we have: $ \text{KALE}(\mathbb{ P } \mid \mid \mathbb{ Q }) \leq \frac{
1 + \lambda }{ 2 \lambda} \text{MMD}^2( \mathbb{ P } \mid \mid \mathbb{ Q }) $.

To prove the left inequality, we use \cref{lemma:bounded-kale-witness-function}
which allows to control the discrepancy between the KALE and the MMD. Indeed, we have:
$ h(x) = \left \langle h, k(x, \cdot) \right \rangle \leq \sqrt {K}  \left \|h \right \| = \frac{ 4 K }{ \lambda } $.
The following Taylor-Lagrange inequality holds, uniformly for all $ x $:
\begin{equation*} 
	\begin{aligned}
	  e^{h(x)} & \leq 1 + h(x) + \frac{ e^{ \frac{ 4K }{ \lambda }} 16 K^2 }{ 2 \lambda^2},
	\end{aligned}
\end{equation*}
which gives a lower bound of $ \mathcal  K( {h}^{\star}, \mathbb{ P }) $:
\begin{equation*} 
	\begin{aligned}
	  \left ( 1 + \lambda \right ) \mathcal  K( h, \mathbb{ P }) \geq (1 + \lambda)
	  \left ( \int_{  }^{  }h \text{d} \mathbb P - \int_{  }^{  } h \text{d} \mathbb
	  Q - \frac{\lambda}{2} \left \|h \right \|^2  - \frac{ 8 K^2 e^{ \frac{ 4K }{ \lambda }} }{ \lambda^2}\right ).
	\end{aligned}
\end{equation*}
Remark that the r.h.s is maximized for $ h_1 = f_{\mathbb{ P }, \mathbb{
  Q}}/{ \lambda } $. Because $ {h}^{\star} $ maximizes the l.h.s, we have:
\begin{equation*} 
	\begin{aligned}
	  \left ( 1 + \lambda \right ) \mathcal  K( {h}^{\star}, \mathbb{ P }) \geq (1 + \lambda) \mathcal  K(h_1, \mathbb{ P }) \geq \frac{ 1 + \lambda }{
			2 \lambda} \text{MMD}^2( \mathbb{ P } \mid \mid \mathbb{ Q }) - \frac{ 8 K^2 e^{ \frac{ 4K }{ \lambda }}(1 +
			\lambda) }{ \lambda^2 }.
	\end{aligned}
\end{equation*}
The two initial inequalities are verified, and taking them to the limit $ \lambda  \to \infty $ concludes the proof.
\qed

\subsection{Proof of (ii)}
(ii) was proved in \cite{arbel_generalized_2020_1} as part of (Theorem 7). For completeness, we recall the elements of the proof. 
Let us highlight the dependency of $ {h}^{\star} = \arg \max_{ h } \mathcal  K(h, \mathbb{ P }) $ in $ \lambda $ (see \cref{eq:K-appendix}) by noting it ${h}^{\star}_{\lambda} ({=} {h}^{\star}) $, for $ \lambda \geq  0 $.
Because we assume that $ \log \frac{ \text{d} \mathbb P }{ \text{d} \mathbb Q }
\in \mathcal  H $, we have:
\begin{equation*} 
\begin{aligned}
  {h}^{\star}_0 = \log \frac{ \text{d} \mathbb P }{ \text{d}\mathbb Q }, \quad 
  \text{KL}(\mathbb{ P } \mid \mid \mathbb{ Q }) = 1 + \int_{  }^{  } {h}^{\star}_0 \text{d} \mathbb P - \int_{  }^{  } e^{ {h}^{\star}_0} \text{d} \mathbb Q.
\end{aligned}
\end{equation*}

Thus, we have
\begin{equation*} 
\begin{aligned}
  \Big \lvert  \frac{ \text{KALE}(\mathbb{ P } \mid \mid \mathbb{ Q }) }{ (1 + \lambda) } - \text{KL}(\mathbb{ P } \mid \mid \mathbb{ Q }) \Big \rvert  &= \Big \lvert 1 + \int_{  }^{  } {h}^{\star}_{\lambda} \text{d} \mathbb P - \int_{  }^{  } e^{
  {h}^{\star}_{\lambda}} \text{d} \mathbb Q - \frac{\lambda}{2} \left \|
  {h}^{\star}_{\lambda} \right \|^2_{ \mathcal  H} - \text{KL}(\mathbb{ P } \mid \mid \mathbb{ Q }) \Big \rvert \\
															      &=\Big \lvert \int_{  }^{  }
  \left ( {h}^{\star}_{\lambda} - {h}^{\star}_{0} \right
    )\text{d} \mathbb P - \int_{  }^{  } e^{h_0}(1 - e^{({h}^{\star}_{\lambda} -
  {h}^{\star}_0)})\text{d} \mathbb Q + \frac{\lambda}{2} \left \| {h}^{\star}_{\lambda} \right \|^2 \Big \rvert \\
															      &\leq \Big \lvert \int_{  }^{  }
  \left ( {h}^{\star}_{\lambda} - {h}^{\star}_{0} \right
    )\text{d} \mathbb P \Big \rvert + \Big \lvert \int_{  }^{  } e^{h_0}(1 - e^{({h}^{\star}_{\lambda} -
{h}^{\star}_0)})\text{d} \mathbb Q \Big \rvert + \Big \lvert \frac{\lambda}{2} \left \| {h}^{\star}_{\lambda} \right \|^2 \Big \rvert
\end{aligned}
\end{equation*}

To bound the last term, we note that
\begin{equation} 
\label{eq:h-lambda-vs-h0}
\begin{aligned}
\left \| {h}^{\star}_{\lambda} \right \| \leq \left \| {h}^{\star}_{0} \right \|.
\end{aligned}
\end{equation}
Otherwise, by optimality of $ {h}^{\star}_0 $, we have:
\begin{equation*} 
\begin{aligned}
\int_{   }^{  } {h}^{\star}_{\lambda} \text{d} \mathbb P - \int_{  }^{  } e^{ {h}^{\star}_{\lambda}} \text{d} \mathbb Q \leq \int_{  }^{  } {h}^{\star}_{0} \text{d} \mathbb P - \int_{   }^{  } e^{ {h}^{\star}_{0}} \text{d} \mathbb Q \\
\Longrightarrow
\int_{   }^{  } {h}^{\star}_{\lambda} \text{d} \mathbb P - \int_{  }^{  } e^{ {h}^{\star}_{\lambda}} \text{d} \mathbb Q  - \frac{\lambda}{2} \left \| {h}^{\star} \right \|^2_{\mathcal  H}\leq \int_{  }^{  } {h}^{\star}_{0} \text{d} \mathbb P - \int_{   }^{  } e^{ {h}^{\star}_{0}} \text{d} \mathbb Q  - \frac{\lambda}{2} \left \| {h}^{\star}_{0} \right \|^2_{\mathcal  H},\\
\end{aligned}
\end{equation*}
contradicting the optimality of $ {h}^{\star}_{\lambda} $.
As a consequence,  we have that $ \lim_{ \lambda  \to 0 }\frac{\lambda}{2}
\| {h}^{\star}_{\lambda} \|^2 ( \leq \frac{\lambda}{2}
\| {h}^{\star}_0 \|^2) = 0 $.
To bound the first two terms, we use \cite{arbel_generalized_2020_1} (Lemma 11), ensuring that:
\begin{equation} \label{eq:hlambda-h0}
\begin{aligned}
\lim_{  \lambda  \to 0 }\left \| {h}^{\star}_{\lambda} - {h}^{\star}_{0} \right
\| = 0.
\end{aligned}
\end{equation}
As a consequence:
\begin{itemize}
  \item For all $ x \in \Rd$, $ \lim_{  \lambda  \to 0
    }{h}^{\star}_{\lambda}(x) - {h}^{\star}_0(x) = 0 $.
  \item $  {h}^{\star}_{\lambda} $ is a bounded function.
\end{itemize}
We conclude that the first two terms tend to 0 as $ \lambda  \to 0 $
by the dominated convergence theorem.
We thus have:
$\lim_{ \lambda  \to 0 }\Big \lvert \text{KALE}(\mathbb{ P } \mid \mid \mathbb{ Q }) - \text{KL}(\mathbb{ P } \mid \mid \mathbb{ Q }) \Big \rvert = 0$.
\qed

\section{Proof of \cref{prop:KALE-gf}}\label{proof:KALE-diff}
As explained in the introduction, the Wasserstein gradient flow of the KALE does
not have a known expression, other than the abstract one given by
\cref{def:formal-gradient-flow}, applied to the KALE. Relying on the formalism
introduced in \cite{ambrosio2008gradient}, we first show that KALE's gradient
flow admits the ``traditional'' form:
\begin{equation*} 
\begin{aligned}
  \partial_t \mathbb{ P }_t - \text{div} \left ( \mathbb{ P }_t \nabla_{  } \frac{ \delta \text{KALE} }{ \delta \mathbb{ P } } \right ) = 0
\end{aligned}
\end{equation*}
We start by giving an expression of the \emph{first variation} of the $ \text{KALE} $. This
proof is the first in the appendix that involves an implicit function theorem
argument, which we justify at length. For brevity, the same justifications will
be skipped in other proofs relying on small variations around the same implicit
function theorem argument.
\begin{lemma}[Differentiability of KALE]
    \label{lemma:KALE-diff}$ \quad $ \newline
    Let $ \mathbb{ Q } \in \mathcal  P_2(\mathbb{R}^d) $, and $ \lambda > 0 $. Then, the function $ \mathbb{ P } \in \mathcal  P_2(\mathbb{R}^d)  \longmapsto \text{KALE}(\mathbb{ P } \mid \mid \mathbb{
    Q }) $ is G\^{a}teaux differentiable w.r.t. $ \mathbb{ P } $ and admits the following
    first variation:
    \begin{equation*}
    \begin{aligned}
      \frac{ \delta \text{KALE}(\mathbb{ P } \mid \mid \mathbb{ Q }) }{ \delta \mathbb{
      P} }  = \left ( 1 + \lambda \right ){h}^{\star}, \quad {h}^{\star}  = \arg \max_{ h \in \mathcal  H} \mathcal  K(h, \mathbb{ P }).
    \end{aligned}
    \end{equation*}
\end{lemma}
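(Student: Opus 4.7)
My plan is to treat $\text{KALE}$ as the value function of a $\lambda$-strongly concave parameterized optimization problem and apply a Milgrom--Segal style envelope argument. The only $\mathbb P$-dependence in $\mathcal K(h,\mathbb P)$ is through the linear term $\int h\, d\mathbb P$, so the envelope principle will make the contribution of the implicit dependence of the optimizer $h^\star$ on $\mathbb P$ vanish, and only the explicit derivative will survive. Concretely, I expect to show that the Gâteaux derivative of $\mathbb P\mapsto\mathcal K(h^\star(\mathbb P),\mathbb P)$ in the direction $\chi$ equals $\int h^\star(\mathbb P)\, d\chi$, from which the announced first variation $(1+\lambda)h^\star$ follows immediately.

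First I would verify, using the $\lambda$-strong concavity of $h\mapsto \mathcal K(h,\mathbb P)$ and its Fréchet differentiability (already established in \cref{lemma:link-kale-mmd-witness-function}), that for every $\mathbb P$ there is a unique maximizer $h^\star(\mathbb P)\in\mathcal H$, characterized by $\nabla_h\mathcal K(h^\star,\mathbb P)=0$. Next, for a signed measure $\chi$ with $\int d\chi=0$ and $|\epsilon|$ small enough that $\mathbb P_\epsilon := \mathbb P+\epsilon\chi$ remains in $\mathcal P_2(\mathbb R^d)$, I would subtract the two first-order conditions at $\mathbb P$ and $\mathbb P_\epsilon$, pair the resulting identity with $h^\star(\mathbb P_\epsilon)-h^\star(\mathbb P)$, and combine strong concavity with Cauchy--Schwarz and \cref{assump:bounded-kernel} to conclude $\|h^\star(\mathbb P_\epsilon)-h^\star(\mathbb P)\|_{\mathcal H}=O(|\epsilon|)$. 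Finally, from the two optimality inequalities and the identity $\mathcal K(h,\mathbb P_\epsilon)-\mathcal K(h,\mathbb P)=\epsilon\int h\, d\chi$ I obtain the two-sided envelope bound
\begin{equation*}
\epsilon\int h^\star(\mathbb P)\, d\chi \;\leq\; \mathcal K(h^\star(\mathbb P_\epsilon),\mathbb P_\epsilon)-\mathcal K(h^\star(\mathbb P),\mathbb P) \;\leq\; \epsilon\int h^\star(\mathbb P_\epsilon)\, d\chi,
\end{equation*}
divide by $\epsilon$ (handling both signs), and let $\epsilon\to 0$, using the Lipschitz estimate from the previous step to identify the limit.

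The main obstacle is the continuity step: checking that $\mathbb P+\epsilon\chi$ is a positive measure for small $\epsilon$ (which amounts to a standard admissibility condition on the direction $\chi$) and then upgrading this to a quantitative modulus of continuity of $h^\star$. The delicate piece is that subtracting the two gradient identities produces the RKHS element $\mu_\chi=\int k(x,\cdot)\, d\chi(x)$ together with a nonlinear term of the form $\int k(x,\cdot)(e^{h^\star(\mathbb P_\epsilon)}-e^{h^\star(\mathbb P)})\, d\mathbb Q$; the former is $O(|\epsilon|)$ in RKHS norm by \cref{assump:bounded-kernel}, while the latter can be absorbed on the left-hand side through strong concavity and the uniform RKHS boundedness of the witness functions provided by \cref{lemma:bounded-kale-witness-function}. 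Once this Lipschitz bound is established, the envelope argument is elementary, the linear structure of the $\int h\, d\mathbb P$ term makes the explicit partial derivative trivial to identify, and multiplication by the $(1+\lambda)$ prefactor from \cref{def:kale} yields the stated first variation.
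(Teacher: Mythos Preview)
Your proposal is correct and follows a genuinely different route from the paper. The paper proceeds via the implicit function theorem on Banach spaces: it writes the first-order condition $\nabla_h\mathcal K(h^\star_t,\mathbb P+t\chi)=0$, verifies that the $h$-differential $-\boldsymbol L(h)-\lambda I$ is invertible, and concludes that $t\mapsto h^\star_t$ is differentiable; the first variation then drops out of the chain rule because $\nabla_h\mathcal K(h^\star,\mathbb P)=0$. Your argument instead uses a Milgrom--Segal envelope sandwich together with a direct Lipschitz bound on $\epsilon\mapsto h^\star(\mathbb P_\epsilon)$ obtained from strong concavity (in fact the nonlinear exponential term has the favorable sign, since $(e^a-e^b)(a-b)\geq 0$, so it can simply be dropped rather than ``absorbed''). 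Your route is more elementary in that it only requires \emph{continuity} of the optimizer, not differentiability, and avoids the Banach-space IFT entirely. On the other hand, the paper's IFT computation yields the explicit formula $\frac{d h^\star_t}{dt}=(\boldsymbol L(h^\star_t)+\lambda I)^{-1}\mu_\chi$, which is reused verbatim in the subsequent proofs of the Fr\'echet subdifferential characterization and of the $-M$-geodesic semiconvexity of KALE; your approach would need a separate argument for those later steps.
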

\begin{proof}
  Informally, computing the first variation of $ \text{KALE} $ w.r.t $ \mathbb{
  P } $ can be done using a chain rule argument:
  \begin{equation*} 
  \begin{aligned}
    \frac{ \delta \text{KALE} }{ \delta \mathbb{ P } } = \frac{ \delta 
    \text{KALE}( {h}^{\star}(\mathbb{ P }), \mathbb{ P }) }{  \delta \mathbb{ P }} = \frac{
  \partial \text{KALE} }{ \partial \mathbb{ P } } + \frac{ \partial  \text{KALE}
}{ \partial h }\Big \lvert_{ {h}^{\star}} \frac{ \partial {h}^{\star} }{ \partial \mathbb{
P } } = \frac{ \partial \text{KALE} }{ \partial \mathbb{ P } }
  \end{aligned}
  \end{equation*}
  where the second term is 0 given that $ {h}^{\star} $ is defined as $ \max_{
  h \in \mathcal  H} \mathcal  K(h, \mathbb{ P }) $. To make this discussion rigorous, we need
  to make sure that ``$ \smash{\frac{ \partial {h}^{\star} }{ \partial \mathbb{ P }}
  } $'' (formally, the G\^{a}teaux derivative of the map $ \smash{\mathbb{ P }   \longmapsto {h}^{\star}(\mathbb{ P }) }$)
  exists.

  We recall that given two topologically convex vector spaces $ X $ and  $ Y $,
  and a function $ f: X  \to Y $, the G\^{a}teaux derivative of $ f $ at $ x
  $ in the direction $ \chi  \in X$ is defined as:
  \begin{equation*} 
  \begin{aligned}
    D f(x; \chi) = \lim_{ t  \to 0 } \frac{ f(x + t \chi) - f(x) }{ t }.
  \end{aligned}
  \end{equation*}
    A complete argument for the differentiability of both $ \mathcal  K $
  and $ \mathbb{ P }  \longmapsto {h}^{\star}(\mathbb{ P }) $ would require
  augmenting the domains of functionals of interest from $ \mathcal  P_2(\Rd)
  $ (which is not a vector space) by the vector space of signed radon measures $
  \mathcal M(\Rd) $.  We circumvent this additional
  step by simply considering ``admissible'' directions $ \chi $, such that $
  \int_{  }^{  } d \chi = 0 $. Noting $ {h}^{\star}_t = \arg \max_{ h }
  \mathcal  K(h, \mathbb{ P } + t \chi) $, we know given
  \cref{lemma:link-kale-mmd-witness-function} that $ {h}^{\star}_t $ verifies:
  \begin{equation*} 
	  \begin{aligned}
	    \mathcal  F_{\chi}( {h}^{\star}_t, t) \overset{\Delta}{=} \nabla_{ h } \mathcal  K( {h}^{\star}_t, \mathbb{ P }) = \int\limits_{  }^{
	    } k(x, \cdot)\text{d} (\mathbb P(x) + t \chi(x))
		  - \int\limits_{  }^{  }k(x, \cdot) \exp \left ( {h}^{\star}_t(x)\right ) \text{d} \mathbb Q(x)
		  -  \lambda {h}^{\star}_t = 0.
	  \end{aligned}
  \end{equation*}

  Thus, $ {h}^{\star}_t $ is defined \emph{implicitly} through $
  \mathcal  K $'s optimality at $ {h}^{\star}_t $. To study the differentiability of the mapping $ t  \longmapsto {h}^{\star}_t $, it is natural to rely on an implicit function theorem
  argument on $ \mathcal  F: \left ( \mathcal  H \times \mathbb{R} \right )
  \to \mathcal  H$.  Similarly to implicit function theorems on euclidean
  spaces, we will need to invert $ D_h \mathcal  F_{\chi}(h, t)$, the
  (Fr\'{e}chet) differential of $ \mathcal  F $ w.r.t $ h $. This differential
  is given by:
\begin{equation*} 
	\begin{aligned}
		D_h \mathcal  F_{\chi}(h, t) & = - \underbrace{\int\limits_{  }^{  } k(x, \cdot) \otimes k(x, \cdot)
		e^{h(x)} \text{d} \mathbb Q(x)}\limits_{ \overset{\Delta}{=} \boldsymbol{L}(h)}
		- \lambda \text{I}    ,                                                                             \\
	\end{aligned}
\end{equation*}
which is an invertible operator on $ \mathcal  H $, given that $ \boldsymbol{L}(h) $ is self-adjoint and positive for all $ h $.
We can now apply an implicit function theorem on Banach spaces \cite{lang_fundamentals_1999}
(Theorem 5.9): For all $ \chi $, there exists a neighborhood of $ 0
$,   $ \mathcal  V(0) $, such that the mapping $ t \in \mathcal  V(0)  \longmapsto {h}^{\star}_t $
is differentiable. The derivative of $ {h}_t^{\star} $ at 0 is then the
G\^{a}teaux derivative of $ {h}^{\star}(\mathbb{ P })  $ in the direction $ \chi $:
\begin{equation*} 
\begin{aligned}
  D_{\mathbb{ P }} {h}^{\star}(\mathbb{ P }; \chi) = \int_{  }^{  } (\boldsymbol{L}( {h}^{\star}) + \lambda I )^{-1} k(x, \cdot) d \chi.
\end{aligned}
\end{equation*}
To conclude on $ \text{KALE} $'s first variation, we can rigorously write, using
the chain rule of G\^{a}teaux derivatives,
\begin{equation*} \label{eq1}
	\begin{aligned}
		D_{\mathbb{ P }} \text{KALE}(\mathbb{ P } \mid \mid \mathbb{ Q
		  }; \chi) & = \int\limits_{  }^{  } {h}^{\star}(\mathbb{
			P })(x) d \chi(x) +                                                                               
		                                                    \langle \underbrace{ \nabla_{ h }
			\mathcal  K( {h}^{\star}(\mathbb{ P }), \mathbb{ P })
		      }\limits_{=0},  D {h}^{\star}(\mathbb{ P };
		      \chi) \rangle_{\mathcal  H}                                    
		                                                    = \int\limits_{  }^{  }
		{h}^{\star}(\mathbb{ P }) d\chi                                                                   \\
	\end{aligned}
\end{equation*}
which concludes the proof.
\end{proof}

We now show that the KALE admits strong Fr\'{e}chet subgradients, and that they are
equal to the gradient of KALE's first variation.

\begin{lemma}
  A coupling  $ \gamma $ of the form $ (I \times v)_{\#} \mathbb{ P }_0 $
  belongs to the extended (strong) Fr\'{e}chet subdifferential of $ \text{KALE} $ at $
  \mathbb{ P } = \mathbb{ P }_0 $ if and only if $ v = \nabla_{  }   \frac{ \delta \text{KALE} }{ \delta
  \mathbb{ P } } = (1 + \lambda) \nabla_{  }{h}^{\star}_0 \quad \mathbb{ P
}_0\text{-a.e}$, where $ {h}^{\star}_0 = \arg \max_{
h } \mathcal K(h, \mathbb{ P }_0) $ is the first variation of $ \text{KALE} $
at $ \mathbb{ P } = \mathbb{ P }_0 $.
\end{lemma}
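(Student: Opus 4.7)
My plan rests on two readily available ingredients: (i) $\text{KALE}(\cdot \mid\mid \mathbb{Q})$ is convex in $\mathbb{P}$ in the usual linear sense, since \cref{def:kale} presents it as a supremum over $h \in \mathcal{H}$ of functionals that are affine in $\mathbb{P}$; and (ii) \cref{lemma:KALE-diff} identifies the first variation with $(1+\lambda) h^\star$, while \cref{assump:smooth-kernel} applied to the RKHS element $h^\star_0$ yields the uniform Hessian bound $\|\boldsymbol{H}(h^\star_0(x))\|_{\mathrm{op}} \leq \sqrt{K_{2d}}\,\|h^\star_0\|_{\mathcal{H}}$, with $\|h^\star_0\|_{\mathcal{H}}$ itself controlled by standard bounds on the KALE witness function.

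\paragraph{``If'' direction.} Assume $v = (1+\lambda)\nabla h^\star_0$ holds $\mathbb{P}_0$-a.e. Fix $\mathbb{P}_1 \in \mathcal{P}_2(\mathbb{R}^d)$ and any $\boldsymbol{\mu} \in \Gamma(\gamma, \mathbb{P}_1)$, so that $x_1$ has marginal $\mathbb{P}_0$, $(x_1,x_2)$ has joint law $\gamma$, and $x_3$ has marginal $\mathbb{P}_1$. Evaluating the KALE objective at the suboptimal test $h^\star_0$ yields, by linear convexity,
\[
\text{KALE}(\mathbb{P}_1 \mid\mid \mathbb{Q}) - \text{KALE}(\mathbb{P}_0 \mid\mid \mathbb{Q}) \;\geq\; (1+\lambda)\int \bigl[h^\star_0(x_3) - h^\star_0(x_1)\bigr]\, d\boldsymbol{\mu}.
\]
A second-order Taylor expansion of $h^\star_0$ at $x_1$, controlled by the Hessian bound above, rewrites the right-hand side as $\int \langle (1+\lambda)\nabla h^\star_0(x_1), x_3-x_1\rangle d\boldsymbol{\mu}$ plus a remainder of order $W_{2,\boldsymbol{\mu}}^2$. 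Substituting $x_2 = v(x_1) = (1+\lambda)\nabla h^\star_0(x_1)$ in the linear term recovers $\int \langle x_2, x_3-x_1\rangle d\boldsymbol{\mu}$, and the quadratic remainder is $o(W_{2,\boldsymbol{\mu}})$, matching \cref{def:extended-frechet-subdifferential}.

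\paragraph{``Only if'' direction.} Conversely, suppose $\gamma = (I \times v)_\# \mathbb{P}_0$ belongs to $\boldsymbol{\partial}\text{KALE}(\mathbb{P}_0)$. I would test the defining inequality along the curve $\mathbb{P}_\epsilon = (I + \epsilon w)_\# \mathbb{P}_0$ for an arbitrary smooth, compactly supported vector field $w$, paired with the natural three-plan $\boldsymbol{\mu}_\epsilon = (\mathrm{id}, v, \mathrm{id} + \epsilon w)_\# \mathbb{P}_0 \in \Gamma(\gamma, \mathbb{P}_\epsilon)$. On the left, \cref{lemma:KALE-diff} and the chain rule give
\[
\text{KALE}(\mathbb{P}_\epsilon \mid\mid \mathbb{Q}) - \text{KALE}(\mathbb{P}_0 \mid\mid \mathbb{Q}) \;=\; \epsilon(1+\lambda)\int \nabla h^\star_0 \cdot w\, d\mathbb{P}_0 + o(\epsilon);
\]
on the right, $\int \langle x_2, x_3-x_1\rangle d\boldsymbol{\mu}_\epsilon = \epsilon \int \langle v, w\rangle d\mathbb{P}_0$, while $W_{2,\boldsymbol{\mu}_\epsilon} = O(\epsilon)$. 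Dividing by $\epsilon$, letting $\epsilon \to 0^+$, and repeating with $-w$ in place of $w$ produces the equality $\int \langle v - (1+\lambda)\nabla h^\star_0, w\rangle d\mathbb{P}_0 = 0$ for every such $w$, forcing $v = (1+\lambda)\nabla h^\star_0$ $\mathbb{P}_0$-a.e.

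\paragraph{Main obstacle.} The delicate point is controlling the Taylor remainder in the ``if'' direction uniformly across arbitrary admissible couplings $\boldsymbol{\mu}$, so that $C W_{2,\boldsymbol{\mu}}^2$ is genuinely $o(W_{2,\boldsymbol{\mu}})$. This is exactly where the combination of \cref{assump:smooth-kernel} and the global RKHS norm bound on $h^\star_0$ intervenes, as together they supply a pointwise Hessian bound on $h^\star_0$ that is independent of $x$ and of $\boldsymbol{\mu}$.
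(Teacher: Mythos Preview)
Your proposal is correct, and for the ``if'' direction it is genuinely different from---and simpler than---the paper's argument. The paper interpolates via $\mathbb{P}_t = ((1-t)\pi^1 + t\pi^2)_\# \tilde\gamma$, sets $g(t) = \text{KALE}(\mathbb{P}_t \mid\mid \mathbb{Q})$, and bounds $g''(t)$ by tracking both the Hessian of $h^\star_t$ \emph{and} the variation $\tfrac{d h^\star_t}{dt}$; the latter requires an implicit function theorem step (inverting $\boldsymbol{L}(h^\star_t) + \lambda I$) and produces the extra $K_{1d}/\lambda$ term. You bypass all of this by exploiting the linear convexity of $\text{KALE}$ in $\mathbb{P}$: evaluating the $\mathbb{P}_1$-objective at the suboptimal $h^\star_0$ immediately gives $\text{KALE}(\mathbb{P}_1) - \text{KALE}(\mathbb{P}_0) \geq (1+\lambda)\int[h^\star_0(x_3) - h^\star_0(x_1)]\,d\boldsymbol{\mu}$, and then only the \emph{fixed} function $h^\star_0$ needs to be Taylor-expanded, so the Hessian bound from \cref{assump:smooth-kernel} alone suffices. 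This is cleaner and yields a sharper remainder constant. The paper's interpolation approach, by contrast, is what one would need for a functional that is \emph{not} linearly convex in $\mathbb{P}$, so it is more general but unnecessarily heavy here.

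For the ``only if'' direction the two arguments are essentially the same: the paper picks the specific adversarial displacement $w = \tilde v - \nabla h^\star_0$ and argues by contradiction, whereas you test against arbitrary smooth compactly supported $w$ and both signs of $\epsilon$ to conclude by density in $L^2(\mathbb{P}_0)$. One small point to tighten: your appeal to ``\cref{lemma:KALE-diff} and the chain rule'' for the expansion of $\text{KALE}(\mathbb{P}_\epsilon)$ along the pushforward curve is slightly informal, since that lemma treats linear perturbations $\mathbb{P} + t\chi$ rather than $(I+\epsilon w)_\# \mathbb{P}_0$. You can close this either by re-running your convexity trick in both directions (using $h^\star_\epsilon$ suboptimal for $\mathbb{P}_0$ to get the matching upper bound), or by the same $g'(t)$ computation the paper carries out.
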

\begin{proof}
  Using an analogue of \cite[Equation 10.3.13]{ambrosio2008gradient} for the
  extended \emph{strong} Fr\'{e}chet subdifferential, we have that:
  \begin{equation*} 
  \begin{aligned}
    \gamma = (I \times v)_{\#} \mathbb{ P }_0 \in \boldsymbol{\partial} \text{KALE}(\mathbb{ P }_0 \mid \mid \mathbb{ Q }) &\iff  \\
      \text{KALE}(\mathbb{ P }_1 \mid \mid \mathbb{ Q }) - \text{KALE}(\mathbb{
      P }_0 \mid \mid \mathbb{ Q })\geq \int_{  }^{  } (y - x)^{\top} & v(x)
      \text{d} \tilde{\gamma}(x, y) + o (C_2( \tilde{ \gamma }))
  \end{aligned}
  \end{equation*}
  for any  $ \mathbb{ P }_1 \in \mathcal  P_2(\Rd) $, $ \tilde{ \gamma } \in
  \Gamma(\mathbb{ P }_0, \mathbb{ P }_1) $.  Note that without loss of
  generality, we switched the coupling $ \boldsymbol \mu \in \Gamma((I \times
  v)_\# \mathbb{ P }_0, \mathbb{ P }_1) $ present in
  \cref{def:extended-frechet-subdifferential} with a coupling $ \tilde{ \gamma
  } \in \Gamma(\mathbb{ P }_0, \mathbb{ P }_1)$, a switch that is made possible
  because of the specific form of $ \gamma $ considered above, which is the one
  needed in \cref{def:formal-gradient-flow}.
  Our goal is to show that $ (I \times v)_{\#} \mathbb{ P }_0 \in \boldsymbol{\partial} \text{KALE}(\mathbb{ P
  }_0 \mid \mid \mathbb{ Q }) \iff v = (1 + \lambda) \nabla_{  } {h}^{\star}_0 $.
  \newline
  We first show the reverse implication, e.g. $ (I \times (1 + \lambda) \nabla_{
  } {h}^{\star}_0)_{\#}\mathbb{ P }_0 \in \boldsymbol {\partial}\text{KALE}(\mathbb{ P }_0 \mid
  \mid \mathbb{ Q })  $. To do so, we consider the following interpolation
  scheme between $ \mathbb{ P }_0 $ and
  $ \mathbb{ P }_1 $:
  \begin{equation*} 
  \begin{aligned}
    \mathbb{ P }_t = \left ( t \pi^{2} + (1 - t) \pi^{1} \right )_{\#} \tilde{\gamma}.
  \end{aligned}
  \end{equation*}
  And note for each $ \mathbb{ P }_t $, $ {h}^{\star}_t = \arg \max_{ h }
  \mathcal  K(h, \mathbb{ P }_t) $.
  Noting $ g(t) = \text{KALE}(\mathbb{ P }_t \mid \mid \mathbb{ Q }) $, we have:
  \begin{equation*} 
  \begin{aligned}
    g'(t) = (1 + \lambda)\int_{  }^{  } \left ( y  - x\right )^{\top} \nabla_{  }{h}^{\star}_t( ty + (1 -t)x)  \text{d} \tilde{\gamma}(x, y), \quad g''(t)= (1 + \lambda) ((I) + (II))
  \end{aligned}
  \end{equation*}
  where
  \begin{equation*} 
  \begin{aligned}
    (I) &= \int_{  }^{  } \left ( y - x \right )^{\top}  \left ( \textbf{H} {h}^{\star}_t(ty + (1 - t)x) (y - x) \right ) \text{d} \tilde{\gamma}(x, y) \\
  (II) &=  \int_{  }^{  } \left ( y - x \right )^{\top}  \left ( \nabla_{  }   \frac{ \text{d} {h}^{\star}_t }{ \text{d}t }(ty + (1 - t) x) \right ) \text{d} \tilde{\gamma}(x, y)
  \end{aligned}
  \end{equation*}
  (and we exchanged the $ t $-derivative and $ \nabla_{  }$ in (II)). From
  \cref{assump:smooth-kernel} we have that $ \left \| \boldsymbol{H} h \right \|
  \leq \left \| h \right \| \sqrt {K_{2d}} \leq \frac{ 4 \sqrt {K K_{2d}}  }{ \lambda } $,
  implying $(I) \leq \frac{ 4 \sqrt {K K_{2d}}}{ \lambda } C_2^{2}(\tilde{\gamma})$.
  Using an implicit function theorem argument, we have:
  \begin{equation*} 
  \begin{aligned}
    \frac{ \text{d} {h}^{\star}_t }{ \text{d}t } = - ( \boldsymbol{L}( {h}^{\star}_t) + \lambda I)^{-1} (y - x)^{\top} \nabla_{ 1 } k_{ty + (1 - t)x},
  \end{aligned}
  \end{equation*}
  implying
  \begin{equation*} 
  \begin{aligned}
	(II)	&=  \int_{  }^{  } \left \langle  \sum\limits_{ i=1 }^{ d } (y_i - x_i) \partial_i k_{ty + (1 - t)x} , \sum\limits_{ i=1 }^{ d } (y_i - x_i) (\textbf{L}( {h}^{\star}_t) + \lambda I)^{-1} \partial_i k_{ty + (1 - t)x}\right \rangle d \tilde{\gamma}(x, y) \\
	  &\leq \frac{ K_{1d} }{ \lambda } C_2^{2}(\tilde{\gamma})
  \end{aligned}
  \end{equation*}
  where the last line was obtained using the Cauchy-Schwarz inequality on $ \mathcal  H $,
  RKHS norm homogeneity, the $ \frac{1}{\lambda} $-bound on $ \|(L + \lambda
  I)^{-1}\| $, and then the Cauchy-Schwarz inequality on $ \mathbb{R}^d $. Using now  Taylor's
  inequality upper bounding the second derivative of $ g $ between $ t = 0 $
  and  $ t = 1 $, we have that:
  \begin{equation*} 
  \begin{aligned}
    g(1) - g(0) \geq \int_{  }^{  } (y - x)^{\top} \nabla_{  } {h}^{\star}_0(x)
    \text{d} \tilde{\gamma}(x, y) + \mathcal  O(C_2^{2}(\tilde{\gamma})).
  \end{aligned}
  \end{equation*}
    Since $ \mathcal  O(C_2^{2}( \tilde{\gamma})) = o(C_2( \tilde{\gamma})) $, it follows that $ (I \times (1 + \lambda)\nabla_{  } {h}^{\star}_0)_{\#}\mathbb{ P }_0 \in \boldsymbol{\partial}\text{KALE}(\mathbb{ P }_0 \mid \mid \mathbb{ Q }) $. \newline
  To prove the reverse implication,
  assume $ v (\overset{\Delta}{=} (1 + \lambda) \tilde{ v }) \ne (1 + \lambda)\nabla_{  }   {h}^{\star}_0 $. Fix $ u > 0 $, and choose an
  ``adversarial'' $ \mathbb{ P }_{1, u} $ defined as $ \mathbb{ P }_{1, u} = (x
  \longmapsto x  + u(1 + \lambda) ( \tilde{ v }(x) - \nabla_{  } {h}^{\star}_0(x) ))_{\#}\mathbb{ P }_0
  $, with an associated coupling $ \tilde{\gamma} = (x \times (x  \longmapsto x + (1 + \lambda) u( \tilde{ v }(x) -
  \nabla_{  } {h}^{\star}_0(x) ))_{\#}\mathbb{ P }_0 $. We then have, using a
  Taylor inequality \emph{lower bounding} the second derivative of $ g $:
  \begin{equation*} 
  \begin{aligned}
    g(1) - g(0)  - \int_{  }^{  } (y - x)^{\top} (1 + \lambda) \tilde{ v }(x) d \tilde{\gamma}(x, y) &\leq \int_{  }^{  } (y - x)^{\top} (1 + \lambda)(\nabla_{  } {h}^{\star}_0(x) - \tilde{ v }(x))  d \tilde{\gamma}(x, y) \\
												     &  + \mathcal  O(C_2^{2}( \tilde{\gamma})) \\
     &\leq -u (1 + \lambda) \int_{  }^{  }  \left \| \tilde{ v }(x) - \nabla_{  } {h}^{\star}_0(x))\right \|^2  \text{d} \mathbb{ P }_0(x)\\
									     & + \mathcal  O(C_2^{2}( \tilde{ \gamma })).
  \end{aligned}
  \end{equation*}
  In the limit  $ \mathbb{ P }_{1, u} \rightharpoonup \mathbb{ P }_0 $, e.g. $ u  \to 0 $),
  the right-hand side scales in $ u $, which is the same scaling as  $ C_2( \tilde{\gamma}) = (\int_{  }^{
    } \left \| x_1 - x_2 \right \|^2 d \tilde{\gamma}(x_1, x_2))^{1 / 2} = u ( 1 + \lambda) (\int_{  }^{
} \left \|  \tilde{ v }(x) - \nabla_{  } {h}^{\star}_0(x)  \right \|^2 d \mathbb{ P
  }_0(x))^{1 / 2} $.
  Thus, it follows that the inequality:
  \begin{equation*} 
  \begin{aligned}
    g(1) - g(0)  - \int_{  }^{  } (y - x)^{\top} (1 + \lambda)  \tilde{ v }(x) d \tilde{\gamma}(x, y)
    &\geq  o(C_2( \tilde{\gamma})) \\
  \end{aligned}
  \end{equation*}
  cannot be verified unless $ \tilde{ v } = \nabla_{  } {h}^{\star}_0,\,\,\mathbb{ P }_0 $a-e.
\end{proof}
We are now ready to make the following claim:
\begin{proposition}[KALE's gradient flow]
  The Wasserstein-2 KALE's gradient flow of KALE on $ \mathcal  P_2(\mathbb{R}^d) $ follows:
  \begin{equation*} 
  \begin{aligned}
    \partial_t \mathbb{ P }_t - \text{div}\left(\mathbb{ P }_t \nabla_{  }  \frac{ \delta \text{KALE} }{ \partial \mathbb{ P } }\right) = 0
  \end{aligned}
  \end{equation*}
\end{proposition}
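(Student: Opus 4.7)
My plan is to assemble the claim directly from the two lemmas that immediately precede this proposition, plus the abstract definition of a Wasserstein-2 gradient flow (\cref{def:formal-gradient-flow}). By that definition, an absolutely continuous curve $t \mapsto \mathbb{P}_t \in \mathcal{P}_2(\mathbb{R}^d)$ is a Wasserstein-2 gradient flow of $\text{KALE}(\cdot \mid \mid \mathbb{Q})$ iff it satisfies a continuity equation $\partial_t \mathbb{P}_t + \text{div}(\mathbb{P}_t v_t) = 0$ for some velocity field $v_t$ such that the transport plan $(I \times (-v_t))_\# \mathbb{P}_t$ belongs to the strong extended Fr\'echet subdifferential $\boldsymbol{\partial}\text{KALE}(\mathbb{P}_t \mid \mid \mathbb{Q})$.

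Next, I would invoke the preceding subdifferential lemma, which characterizes exactly which monotone couplings lie in $\boldsymbol{\partial}\text{KALE}$: $(I \times u)_\# \mathbb{P}_t$ belongs to $\boldsymbol{\partial}\text{KALE}(\mathbb{P}_t \mid \mid \mathbb{Q})$ iff $u = (1+\lambda)\nabla h^\star_t$ in the $\mathbb{P}_t$-a.e.\ sense, where $h^\star_t$ is the unique maximizer of $h \mapsto \mathcal{K}(h, \mathbb{P}_t)$. Setting $u = -v_t$ pins down $v_t = -(1+\lambda)\nabla h^\star_t$ almost everywhere; plugging this back into the continuity equation rewrites it as
\[
\partial_t \mathbb{P}_t - \text{div}\bigl(\mathbb{P}_t (1+\lambda) \nabla h^\star_t\bigr) = 0.
\]
Finally, \cref{lemma:KALE-diff} identifies $(1+\lambda) h^\star_t$ as the first variation $\frac{\delta \text{KALE}}{\delta \mathbb{P}}(\mathbb{P}_t \mid \mid \mathbb{Q})$, so the displayed equation is exactly the classical form asserted in the proposition.

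This final assembly step is essentially bookkeeping; the substantive content has already been handled by the two preceding lemmas. The real obstacle was therefore resolved in the reverse direction of the subdifferential lemma, where an ``adversarial'' perturbation $\mathbb{P}_{1,u}$ is introduced to pin down the velocity field uniquely up to a $\mathbb{P}_t$-null set. That step crucially exploits both the boundedness of the Hessian of $h^\star$ via \cref{assump:smooth-kernel} (yielding $\|\boldsymbol{H} h^\star\| \le \sqrt{K_{2d}}\,\|h^\star\|$) and the invertibility of $\boldsymbol{L}(h^\star) + \lambda I$ used in the implicit function theorem that gives well-posedness of the map $t \mapsto h^\star_t$. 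Beyond these, no additional ingredient is required to conclude.
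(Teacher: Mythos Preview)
Your proposal is correct and mirrors the paper's own proof, which simply states that the result is a direct application of \cite[Definition 11.1.1]{ambrosio2008gradient} (i.e., \cref{def:formal-gradient-flow}) together with the characterization of KALE's strong subdifferential of the form $(I \times v)_{\#}\mathbb{P}$ established in the preceding lemma. You have merely unpacked that one-line argument, additionally invoking \cref{lemma:KALE-diff} to identify $(1+\lambda)h^\star_t$ with the first variation so that the continuity equation takes the advertised form.
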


\begin{proof}
  This is a direct application of \cite[Definition
  11.1.1]{ambrosio2008gradient} using the expression of KALE's strong
  subdifferential of the form $ \left ( i \times v \right )_{\#}\mathbb{ P } $.
\end{proof}

Now that we identified the expression of the KALE gradient flow, we will show
that the KALE gradient flow admits a unique solution. 
To prove that the KALE gradient flow admits a unique solution is to prove that
KALE is $ -M $-semiconvex, for some $ M > 0 $.

\begin{lemma}\label{lemma:appendix-kale-displacement-convexity}
	$ \mathbb{ P }   \longmapsto \text{KALE}(\mathbb{ P } \mid \mid
	\mathbb{ Q }) $ is $ -\frac{ K_{1d} + 4 \sqrt {K K_{2d}}  }{ \lambda }$-geodesically convex.
\end{lemma}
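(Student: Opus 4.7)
The plan is to read off the result from the Taylor expansion already assembled in the proof of the previous lemma: geodesic semiconvexity of a functional $\mathcal{F}$ is equivalent to a uniform lower bound $g''(t) \geq -2M\,W_2^2(\mathbb{P}_0,\mathbb{P}_1)$ for $g(t) = \mathcal{F}(\mathbb{P}_t)$ along any constant-speed geodesic, so it suffices to bound $|g''(t)|$ by $C\cdot W_2^2(\mathbb{P}_0,\mathbb{P}_1)$ with the appropriate constant.

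Concretely, I would fix an optimal coupling $\gamma \in \Gamma_o(\mathbb{P}_0,\mathbb{P}_1)$, set $\mathbb{P}_t = ((1-t)\pi^1 + t\pi^2)_\#\gamma$, and define $g(t) = \text{KALE}(\mathbb{P}_t \mid\mid \mathbb{Q})$. By \cref{lemma:KALE-diff}, the first variation of KALE along $\mathbb{P}_t$ is $(1+\lambda)h^\star_t$, and differentiating $\mathbb{P}_t$ in the transport direction gives
\begin{equation*}
g'(t) = (1+\lambda)\int (y-x)^\top \nabla h^\star_t(ty + (1-t)x)\,\mathrm{d}\gamma(x,y).
\end{equation*}
Differentiating once more splits $g''(t)$ into exactly the two terms $(I)$ and $(II)$ that appear in the proof of the preceding lemma: a Hessian term $(I)$ involving $\mathbf{H} h^\star_t$, and an ``implicit'' term $(II)$ involving $\frac{d h^\star_t}{dt}$, the latter being computable via an implicit function theorem argument on $\mathcal{F}_\chi(h,t) = \nabla_h \mathcal{K}(h,\mathbb{P}_t) = 0$ exactly as in that proof.

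The two bounds already established, $|(I)| \leq \frac{4\sqrt{KK_{2d}}}{\lambda} C_2^2(\gamma)$ (from \cref{assump:smooth-kernel} and $\|h^\star_t\| \leq 4\sqrt{K}/\lambda$) and $|(II)| \leq \frac{K_{1d}}{\lambda} C_2^2(\gamma)$ (from inverting $\mathbf{L}(h^\star_t) + \lambda I$ and Cauchy–Schwarz), then yield
\begin{equation*}
g''(t) \geq -\frac{(1+\lambda)(K_{1d} + 4\sqrt{KK_{2d}})}{\lambda} W_2^2(\mathbb{P}_0,\mathbb{P}_1),
\end{equation*}
since $C_2(\gamma) = W_2(\mathbb{P}_0,\mathbb{P}_1)$ by optimality of $\gamma$. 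Integrating this inequality twice in $t$ against the standard second-difference identity gives the desired geodesic semiconvexity inequality with the constant stated.

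The main obstacle is really notational/regularity bookkeeping rather than a substantive difficulty: one has to justify that $t \mapsto h^\star_t$ is $C^1$ along the whole geodesic (not just near a point), that the differentiation under the integral and the exchange $\partial_t \nabla = \nabla \partial_t$ in the $(II)$ term are legitimate, and that the resulting operator-norm bound $\|\frac{d h^\star_t}{dt}\|_{\mathcal{H}} \lesssim \|y-x\|/\lambda$ holds uniformly in $t \in [0,1]$. All of these follow from the same implicit function theorem argument (on Banach spaces, following \cite{lang_fundamentals_1999}) that was already used in \cref{lemma:KALE-diff} and in the previous proof, together with the invertibility of $\mathbf{L}(h^\star_t) + \lambda I$ with operator norm bounded by $\lambda^{-1}$, which is uniform in $t$.
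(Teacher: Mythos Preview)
Your proposal is correct and essentially matches the paper's approach: both arguments control the curvature of $g(t)=\text{KALE}(\mathbb{P}_t\mid\mid\mathbb{Q})$ along a constant-speed geodesic by splitting into a Hessian term (controlled via $\|h^\star_t\|\leq 4\sqrt{K}/\lambda$ and \cref{assump:smooth-kernel}) and an implicit-variation term (controlled via $\|(\boldsymbol{L}+\lambda I)^{-1}\|\leq 1/\lambda$), yielding the same constant. The only cosmetic difference is that you bound $g''(t)$ directly by reusing the $(I)$/$(II)$ computation from the subdifferential lemma, whereas the paper re-derives the same estimates as a Lipschitz bound on $g'$ (following \cite{chizat_global_2018}, Lemma B.2) and invokes \cref{lemma:kale-sensibility-wrt-P} for the second term; the two presentations are equivalent.
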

\begin{proof}
  Let $ \mathbb{ P }_a, \mathbb{ P}_b \in \mathcal  P_2( \Rd)
  $, and consider an admissible coupling $ \gamma  \in \Gamma(\mathbb{ P }_a, \mathbb{ P }_b)$ with
  associated transport costs (for various $ p $) $ C_{p}(\gamma) = (\int_{ }^{  } \left \| x - y
  \right \|^{p}\text{d} \gamma(x, y))^{\frac{1}{p}} $. We consider $\left (
    \mathbb{ P }_t \right )_{0 \leq t \leq 1}$ (where $\mathbb{ P }_t = \left ( t \pi^{2} + \left ( 1 - t
  \right ) \pi^{1} \right )_{\#}\gamma $) a constant-speed geodesic between $
\mathbb{ P }_a $ and $ \mathbb{ P }_b $.
    To prove the geodesic convexity of the KALE, we follow a similar approach
    as in \cite{chizat_global_2018} (Lemma B.2). In particular, we show that $ t
    \longmapsto  g(t) = \text{KALE}( \mathbb{ P }_t \mid \mid \mathbb{ Q
    }) $ has an $ MC_{2}^{2}(\gamma) $-Lipschitz derivative, with some $ M $ to be determined.
    Using a similar implicit function theorem argument as in the proof of \cref{lemma:KALE-diff}, we have:
    \begin{equation*} 
    \begin{aligned}
      g'(t) = \int_{  }^{  } (x - y)^{\top} \nabla_{  }  {h}^{\star}_{t}(ty + (1 - t) x) d \gamma(x, y).
    \end{aligned}
    \end{equation*}
    Given $ t_1, t_2 $, we thus have:
    \begin{equation*} 
    \begin{aligned}
      \lvert g'(t_1) - g'(t_2) \rvert  \leq (I) + (II),
    \end{aligned}
    \end{equation*}
    where:
    \begin{equation*} 
    \begin{aligned}
      (I) &= \left \lvert \int_{  }^{  } (x - y)^{\top} \left ( \nabla_{  }  {h}^{\star}_{t_1}( t_1y + \left ( 1 - t_1 \right ) x )
      - \nabla_{  }   {h}^{\star}_{t_1}(t_2 y + \left ( 1 - t_2 \right )x) \right ) d \gamma(x, y) \right \rvert \\
	  &\leq \left \| {h}^{\star}_{t_1} \right \| \sqrt {K_{2d}}(t_2 - t_1)\int_{  }^{  } \left \| x- y \right \|^{2} \text{d} \gamma(x, y)
	  \leq \frac{ 4 \sqrt { K K_{2d}}}{ \lambda } (t_1 - t_2) C_2^{2}(\gamma)
  \end{aligned}
    \end{equation*}
    and:
    \begin{equation*} 
    \begin{aligned}
      (II) &= \int_{  }^{  } (x - y)^{\top} \left ( \nabla_{  }  {h}^{\star}_{t_1}( t_2y + \left ( 1 - t_2 \right ) x )
      - \nabla_{  }   {h}^{\star}_{t_2}(t_2 y + \left ( 1 - t_2 \right )x) \right ) d \gamma(x, y) \\
	&= \int_{  }^{  } \sum\limits_{ i=1 }^{ d } (x_i - y_i)
	\left \langle {h}^{\star}_{t_1} - {h}^{\star}_{t_2}, \frac{ \partial k_{t_2 y +(1 - t_2) x } }{ \partial x_i }\right \rangle  d \gamma(x, y)\\
	&\numrel{\leq}{{e_1}}  \int_{  }^{  } \left \| {h}^{\star}_{t_1} - {h}^{\star}_{t_2} \right \|\sum\limits_{ i=1 }^{ d } \lvert  x_i -y_i  \rvert 
	\left  \| \frac{ \partial k_{t_2 y + \left ( 1 - t_2 \right )x} }{ \partial x_i }  \right \| d \gamma(x, y) \\
														   	&\numrel{\leq}{{e_2}}  \sqrt {K_{1d}} \left \| {h}^{\star}_{t_1} - {h}^{\star}_{t_2} \right \|\int_{  }^{  } \sqrt {\left \| x -y \right \|}^2 d \gamma(x, y)\\
														    &\numrel{\leq}{{e_3}}  \frac{ (t_2 - t_1)K_{1d}C_2^2(\mathbb{ P }_a, \mathbb{ P}_b) }{ \lambda }  \\
      \end{aligned}
    \end{equation*}
    where (\ref{e_1}) follows from Cauchy-Schwarz on $ \mathcal  H $, (\ref{e_2}) uses Cauchy-Schwarz on $ \mathbb{R}^d$ and (\ref{e_3}) relies on \cref{lemma:kale-sensibility-wrt-P} and Jensen inequality.
     We thus conclude that $ g'(t) $ is  $ M C_2^{2}(\gamma) $-Lipschitz, with
     $ M = \frac{ K_{1d} + 4 \sqrt {K K_{2d}}  }{ \lambda }$,
     and thus that $ \text{KALE} $ is $ -M $-geodesically semiconvex
  \end{proof}
  The geodesic convexity of the KALE allows to conclude the proof of
  \cref{prop:KALE-gf}: indeed, since the KALE is geodesically semiconvex in $ \mathbb{ P } $, and
  admits strong extended Fr\'{e}chet subdifferentials, we conclude that the KALE
  gradient flow solutions exist and are unique, as guaranteed by \cite[Theorem
  11.2.1]{ambrosio2008gradient}. \qed

  \section{Proof of \cref{prop:KALE-flow-cvg}} \label{proof:kale-flow-cvg}
We recall the following definitions: given a positive measure $ \mathbb{ P } $, and a function $ f \in \mathcal  C^1(\Rd) $, the  weighted Sobolev \emph{semi-norm} of $ f $ is given by:
\begin{equation*} 
\begin{aligned}
  \left \| f \right \|_{\dot{H}(\mathbb{ P })} = \left ( \int_{  }^{  } \left \| \nabla_{  } f  \right \|^2 \text{d} \mathbb P \right )^{\frac{1}{2}}.
\end{aligned}
\end{equation*}

Note the important role of the weighted Sobolev semi-norm in the energy dissipation formula of KALE's gradient flow:
\begin{equation} \label{eq:kale-dissipation-sobolev}
\begin{aligned}
  \frac{ \text{d} \text{KALE}(\mathbb{ P }_t \mid \mid \mathbb{ Q }) }{ \text{d}t } = -\int_{  }^{  } (1 + \lambda)^2\left \| \nabla_{  }  {h}^{\star}  \right \|^2 \text{d} \mathbb P_t = - (1 + \lambda)^2\left \| {h}^{\star} \right \|^2_{\dot{H}(\mathbb P_t)}.
\end{aligned}
\end{equation}

By duality, one can define the (possibly infinite) negative \emph{weighted negative
Sobolev distance} \cite{arbel_maximum_2019_1} between $ \mu $ and $ \nu $:
\begin{equation*} 
\begin{aligned}
  \left \| \mu - \nu \right \|_{ \dot{H}^{-1}(\mathbb{ P })} &= \sup_{ \left \|f
  \right \|_{\dot{H}(\mathbb{ P })} \leq 1 } \Bigg \lvert \int_{  }^{  } f
\text{d}(\mu - \nu) \Bigg \rvert .
\end{aligned}
\end{equation*}
As proven in \cite{otto_generalization_2000}, the weighted negative Sobolev
distance linearizes the Wasserstein distance, and one can formally write:
\begin{equation*} 
\begin{aligned}
  W_2(\mu, \mu+ d \mu) = \left \| d \mu \right \|_{ \dot{H}^{-1}(\mathbb{ P })} + o( d \mu).
\end{aligned}
\end{equation*}
Moreover, for all $ f \in \mathcal  C^1(\Rd) $, and $ \mu \in
\mathcal  M(\Rd),$ one has:
\begin{equation} \label{eq:weighted-sobolev-cs}
\begin{aligned}
  \int_{  }^{  } f \text{d} \mu \leq \left \| f \right \|_{\dot{H}(\mathbb P)} \left \| \mu \right \|_{\dot{H}^{-1}(\mathbb P)}.
\end{aligned}
\end{equation}
To prove \cref{prop:KALE-flow-cvg}, we use the $ \lambda $-strong concavity of
$ \mathcal  K(h, \mathbb{ P }) $ w.r.t. $ h $ :
\begin{equation*} 
\begin{aligned}
  \text{KALE}(\mathbb{ P }  \mid \mid \mathbb{ Q }) = (1 + \lambda) \mathcal  K( {h}^{\star}, \mathbb{ P })
  &\leq (1 + \lambda)(\mathcal  K(0, \mathbb{ P }) + \left \langle {h}^{\star},
  \nabla_{ h } \mathcal  K (0, \mathbb{ P })  \right \rangle -
  \frac{\lambda}{2} \left \| {h}^{\star} \right \|^2)\\
  & \leq  (1 + \lambda)\left \langle {h}^{\star}, \mu_{\mathbb{ P }} - 
  \mu_{\mathbb{ Q }}\right \rangle = (1 + \lambda)\int_{  }^{  } {h}^{\star}(x) \text{d} \mathbb P - \int_{  }^{  } {h}^{\star}(x) \text{d} \mathbb Q \\
  & \leq (1 + \lambda)\left \| h \right \|_{\dot{H}(\mathbb{ P })}
			    \left \| \mathbb{ P } - \mathbb{ Q } \right
			    \|_{\dot{H}^{-1}(\mathbb{ P })} \leq (1 + \lambda) C \left \| h \right \|_{\dot{H}(\mathbb{ P })}. \\
\end{aligned}
\end{equation*}
Here we  successively applied \cref{eq:weighted-sobolev-cs} and the hypothesis $
\left \| \mathbb{ P } - \mathbb{ Q } \right \|_{\dot{H}(\mathbb{ P })} \leq C $. Recalling \cref{eq:kale-dissipation-sobolev}, one has:
\begin{equation*} 
\begin{aligned}
  \frac{ d \text{KALE}(\mathbb{ P }_t, \mathbb{ Q }) }{ \text{d}t } &\leq  - \frac{ \text{KALE}(\mathbb{ P }_t \mid \mid \mathbb{ Q })^2 }{ C^2 } \Longrightarrow
  \frac{ d (1 / \text{KALE}(\mathbb{ P }_t, \mathbb{ Q })) }{ \text{d}t } \geq \frac{1}{C},
\end{aligned}
\end{equation*}
from which the desired inequality follows.\qed

\paragraph{Proof of \cref{prop:KALE-noise-injection}} 
We rely on the proof technique used in \cite[E.1]{arbel_maximum_2019_1}.
From \cref{lemma:differential-kernel}, we get that assumptions A, D of \cite{arbel_maximum_2019_1} hold with $ L = \sqrt {K K_{2d}}  $ and $ \lambda^2 = K_{2d} $.
Moreover, we know from
\cref{lemma:bounded-kale-witness-function} that $ {h}^{\star} $ is $
\frac{4K}{\lambda} $-Lipschitz.  From these smoothness conditions, all steps in
{\cite[E.1]{arbel_maximum_2019_1}}, follow until:
\begin{equation*} 
\begin{aligned}
  \text{KALE}(\mathbb{ P }_{n+1} \mid \mid \mathbb{ Q }) - \text{KALE}(\mathbb{ P }_{n} \mid \mid \mathbb{ Q }) \leq - \gamma \left ( 1 - \frac{3}{2} \gamma \sqrt {K K_{2d}}  \right ) \mathcal  D_{\beta_n}(\mathbb{ P }_n) + \gamma \sqrt {K_{2d}} \beta_n \left \| {h}^{\star} \right \| \mathcal  D_{\beta_n}(\mathbb{ P }_n)^{\frac{1}{2}}.
\end{aligned}
\end{equation*}
Now, given that $ \left \| {h}^{\star} \right \|^2 \leq  \frac{ 2
\text{KALE}(\mathbb{ P }_n, \mathbb{ Q }) }{ \lambda } $ and that 
$ \frac{8 K_{2d}\beta_n^2}{\lambda^2} 
\text{KALE}(\mathbb{ P }_n, \mathbb{ Q })
\leq \mathcal  D_{\beta_n}(\mathbb{ P }_n) $
we have:
\begin{equation*} 
\begin{aligned}
  \text{KALE}(\mathbb{ P }_{n+1} \mid \mid \mathbb{ Q }) - \text{KALE}(\mathbb{ P }_{n} \mid \mid \mathbb{ Q })
  &\leq - \gamma \left ( 1 - \frac{3}{2} \gamma \sqrt {K K_{2d}}  \right ) \mathcal  D_{\beta_n}(\mathbb{ P }_n) + \gamma \sqrt {\frac{2}{8}} D_{\beta_n}(\mathbb{ P }_n)  \\
  &\leq -\frac{\gamma}{2} \left ( 1 - 3 \gamma \sqrt {K K_{2d}}  \right ) \mathcal  D_{\beta_n}(\mathbb{ P }_n) \\
  &\numrel{\leq}{{e_4}}  -  4 \gamma \left ( 1 - 3 \gamma \sqrt {K K_{2d}}  \right ) \frac{ K_{2d}}{
  \lambda^2 } \beta_n^2\text{KALE}(\mathbb{ P } \mid \mid \mathbb{ Q })\\
  &\numrel{\leq}{{e_5}} - \Gamma \beta_n^2 \text{KALE}(\mathbb{ P }_n \mid \mid \mathbb{ Q }),
\end{aligned}
\end{equation*}
where (\ref{e_4}) uses the noise schedule assumption and in (\ref{e_5}) we noted   
$ \Gamma =  4 \gamma \left ( 1 - 3 \gamma \sqrt {K K_{2d}}  \right ) \frac{ K_{2d}
}{ \lambda^2 } $, and the result follows as in \cite{arbel_maximum_2019_1}.

\section{Proof of \cref{prop:kale-descent-vs-discrete-kale-flow}}\label{proof:kale-descent-vs-kale-flow}

We recall the update equations defining the trajectories $ (Y^{(i)}_n)_{n \leq n_{\max_{  }}} $ and $ (\bar{ Y }^{(i)}_n)_{n \leq n_{\max_{  }}} $:
 \begin{equation} \label{eq:appendix-kale-particle-descent}
 \begin{aligned}
   Y^{(i)}_{n+1} = Y^{(i)}_n -\gamma (1 + \lambda)\nabla
   \widehat{h}^\star_{n}( Y^{(i)}_n), \\
   \bar{ Y }_{n+1}^{(i)} = \bar{ Y }^{(i)}_n-\gamma (1 + \lambda)\nabla
   {h}^{\star}_{n}( \bar{ Y }^{(i)}_{n}) .
 \end{aligned}
 \end{equation}
 We denote $ c_n = \sqrt { \frac{1}{N} \sum\limits_{ i=1 }^{ N } \mathbb{E} \left \|
 \bar{ Y }^{(i)}_n -  Y^{(i)}_n \right \|^2} $. Note that
 $$ \mathbb{E} W_2( \overline{ \mathbb{ P } }_{n}^{N}, \widehat{ \mathbb{ P } }_{n}^{N})^2 \leq
 \frac{1}{N} \sum\limits_{ i=1 }^{ N } \mathbb{E}_{  }\left [ \left \|
  Y^{(i)}_{n+1} - \bar{ Y }^{(i)}_{n+1} \right \|^2 \right ] = c_{n}^2.$$
 The iterates $ c_{n} $ satisfy the following recursion:

\begin{equation*} 
	\begin{aligned}
	  c_{n+1} &= \sqrt{\frac{1}{N}
			\sum\limits_{ i=1 }^{ N } \mathbb{E}_{  }\left [ \left \| { Y }^{(i)}_{n+1} - \bar{ Y
				}^{(i)}_{n+1} \right \|^2 \right ] }                                                                                                                             \\
		                                                                              & \leq \sqrt{\frac{1}{N}
		\sum\limits_{ i=1 }^{ N } \mathbb{E}_{  }\left [ \left \| { Y }^{(i)}_{n} - \bar{ Y
		    }^{(i)}_{n}   - \gamma (1 + \lambda) \left (  \nabla_{  } \widehat{ h }^{\star}_{n}(
		{ Y }^{(i)}_n) - \nabla_{  } {h}^{\star}_{n}( \bar{ Y }^{(i)}_n)   \right ) \right \|^2\right ]}                                                                                 \\
		                                                                              & \leq c_n +
											      \underbrace{ \frac{\gamma(1 + \lambda)}{\sqrt {N} }\sqrt{ \sum\limits_{ i=1 }^{ N } \mathbb{E}_{  }\left [ \left \|   \nabla_{  } \widehat{ h }^{\star}_{n}(
												{ Y}^{(i)}_n) - \nabla_{  } h^{\star}_{n}( \bar{ Y }^{(i)}_n)   \right \|^2\right ]}
	      }\limits_{\overset{\Delta}{=}A}.
	\end{aligned}
\end{equation*}
Using a triangular inequality, we now split (A) into terms that will be handled differently:
\begin{equation*} 
	\begin{aligned}
		c_{n+1} & \leq c_n +
		\gamma (1 + \lambda) \left (
		\underbrace{ \frac{1}{\sqrt {N} }\sqrt{ \sum\limits_{ i=1 }^{ N } \mathbb{E}_{  }\left [
		      \left \|   \nabla_{  } \widehat{ h }^{\star}_{n}({ Y }^{(i)}_n) - \nabla_{  }
		      \widehat{ h }^{\star}_{n}( \bar{
			Y }^{(i)}_n)   \right \|^2\right ]} }\limits_{(i)} \right .\\
			+& \left .
		\underbrace{ \frac{1}{\sqrt { N} }\sqrt{ \sum\limits_{ i=1 }^{ N } \mathbb{E}_{  }\left [
		      \left \|   \nabla_{  } \widehat{ h }^\star_{n}( \bar{ Y
	      }^{(i)}_n) - \nabla_{  } \bar{ h }^\star_{n}( \bar{ Y }^{(i)}_n)   \right
      \|^2\right ]} }\limits_{(ii)}
      +\underbrace{ \frac{1}{\sqrt { N} }\sqrt{ \sum\limits_{ i=1 }^{ N } \mathbb{E}_{  }\left [
		\left \|   \nabla_{  } \bar{ h }^\star_{n}( \bar{ Y }^{(i)}_n) - \nabla_{  }
      h_{n}^\star( \bar{
    Y }^{(i)}_n)   \right \|^2\right ]} }\limits_{(i i i )} \right )
\end{aligned}
\end{equation*}

Where we introduced the notation $ \bar{ h }^\star_n = \arg \max_{ h} \mathcal
K(h, \overline{ \mathbb{ P } }_n^{N}) $, the witness function that estimates
the \emph{true} witness function $ {h}^{\star}_n $ using $ \overline{ \mathbb{
P } }_n^{N} $, the empirical version of $ \mathbb{ P }_n $, instead of $
\mathbb{ P }_n $. Let us explain the source of each of the terms in the last inequality: 
\begin{itemize}
  \item (i) comes from evaluating the velocity field $ \widehat{ h
    }^\star_n $ at different points ${ Y }^{(i)}_n $ and $ \bar{ Y
  }^{(i)}_n $,
  \item (ii) comes from using \emph{biased} samples $ \{ Y^{(i)}_n \}_{i=1}^{N}
    $ to compute $ \widehat{ h }^\star_{n} $, and unbiased samples $ \{ \bar{ Y
    }^{(i)}_n\}_{i=1}^{N} $ to compute $    \bar{ h }^\star_{n} $.
  \item (iii) comes from the use of a finite number of unbiased samples to
    compute $ \bar{ h }^\star_{n} $.
\end{itemize}
After controlling (i), (ii), (iii), as detailed below, we get the following upper
bound:

\begin{equation*} 
\begin{aligned}
  c_{n+1} \leq c_n \gamma ( 1 + \lambda) \left ( 1 + \frac{ 4 \sqrt {KK_{2d}} + K_{2d}}{ \lambda }
  \right ) + \frac{ \gamma ( 1 + \lambda) }{ \lambda } \sqrt { \frac{KK_{2d}(1 + e^{\frac{8K}{\lambda}})}{N}} .
\end{aligned}
\end{equation*}
We use \cite[Lemma 26]{arbel_maximum_2019_1} to conclude:
$$ c_n = \sqrt {{ \frac{ 2KK_{1d}(1 + e^{\frac{8K}{\lambda}}) }{ N }}} \times
\frac{1}{4 \sqrt {KK_{1d}}+ K_{2d}}
(e^{ \gamma (1 + \lambda)\frac{ 4 \sqrt {K K_{1d}} + K_{2d}   }{ \lambda }n } - 1). $$
The result on $ \mathbb{E} W_2( \bar{ \mathbb{ P } }_n, \widehat{ \mathbb{ P } }_n) $
follows by noting that $ \mathbb{ E } W_2( \bar{ \mathbb{ P } }_n^{N},
\widehat{ \mathbb{ P } }_n^{N}) \leq \sqrt {\mathbb{E} W_2^{2}( \bar{
\mathbb{ P } }_n^{N}, \widehat{ \mathbb{ P } }_n^{N})}  $ by Jensen's
inequality. \qed

\subsection{Control of the 3 error terms}
\paragraph{Controlling (i)}

To control the first term, we rely on the RKHS derivative reproducing property
\cite{zhou_derivative_2008}: $ \frac{ \partial h }{ \partial x_i } = \left
\langle \partial_i k_x, h \right \rangle  $,
\cref{assump:smooth-kernel},
and on the uniform bound on $ \left \| {h}^{\star}
\right \|$ (for all $ \mathbb{ P } $, $ \mathbb{ Q } $) given by
(\cref{lemma:bounded-kale-witness-function}) :
\begin{equation*} 
\begin{aligned}
  \left \|\nabla_{  } \widehat{ h }^\star_{n} ( { Y }_n^{(i)}) - \nabla_{  } \widehat{ 
  h}^\star_{n} ( \bar{ Y }_n^{(i)})  \right \|^2 \leq \sum\limits_{ i=1 }^{ d } \left \| 
  \partial_i k_{{ Y }_n^{(i)}} - 
  \partial_i k_{\bar{ Y }_n^{(i)}}\right \|^2 \left \| \hat{ h
}_{n} \right \|^2 = \frac{ 16 KK_{2d}}{ \lambda^2 } \left \| { Y }_n^{(i)} - \bar{ Y
}_i^{(n)} \right \|^2.
\end{aligned}
\end{equation*}
Consequently, we have
\begin{equation*} 
\begin{aligned}
  (i) = \frac{1}{\sqrt {N} } \sqrt {\sum\limits_{ i=1 }^{ N } \mathbb{E}\left \| \nabla_{  }  \widehat{ h }_n^{\star}( { Y }^{(i)}_n) - \nabla_{  }  \widehat{ h }_n^\star( \bar{ Y }^{(i)}_n)   \right \|^2} \leq \frac{ 4 \sqrt {KK_{2d}}  }{ \lambda \sqrt {N}  } c_n.
\end{aligned}
\end{equation*}

\paragraph{Controlling (ii)} 
To control $ (ii) $, we rely on \cref{lemma:kale-sensibility-wrt-P}, that
guarantees that  $ \text{KALE}(\mathbb{ P } \mid \mid \mathbb{ Q }) $ is
$ \frac{ \sqrt {K_{1d}}  }{\lambda} $-Lipschitz in $ \mathbb{ P } $ and $ \mathbb{ Q } $, when $
\mathcal  P(\Rd) $ is endowed with the Wasserstein-2 metric:

  \begin{equation*} 
  \begin{aligned}
  \left \|\nabla_{  } \widehat{ h }^\star_{n} ( \bar{ Y }_n^{(i)}) - \nabla_{  } \bar{
  h}^\star_n ( \bar{ Y }_n^{(i)})  \right \|^2
  &= \sum\limits_{ j=1 }^{ d } \left ( \partial_j \widehat{ h }^\star_n( \bar{ Y }^{(i)}_n)  - \partial_j \bar{ h }^\star_n( \bar{ Y }^{(i)}_n)  \right )^2 \\
  & \leq K_{1d} \left \| \hat{ h }^\star_{n} - \bar{ h }^\star_{n} \right \|^2.
  \end{aligned}
  \end{equation*}
  Consequently, using \cref{lemma:kale-sensibility-wrt-P}, we have:
  \begin{equation*} 
  \begin{aligned}
\left \|\nabla_{  } \widehat{ h }^\star_{n} ( \bar{ Y }_n^{(i)}) - \nabla_{  } \bar{
h}^\star_n ( \bar{ Y }_n^{(i)})  \right \|^2 &\leq \frac{ K_{1d}^2 }{ \lambda^2 } W_2( \widehat{ \mathbb{ P } }^{N}_n, \bar{ \mathbb{ P } }^{N}_n)^2 \\
\Longrightarrow (i i ) = \frac{1}{\sqrt {N} } \sqrt {\sum\limits_{ i=1 }^{ N } \mathbb{E}\left \| \nabla_{  }  \widehat{ h }^{\star}( \bar{ Y }^{(i)}_n) - \nabla_{  }  { h }^\star( \bar{ Y }^{(i)}_n)   \right \|^2} &\leq \frac{ K_{1d}  }{ \lambda \sqrt {N}  } \sqrt { \mathbb{E} W_2^{2}( \widehat{ \mathbb{ P } }^{N}_n, \bar{ \mathbb{ P } }^{N}_n)} \leq \frac{ K_{1d} }{ \lambda \sqrt {N} } c_n.
  \end{aligned}
  \end{equation*}

  \paragraph{Controlling (iii)} 
  In  (iii), the witness function $ \bar{ h }^\star_n $ is an empirical version of $
  {h}^{\star}_n $. Repeating the first lines of (ii), we have:
  \begin{equation*} 
  \begin{aligned}
  \left \|\nabla_{  } \bar{  h }^\star_{n} ( \bar{ x }_n^{(i)}) - \nabla_{  } 
  h^\star_n ( \bar{ x }_n^{(i)})  \right \|^2
  & \leq K_{1d}\left \| \bar{ h }^\star_n- {h}^{\star}_n\right \|^2.
  \end{aligned}
  \end{equation*}
  We could use the bound given in $ (i i) $ to get a bound on  $ \left \|\bar{
  h }^\star_n - {h}^{\star}_n \right \| $, but the sample complexity of the
  Wasserstein distances scales in $ \mathcal  O(n^{-1 / d}) $, which is much slower than
  our target rate $ 1/{\sqrt {N}}  $  \citep{weed_sharp_2017}. Instead, we rely
  on the concentration inequality given by \cref{expectaiton-empiricial-h},
  ensuring that $ \mathbb{E}\left \| \bar{ h }^\star_n - {h}^{\star}_n \right
  \|^2 \leq \frac{ 2K(1 + e^{ \frac{8K}{\lambda}}) }{ N \lambda^2 }  $.
  Following this, we have:
  \begin{equation*} 
  \begin{aligned}
    (i i i) = \frac{1}{\sqrt {N} } \sqrt {\sum\limits_{ i=1 }^{ N } \mathbb{E}\left \| \nabla_{  }  \bar{ h }^{\star}_n( \bar{ x }^{(i)}_n) - \nabla_{  }  { h }^\star( \bar{ x }^{(i)}_n)   \right \|^2}
  \leq \frac{1}{\lambda} \sqrt { \frac{2K K_{1d}(1 + e^{\frac{8K}{\lambda}})}{ N}} .
  \end{aligned}
  \end{equation*}

\section{Auxiliary Lemmas}\label{app-sec:auxiliary-lemmas}

\begin{lemma}[Uniform smoothness of the KALE witness function]\label{lemma:bounded-kale-witness-function}
  Under \cref{assump:bounded-kernel}, and for all $ \mathbb{ P } $, $ \mathbb{
  Q }  $, the following inequalities hold:
  \begin{equation*} 
  \begin{aligned}
    \frac{\lambda}{2} \left \| {h}^{\star} \right \|^2\leq
    \text{KALE}(\mathbb{ P } \mid \mid \mathbb{ Q }) \leq  2 \sqrt {K}  \left \|
    {h}^{\star} \right \|,
  \end{aligned}
  \end{equation*}
implying $ \left \|{h}^{\star}  \right \| \leq \frac{ 4 \sqrt {K}   }{
  \lambda } $. We also have the finer estimate $ \left \| {h}^{\star} \right \|
  \leq \frac{ 2 \text{MMD}(\mathbb{ P } \mid \mid \mathbb{ Q }) }{ \lambda } $.

\end{lemma}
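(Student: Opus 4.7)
The plan is to bootstrap both inequalities from two elementary observations about the objective $\mathcal{K}(h,\mathbb{P}) = 1 + \int h\,\mathrm{d}\mathbb{P} - \int e^h\,\mathrm{d}\mathbb{Q} - \tfrac{\lambda}{2}\|h\|^2$ evaluated at its optimizer $h^\star$: (i) $\mathcal{K}(\cdot,\mathbb{P})$ is $\lambda$-strongly concave on $\mathcal{H}$, and (ii) the exponential can be linearized from below via $e^u \geq 1+u$, which surfaces an inner product with the MMD witness. Since $\mathcal{K}(0,\mathbb{P}) = 0$ and $\mathrm{KALE}(\mathbb{P}\|\mathbb{Q}) = (1+\lambda)\mathcal{K}(h^\star,\mathbb{P}) \geq \mathcal{K}(h^\star,\mathbb{P})$, it suffices to control $\mathcal{K}(h^\star,\mathbb{P})$ from above and below.

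For the lower bound, I observe that the linear term in $h$ is concave, $-\int e^h\,\mathrm{d}\mathbb{Q}$ is concave (pointwise convexity of $u\mapsto e^u$ preserved under integration and negation), and $-\tfrac{\lambda}{2}\|h\|^2$ supplies $\lambda$-strong concavity. Since $h^\star$ is the unique maximizer we have $\nabla\mathcal{K}(h^\star,\mathbb{P}) = 0$, so $\lambda$-strong concavity at $h^\star$ evaluated at the feasible point $0$ gives
\begin{equation*}
0 = \mathcal{K}(0,\mathbb{P}) \leq \mathcal{K}(h^\star,\mathbb{P}) + \langle \nabla\mathcal{K}(h^\star,\mathbb{P}),\, -h^\star\rangle - \tfrac{\lambda}{2}\|h^\star\|^2 = \mathcal{K}(h^\star,\mathbb{P}) - \tfrac{\lambda}{2}\|h^\star\|^2,
\end{equation*}
so $\tfrac{\lambda}{2}\|h^\star\|^2 \leq \mathcal{K}(h^\star,\mathbb{P}) \leq \mathrm{KALE}(\mathbb{P}\|\mathbb{Q})$.

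For the upper bound, I apply $e^u \geq 1+u$ pointwise with $u = h^\star(x)$ to obtain $\int e^{h^\star}\,\mathrm{d}\mathbb{Q} \geq 1 + \int h^\star\,\mathrm{d}\mathbb{Q}$, whence
\begin{equation*}
\mathcal{K}(h^\star,\mathbb{P}) \leq \int h^\star\,\mathrm{d}(\mathbb{P}-\mathbb{Q}) - \tfrac{\lambda}{2}\|h^\star\|^2 = \langle h^\star, \mu_{\mathbb{P}} - \mu_{\mathbb{Q}}\rangle_{\mathcal{H}} - \tfrac{\lambda}{2}\|h^\star\|^2,
\end{equation*}
where I have used the reproducing property to rewrite $\int h^\star\,\mathrm{d}\mathbb{P} - \int h^\star\,\mathrm{d}\mathbb{Q}$ as the RKHS inner product with $f_{\mathbb{P},\mathbb{Q}} = \mu_{\mathbb{P}} - \mu_{\mathbb{Q}}$. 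Cauchy--Schwarz then bounds this by $\|h^\star\|\cdot\mathrm{MMD}(\mathbb{P}\|\mathbb{Q}) - \tfrac{\lambda}{2}\|h^\star\|^2$, and \cref{assump:bounded-kernel} gives $\|\mu_{\mathbb{P}}\|, \|\mu_{\mathbb{Q}}\| \leq \sqrt{K}$, hence $\mathrm{MMD}(\mathbb{P}\|\mathbb{Q}) \leq 2\sqrt{K}$, yielding the claimed $2\sqrt{K}\|h^\star\|$.

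Finally, the two consequences follow by chaining: combining $\tfrac{\lambda}{2}\|h^\star\|^2 \leq 2\sqrt{K}\|h^\star\|$ gives $\|h^\star\| \leq 4\sqrt{K}/\lambda$; and for the sharper MMD-based estimate, I keep the MMD explicit and use $\mathcal{K}(h^\star,\mathbb{P}) \geq 0$ together with $\mathcal{K}(h^\star,\mathbb{P}) \leq \|h^\star\|\cdot\mathrm{MMD}(\mathbb{P}\|\mathbb{Q}) - \tfrac{\lambda}{2}\|h^\star\|^2$ to deduce $\tfrac{\lambda}{2}\|h^\star\|^2 \leq \|h^\star\|\cdot\mathrm{MMD}(\mathbb{P}\|\mathbb{Q})$, i.e.\ $\|h^\star\| \leq 2\,\mathrm{MMD}(\mathbb{P}\|\mathbb{Q})/\lambda$. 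There is no real obstacle; the only subtle point is recognizing that the $\lambda$-strong concavity, which is intrinsic to the regularized KALE objective, is exactly what allows the lower bound at the stationary point $h^\star$.
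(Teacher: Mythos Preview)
Your proof is correct. For the upper bound and the finer MMD estimate your argument coincides with the paper's: both linearize the exponential via $e^u \geq 1+u$ (the paper phrases the same step as $\lambda$-strong concavity of $\mathcal{K}$ at $h=0$, using $\nabla_h\mathcal{K}(0,\mathbb{P}) = f_{\mathbb{P},\mathbb{Q}}$ from \cref{lemma:link-kale-mmd-witness-function}) to obtain $\mathcal{K}(h^\star,\mathbb{P}) \leq \langle h^\star, f_{\mathbb{P},\mathbb{Q}}\rangle - \tfrac{\lambda}{2}\|h^\star\|^2$, and then combine with $\mathcal{K}(h^\star,\mathbb{P}) \geq 0$ and Cauchy--Schwarz. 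The genuine difference is in the lower bound: you extract $\tfrac{\lambda}{2}\|h^\star\|^2 \leq \mathcal{K}(h^\star,\mathbb{P})$ directly from $\lambda$-strong concavity at the \emph{maximizer} $h^\star$ (where the gradient vanishes), whereas the paper invokes the dual formulation \cref{eq:KALE-primal} and drops the nonnegative entropy term $\int(f^\star(\log f^\star - 1)+1)\,\mathrm{d}\mathbb{Q}$, leaving $\tfrac{1}{2\lambda}\bigl\|\int f^\star k(x,\cdot)\,\mathrm{d}\mathbb{Q} - \mu_{\mathbb{P}}\bigr\|^2 = \tfrac{\lambda}{2}\|h^\star\|^2$ via the first-order optimality condition. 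Your route is more self-contained---it needs only the definition of $\mathcal{K}$---while the paper's route makes the density-ratio structure of KALE visible.
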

\begin{proof}
  The right inequality follows from the proof of \cref{prop:KALE-flow-cvg}. Indeed, we have:
  \begin{equation*} 
  \begin{aligned}
    \text{KALE}(\mathbb{ P } \mid \mid \mathbb{ Q })
    &\leq \left \langle {h}^{\star}, \mu_{\mathbb{ P }} - \mu_{\mathbb{ Q }} \right \rangle
    \leq \left \| {h}^{\star} \right \|  ( \left \| \mu_{\mathbb{ P }} \right
    \| + \left \| \mu_{\mathbb{ Q }} \right \|)
     \leq 2 \sqrt {K}  \left \| {h}^{\star} \right \|.
  \end{aligned}
  \end{equation*}
  The left inequality can be noticed using KALE's \emph{dual formulation} \cref{eq:KALE-primal}

  \begin{equation}
  \begin{aligned}
    \text{KALE}(\mathbb{ P } \mid \mid \mathbb{ Q }) &= \underbrace{ \int_{
      }^{ } \left ( {f}^{\star} (\log {f}^{\star} - 1) + 1 \right )  \text{d}
      \mathbb Q }\limits_{\geq  0} + \frac{1}{2
  \lambda} \left \| \int_{  }^{  } {f}^{\star}(x) \text{d} \mathbb Q(x) - \mu_{\mathbb{ P
  }} \right \|^2 \\
  &\geq 
\frac{1}{2
  \lambda} \left \| \int_{  }^{  } {f}^{\star}(x) \text{d} \mathbb Q(x) - \mu_{\mathbb{ P
  }} \right \|^2 = \frac{\lambda}{2} \left \| {h}^{\star} \right \|^2.
  \end{aligned}
  \end{equation}
  To get the finer estimate, we keep track of $ \frac{\lambda}{2} \left \| {h}^{\star} \right \|^2_{\mathcal H } $ term. By convexity of $ \exp $, we have:
	\begin{equation*} 
		\begin{aligned}
			\underbrace{ 1 + \int_{  }^{  } h \text{d} \mathbb P - \int_{  }^{  } e^{h} \text{d} \mathbb Q -
			\frac{\lambda}{2} \left \| h \right \|^2 }\limits_{\mathcal  K(h, \mathbb{ P })} & \leq \int_{  }^{  } h \text{d} \mathbb P -
			\int_{  }^{  } h \text{d} \mathbb Q - \frac{\lambda}{2} \left \| h \right \|^2.
		\end{aligned}
	\end{equation*}
	Recalling now that $ \mathcal  K( {h}^{\star}, \mathbb{ P }) \geq  \mathcal  K(0,
		\mathbb{ P }) = 0 $, we must have:
	\begin{equation*} 
		\begin{aligned}
			\int_{  }^{  } {h}^{\star} \text{d} \mathbb P -
			\int_{  }^{  } {h}^{\star} \text{d} \mathbb Q - \frac{\lambda}{2} \left \| {h}^{\star} \right \|^2 \geq
			0 \\
			\Longrightarrow \left \|{h}^{\star}  \right \| \leq \frac{ 2 \Vert f_{\mathbb{ P }, \mathbb{ Q }}\Vert }{ \lambda }
		\end{aligned}
	\end{equation*}
	Where the last line used the Cauchy-Schwarz inequality.
\end{proof}

\begin{lemma}\label{expectaiton-empiricial-h}
  Under \cref{assump:bounded-kernel}, and using the notations of \cref{proof:kale-descent-vs-kale-flow}, we have:
  \begin{equation*} 
  \begin{aligned}
    \mathbb{E} \left \| \bar{ h }^\star_n - {h}^{\star}_n \right \|^2 &\leq
    \frac{2K(1 + e^{\frac{8K}{\lambda}})}{ N\lambda^2 }
  \end{aligned}
  \end{equation*}
\end{lemma}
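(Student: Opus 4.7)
The plan is to exploit the $\lambda$-strong concavity of both the population objective $\mathcal K(\cdot, \mathbb P_n)$ and its sample-based counterpart $\widehat{\mathcal K}(h) := 1 + \int h\,\text{d}\bar{\mathbb P}^N_n - \int e^h\,\text{d}\widehat{\mathbb Q}^N - \tfrac{\lambda}{2}\|h\|^2$, whose maximizer is $\bar h^\star_n$. Strong concavity of $\widehat{\mathcal K}$ yields the Polyak--Lojasiewicz-type inequality $\|h - \bar h^\star_n\|^2 \leq \lambda^{-2}\|\nabla \widehat{\mathcal K}(h)\|^2$ valid for every $h \in \mathcal H$, which I would apply at $h = h^\star_n$. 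Combining this with the population first-order condition $\nabla \mathcal K(h^\star_n, \mathbb P_n) = 0$ and the gradient formula of \cref{lemma:link-kale-mmd-witness-function}, the quantity $\nabla \widehat{\mathcal K}(h^\star_n)$ unpacks into a sum of two finite-sample discrepancies:
\begin{equation*}
\nabla \widehat{\mathcal K}(h^\star_n) = \bigl(\mu_{\bar{\mathbb P}^N_n} - \mu_{\mathbb P_n}\bigr) - \int k(x,\cdot)\,e^{h^\star_n(x)}\,\text{d}(\widehat{\mathbb Q}^N - \mathbb Q)(x).
\end{equation*}

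An $(a+b)^2 \leq 2(a^2 + b^2)$ split then reduces the problem to bounding the two variances $T_1 := \mathbb E\|\mu_{\bar{\mathbb P}^N_n} - \mu_{\mathbb P_n}\|^2$ and $T_2 := \mathbb E\|\int k(x,\cdot)\,e^{h^\star_n(x)}\,\text{d}(\widehat{\mathbb Q}^N - \mathbb Q)\|^2$ separately. By the classical iid variance estimate for empirical mean embeddings, $T_1 \leq K/N$ under \cref{assump:bounded-kernel}. For $T_2$, the key observation I rely on is that $h^\star_n$ is defined entirely from the populations $\mathbb P_n$ and $\mathbb Q$ and is therefore independent of the iid samples $\{X^{(i)}\}$ defining $\widehat{\mathbb Q}^N$; consequently $T_2$ is again an iid variance, bounded by $N^{-1}\mathbb E[k(X,X)\,e^{2h^\star_n(X)}]$. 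Using \cref{lemma:bounded-kale-witness-function} (which gives $\|h^\star_n\|_{\mathcal H} \leq 4\sqrt K/\lambda$), the RKHS reproducing-property estimate $|h^\star_n(x)| \leq \sqrt{k(x,x)}\|h^\star_n\|$, and \cref{assump:bounded-kernel}, I obtain the uniform bound $e^{2h^\star_n(X)} \leq e^{8K/\lambda}$, and hence $T_2 \leq Ke^{8K/\lambda}/N$. Summing the two contributions yields the announced bound $2K(1+e^{8K/\lambda})/(N\lambda^2)$.

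The main subtlety I anticipate is ensuring that the $\mathbb Q$-side concentration treats the exponential factor as a \emph{fixed} (non-random) weight: this is what dictates applying the Polyak--Lojasiewicz inequality at the population witness function $h^\star_n$ rather than at the empirical one $\bar h^\star_n$. Had I instead linearized $\nabla \widehat{\mathcal K}$ around $\bar h^\star_n$, the weights $e^{\bar h^\star_n}$ would depend on the same samples $\{X^{(i)}\}$ appearing in $\widehat{\mathbb Q}^N$, and the clean iid variance bound giving the $e^{8K/\lambda}$ factor would no longer apply.
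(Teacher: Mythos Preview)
Your proposal is correct and follows essentially the same route as the paper: the paper invokes the strong-concavity inequality $\|\bar h^\star_n - h^\star_n\| \leq \lambda^{-1}\|\nabla\widehat{\mathcal L}(h^\star_n) - \nabla\mathcal L(h^\star_n)\|$ (citing \cite{arbel_generalized_2020_1}), which is exactly your Polyak--\L{}ojasiewicz bound once one notes $\nabla\widehat{\mathcal K}(h^\star_n)=\nabla\widehat{\mathcal L}(h^\star_n)-\nabla\mathcal L(h^\star_n)$, then splits into the same $\mathbb P$- and $\mathbb Q$-side empirical-process terms and bounds the exponential weight via \cref{lemma:bounded-kale-witness-function}. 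Your explicit remark that evaluating at the deterministic $h^\star_n$ (rather than $\bar h^\star_n$) is what makes the $\mathbb Q$-side an iid variance is a useful clarification that the paper leaves implicit.
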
  
\begin{proof}
  We first notice, as explained in \cite{arbel_generalized_2020_1} (Proposition 12), that
  $ \left \| \bar{ h }^\star_n - {h}^{\star}_n\right \| \leq
  \frac{1}{\lambda} \left \| \nabla_{  }   \widehat{ \mathcal  L }(
  {h}^{\star}_n) - \nabla_{  }   \mathcal  L( {h}^{\star}_{n})\right \| $
  where $ \mathcal  L  = 1 + \int_{  }^{  } h \text{d} \mathbb P - \int_{  }^{
  } e^{h} \text{d} \mathbb Q$ is the KL objective, and $ \widehat{ \mathcal  L }(h)
  = \int_{  }^{  } h \text{d} \bar{\mathbb{ P }}_n^{N} - \int_{  }^{  } e^{h} \text{d}
\widehat{ \mathbb{ Q }}^{N} + 1 $ is its empirical equivalent.
  We then use \cite{JMLR:v18:17-032} (Proposition A.1, notice that their
  statement also holds for $ ({\mathbb{E}_{  } \| \int_{  }^{  } r
  \text{d} \mathbb P_n - \int_{  }^{  } r \text{d} \mathbb P \|^2})^{1 / 2}  $), to
  get:
  \begin{equation*} 
  \begin{aligned}
  E \left \| \nabla_{  }   \widehat{ \mathcal  L }(
  {h}^{\star}_n) - \nabla_{  }   \mathcal  L( {h}^{\star}_n)\right \|^2
  \leq & \mathbb{ E } \left \| \int_{  }^{  }k(x, \cdot)\text{d} \mathbb P_n - \int_{  }^{
} k(x, \cdot) d \bar{\mathbb{ P } }^N_n\right \|^2 \\
  &+ \mathbb{ E } \left \| \int_{  }^{  } k(x, \cdot) e^{ {h}_n^{\star}} \text{d} \mathbb Q - \int_{  }^{  } k(x, \cdot)e^{ {{h}}^{\star}_n} \text{d} \widehat{ \mathbb Q}^{N}\right \|^2 \\
  &\leq  \frac{  K(1 + e^{\frac{8K}{\lambda}})  }{ N  },
  \end{aligned}
  \end{equation*}
  where we used the Cauchy-Schwarz inequality on $ \mathcal  H $ and
  \cref{lemma:bounded-kale-witness-function} to bound the squared norm of
  $ x  \longmapsto k(x, \cdot) e^{ {h}^{\star}(x)} $.
\end{proof}

\begin{lemma}\label{lemma:differential-kernel} Under \cref{assump:smooth-kernel}, The maps $ x  \longmapsto k_x (\overset{\Delta}{=} k(x, \cdot)) $ and $ x  \longmapsto \nabla_{  } k_x  $ are differentiable. Moreover, we have
  \begin{equation*}
  \begin{aligned}
    \left \| k_x - k_y \right \| \leq \sqrt {K_{1d}}  \left \| x - y \right \| \\
    \left \| \nabla_{  }  k_x - \nabla_{  }  k_y \right \| \leq \sqrt {K_{2d}}  \left \| x - y \right \| \\
  \end{aligned}
  \end{equation*}
\end{lemma}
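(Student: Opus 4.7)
The plan is to use the fundamental theorem of calculus in the RKHS along the line segment joining $x$ and $y$, combined with the first- and second-order bounds supplied by \cref{assump:smooth-kernel}.

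First I would establish differentiability of the feature map $x \mapsto k_x$ and of $x \mapsto \nabla_1 k_x$. Under \cref{assump:smooth-kernel}, the reproducing derivative property \cite[Corollary 4.36]{steinwart2008support} ensures that for every $x \in \Rd$, the partial derivatives $\partial_i k_x$ (and $\partial_i \partial_j k_x$) exist as elements of $\mathcal{H}$, with norms controlled by $K_{1d}$ and $K_{2d}$ respectively. In particular, $x \mapsto k_x$ is Fréchet-differentiable with derivative $h \mapsto \sum_i h_i \partial_i k_x$, and similarly $x \mapsto \partial_i k_x$ is differentiable with derivative $h \mapsto \sum_j h_j \partial_i \partial_j k_x$.

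Next I would apply the fundamental theorem of calculus in $\mathcal{H}$ to get, for any $x, y \in \Rd$,
\begin{equation*}
k_x - k_y = \int_0^1 \sum_{i=1}^d (x_i - y_i)\, \partial_i k_{y + t(x-y)}\, \diff t,
\end{equation*}
where the integral is understood in the Bochner sense. Taking the RKHS norm, using the triangle inequality for Bochner integrals, and applying Cauchy--Schwarz on $\mathbb{R}^d$ yields
\begin{equation*}
\|k_x - k_y\| \leq \int_0^1 \|x - y\| \left(\sum_{i=1}^d \|\partial_i k_{y + t(x-y)}\|^2\right)^{1/2} \diff t \leq \sqrt{K_{1d}}\, \|x - y\|,
\end{equation*}
where the last inequality uses the bound $\sum_i \|\partial_i k_z\|^2 \leq K_{1d}$ from \cref{assump:smooth-kernel}.

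The second inequality follows by the exact same argument, applied componentwise to the map $x \mapsto \nabla_1 k_x \in \mathcal{H}^d$. Writing $\|\nabla_1 k_x - \nabla_1 k_y\|^2 = \sum_i \|\partial_i k_x - \partial_i k_y\|^2$, and applying the fundamental theorem of calculus to each $\partial_i k_x$ using the second partials $\partial_i \partial_j k_x$, I obtain
\begin{equation*}
\|\nabla_1 k_x - \nabla_1 k_y\|^2 \leq \|x-y\|^2 \sum_{i,j=1}^d \|\partial_i \partial_j k_{z_t}\|^2 \leq K_{2d}\, \|x-y\|^2
\end{equation*}
for some intermediate point $z_t$ on the segment, which yields the claimed bound. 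There is no genuine obstacle here: the proof is a routine RKHS computation, and the only delicate step is justifying the use of the Bochner integral representation, which is standard once the Fréchet-differentiability of the canonical feature map is established from \cref{assump:smooth-kernel}.
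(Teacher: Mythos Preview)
Your proof is correct. The route differs from the paper's in two minor respects worth noting.

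For differentiability, you assert Fr\'echet-differentiability of $x \mapsto k_x$ directly from the existence of the partials $\partial_i k_x$ (via \cite[Corollary~4.36]{steinwart2008support}). The paper is more explicit here: it builds the candidate differential $D\phi(x)(\Delta) = \sum_i \Delta_i \partial_i k_x$ and verifies the Fr\'echet condition by a telescopic decomposition along coordinate directions, reducing to continuity of $\partial_i\partial_{i+d}k$. Your shortcut is fine, but strictly speaking existence of partial derivatives alone does not give Fr\'echet-differentiability; you are implicitly using their continuity, which the paper spells out.

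For Lipschitzness, you integrate along the segment (fundamental theorem of calculus in the Bochner sense) and bound the integrand uniformly; the paper instead bounds the operator norm of $D\phi(x)$ directly by Cauchy--Schwarz and invokes the mean value inequality. These are equivalent maneuvers. One small wording issue: in your second display you write ``for some intermediate point $z_t$'', which reads like a vector-valued mean value theorem; it is cleaner to keep the integral and bound $\sum_{i,j}\|\partial_i\partial_j k_z\|^2 \le K_{2d}$ uniformly in $z$, exactly as you did in the first display.
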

\begin{proof}
  We prove the differentiability and the Lispchitzness property for the map $ x  \longmapsto k_x $; the arguments can be straightforwardly adapted to the case of $ x  \longmapsto \nabla_{  } k_x  $. To prove the differentiability, we build upon \cite[Lemma 4.34]{steinwart2008support}, that guarantees that $ x  \longmapsto k(x, \cdot) $ admits partial derivatives for all $ i $, noted $ \partial_i \phi(x) $.  We finish the proof by construction: let $ D \phi(x): \mathbb{R}^d  \longmapsto \mathcal  H $ our candidate differential, defined as $ D \phi(x)(\Delta) = \sum_{ i=1 }^{ d } \Delta_i \partial_i \phi(x) $ for all $ \Delta \in  \mathbb{R}^d $. We show that $ D \phi(x) $ is the differential of $ \phi $ at x using a simple telescopic argument:
    let us note $ \left ( x + \Delta \right )_{:i} = (x_1 + \Delta_1, \dots, x_{i} + \Delta_{i}, x_{i + 1}, \dots, x_d) $ for any $ i \in \left \{ 0, \dots, d \right \}  $ with $ (x + \Delta)_{:0} = x $ by convention.  Then:
    \begin{equation*} 
    \begin{aligned}
        \phi(x + \Delta) - \phi(x) &= \sum\limits_{ i=d }^{ 1 } \phi(x+\Delta)_{:i} - \phi((x + \Delta)_{:i-1}) 
    \end{aligned}
    \end{equation*}
    Knowing that $ \phi((x + \Delta)_{:i}) - \phi((x + \Delta)_{:(i-1)}) = \partial_i \phi({(x + \Delta)_{:i-1}}) \Delta_i + o(\lvert  \Delta_i \rvert) $, we have:
   \begin{equation*} 
   \begin{aligned}
       \phi(x + \Delta) - \phi(x)  - D \phi(x)(\Delta) &= \sum\limits_{ i=1 }^{ d } \left ( \partial_i \phi((x + \Delta)_{:i-1}) - \partial_i \phi(x) \right )\Delta_{i} + o(\lvert \Delta_i \rvert) \\
       \Longrightarrow \left \| \phi(x + \Delta) - \phi(x)  - D \phi(x)(\Delta)\right \| &\leq \sum\limits_{ i=1 }^{ d } \lvert \Delta_i \rvert  \left \| \partial_i \phi((x + \Delta)_{:i-1}) - \partial_i \phi(x) \right \| + o(\left \|\Delta \right \|_1)
   \end{aligned}
   \end{equation*}
   From \cite[Lemma 4.34]{steinwart2008support}, we have: that:
   \begin{equation*} 
   \begin{aligned}
   \left \| \partial_i \phi((x + \Delta)_{:i-1}) - \partial_i \phi(x) \right \|^2 &= A - B   \end{aligned}
   \end{equation*}
   where 
   \begin{equation*} 
   \begin{aligned}
       A &= \partial_i \partial_{i + d} k((x + \Delta)_{:i-1}, (x + \Delta)_{:i-1}) - \partial_i \partial_{i+d} k((x + \Delta)_{:i-1}, x))\\
       B &= \partial_i \partial_{i + d} k((x + \Delta)_{:i-1}, x) - \partial_i \partial_{i+d} k(x, x)
   \end{aligned}
   \end{equation*}
   Since $ \partial_i \partial_{i+d} k(x, x') $ is continuous, both $ A $ and  $ B $ tend to $ 0 $ as  $ \left \| \Delta \right \| $ tends to 0.  Thus, we have:
   \begin{equation*} 
   \begin{aligned}
       \left \| \phi(x + \Delta) - \phi(x)  - D \phi(x)(\Delta)\right \| &\leq \sum\limits_{ i=1 }^{ d } o(\lvert \Delta_i \rvert) + o(\left \|\Delta \right \|_1) = o( \left \|\Delta \right \|_{2})
   \end{aligned}
   \end{equation*}
   by equivalency of $  \left \| \cdot \right \|_{1} $ and $ \left \|\cdot \right \|_{2} $ in $ \mathbb{R}^d $.

   Lipschitzness is guaranteed by bounding the operator norm of $ D \phi(x) $:
   \begin{equation*} 
   \begin{aligned}
     D \phi(x) = \sup_{ \left \| \Delta \right \| = 1 } \left \|D \phi(x)\Delta  \right \| \leq \sum\limits_{ i=1 }^{ n } \lvert \Delta_i \rvert \left \| \partial_i \phi(x) \right \| \leq \sqrt { \left \| \Delta \right \|^2} \sqrt {\sum\limits_{ i=1 }^{ n } \left \| \partial_i \phi(x) \right \|^2}  = \sqrt {K_{1d}} 
   \end{aligned}
   \end{equation*}
\end{proof}

\begin{lemma}\label{lemma:kale-sensibility-wrt-P}
For any $ \mathbb{ P }_0 $, $ \mathbb{ P }_1 \in \mathcal  P_2(\Rd) $,
with associated KALE witness functions $ {h}^{\star}_0,   {h}^{\star}_1 $, we
have: \begin{equation*} 
\begin{aligned}
  \left \| {h}^{\star}_1 - {h}^{\star}_0 \right \|^2 \leq \frac{ K_{1d}  }{
  \lambda^2 } W_2(\mathbb{ P }_0, \mathbb{ P }_1)^2.
\end{aligned}
\end{equation*}
\end{lemma}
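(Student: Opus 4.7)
}
The plan is to exploit the $\lambda$-strong concavity of the KALE objective $\mathcal{K}(\cdot, \mathbb{P})$ in $h$ to obtain a quadratic lower bound on $\|h_1^\star - h_0^\star\|^2$, and then bound the resulting integral using the $\sqrt{K_{1d}}$-Lipschitz nature of RKHS functions guaranteed by \cref{assump:smooth-kernel}.

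First, observe that for any fixed $\mathbb{P}$, the map $h \mapsto \mathcal{K}(h, \mathbb{P})$ is $\lambda$-strongly concave on $\mathcal{H}$, since the sum of the linear and concave terms $\int h \, \mathrm{d}\mathbb{P} - \int e^h \, \mathrm{d}\mathbb{Q}$ is concave, and the penalty $-\frac{\lambda}{2}\|h\|^2$ contributes the strong concavity modulus. Using the first-order optimality of $h_i^\star$ for $\mathcal{K}(\cdot, \mathbb{P}_i)$ together with strong concavity, one obtains the two inequalities
\begin{align*}
    \mathcal{K}(h_0^\star, \mathbb{P}_1) &\leq \mathcal{K}(h_1^\star, \mathbb{P}_1) - \tfrac{\lambda}{2}\|h_1^\star - h_0^\star\|^2, \\
    \mathcal{K}(h_1^\star, \mathbb{P}_0) &\leq \mathcal{K}(h_0^\star, \mathbb{P}_0) - \tfrac{\lambda}{2}\|h_1^\star - h_0^\star\|^2.
\end{align*}
Summing these and noticing that $\mathcal{K}(h, \mathbb{P}_1) - \mathcal{K}(h, \mathbb{P}_0) = \int h \, \mathrm{d}(\mathbb{P}_1 - \mathbb{P}_0)$ (all other terms are independent of $\mathbb{P}$), the telescoping yields the clean inequality
\begin{equation*}
    \lambda \|h_1^\star - h_0^\star\|^2 \;\leq\; \int (h_1^\star - h_0^\star) \, \mathrm{d}(\mathbb{P}_1 - \mathbb{P}_0).
\end{equation*}

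Next, I would bound the right-hand side via an optimal transport argument. Let $f = h_1^\star - h_0^\star \in \mathcal{H}$. By \cref{assump:smooth-kernel} and the gradient bound recalled in the main text, $\|\nabla f(x)\| \leq \sqrt{K_{1d}}\,\|f\|$ for every $x$, so $f$ is globally $\sqrt{K_{1d}}\,\|f\|$-Lipschitz on $\mathbb{R}^d$. Taking an optimal coupling $\gamma \in \Gamma_o(\mathbb{P}_0, \mathbb{P}_1)$ in the sense of \cref{eq:wasserstein} and applying the Lipschitz bound pointwise, followed by Cauchy--Schwarz on the product measure $\gamma$, gives
\begin{equation*}
    \int (h_1^\star - h_0^\star) \, \mathrm{d}(\mathbb{P}_1 - \mathbb{P}_0) = \int \bigl(f(y) - f(x)\bigr) \mathrm{d}\gamma(x,y) \leq \sqrt{K_{1d}}\,\|f\| \cdot W_2(\mathbb{P}_0, \mathbb{P}_1).
\end{equation*}

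Chaining the two displays and dividing both sides by $\|h_1^\star - h_0^\star\|$ (the statement is trivial when this is zero) immediately produces the bound $\|h_1^\star - h_0^\star\| \leq \frac{\sqrt{K_{1d}}}{\lambda} W_2(\mathbb{P}_0, \mathbb{P}_1)$, and squaring concludes the proof. No step appears to be a genuine obstacle: the only place requiring care is ensuring that the RKHS gradient bound from \cref{assump:smooth-kernel} translates into a global Lipschitz constant for $f$ (which follows by integrating $\nabla f$ along straight line segments), and that the optimal coupling is used so the integral of $\|x-y\|$ is controlled by $W_2$ rather than by $W_1$.
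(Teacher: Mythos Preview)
Your proof is correct and takes a genuinely different, more elementary route than the paper. The paper parametrizes the line $\mathbb{P}_t = \mathbb{P}_0 + t(\mathbb{P}_1 - \mathbb{P}_0)$, invokes the implicit function theorem to obtain $\frac{d h_t^\star}{dt} = (\boldsymbol{L}(h_t^\star) + \lambda I)^{-1}\!\int k(x,\cdot)\,d(\mathbb{P}_1-\mathbb{P}_0)$, bounds the operator norm of $(\boldsymbol{L}+\lambda I)^{-1}$ by $1/\lambda$ via the spectral theorem, and then integrates and couples to reach the same inequality. Your argument bypasses all of this by applying the $\lambda$-strong concavity of $\mathcal{K}(\cdot,\mathbb{P})$ directly at the two optimizers and summing, which immediately yields $\lambda\|h_1^\star-h_0^\star\|^2 \leq \int (h_1^\star-h_0^\star)\,d(\mathbb{P}_1-\mathbb{P}_0)$; the coupling step is then essentially the same as the paper's. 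What you gain is simplicity: no differentiability of $t\mapsto h_t^\star$ needs to be established, and no spectral or operator-theoretic machinery is invoked. What the paper's approach buys is consistency with the IFT framework used elsewhere (e.g.\ in the proof of \cref{lemma:KALE-diff} and the geodesic convexity lemma), and an explicit intermediate bound $\|h_1^\star - h_0^\star\| \leq \tfrac{1}{\lambda}\|\mu_{\mathbb{P}_1}-\mu_{\mathbb{P}_0}\|_{\mathcal{H}}$ in terms of the mean-embedding difference, which may be of independent use.
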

\begin{proof}
  The optimal functions $ {h}^{\star}_0 $ and $ {h}^{\star}_1 $ are
  characterized by the following optimality condition:
  \begin{equation*} 
  \begin{aligned}
    \int_{  }^{  } k(x, \cdot) \text{d} \mathbb P - \int_{  }^{  } k(x, \cdot) e^{
    {h}^{\star}} \text{d} \mathbb Q - \lambda {h}^{\star} = 0.
  \end{aligned}
  \end{equation*}
Let us now pose $ \text{d}\mathbb{ P }_t = \text{d} \mathbb P_0 + t d\chi $ with $
\text{d}\chi = \text{d} \mathbb P_1 - \text{d} \mathbb P_0 $, and its
associated witness function $ {h}^{\star}_t $. Using an implicit function
theorem argument \cite{lang_fundamentals_1999} between $ t $ and  $ {h}^{\star}_t $, we can write,
with notations of \cref{proof:KALE-diff}:
\begin{equation*} 
\begin{aligned}
  \frac{ d {h}^{\star}_t }{ d t } = \left ( \boldsymbol{L}(
  {h}^{\star}_t) + \lambda I \right )^{-1} \int_{   }^{  }k(x, \cdot) (\text{d} \mathbb P_1
  - \text{d} \mathbb P_0).
\end{aligned}
\end{equation*}
The operator $ \boldsymbol{L} $ is the covariance operator of the measure  $ \tilde{\mathbb{
Q }} = e^{ {h}^{\star}} \mathbb{ Q } $. This operator is compact given that $ k $ is
bounded by \cref{assump:bounded-kernel}. Using the spectral theorem on Hilbert spaces, 
we know that there exists a complete orthonormal system of eigenvectors of $ \boldsymbol{L}
$, with associated eigenvalues $ \left \{ \mu_{i, t} \right \}_{i \in
  \mathbb{N}}  $ for any $ t $. The operator $ \left ( \boldsymbol{L} + \lambda I \right
)^{-1} $ admits an identical eigendecomposition, with eigenvalues $ \left \{
\frac{1}{\lambda + \mu_{i, t}} \right \}_{i \in \mathbb{N}} $: thus, the
operator norm of $ (\boldsymbol{L} + \lambda I)^{-1} $ is upper-bounded by $ 1
/ \lambda $. We can thus extract a bound on $ \left \| {h}^{\star}_1 - {h}^{\star}_0
\right \|^2 $:
\begin{equation*} 
\begin{aligned}
  \left \| {h}^{\star}_1  - {h}^{\star}_0 \right \|^2 &= \left \| \int_{0}^{1} \left ( \boldsymbol{L}( {h}^{\star}_t) + \lambda
  I \right )^{-1} \left ( \int_{  }^{  }k(x, \cdot) \left ( \text{d} \mathbb 
P_1 - \text{d} \mathbb P_0 \right ) \right ) \text{d}t \right \|^2 \\
  & \leq \int_{ 0 }^{ 1 } \left \| \left ( L( {h}^{\star}_t) + \lambda
  I \right ) \int_{  }^{  } k(x, \cdot) \left ( \text{d} \mathbb P_1 - \text{d} \mathbb
P_0\right ) \right \|^2 \text{d}t \quad \\
				& \leq \int_{ 0 }^{ 1 } \frac{1}{\lambda^2} \left \| \int_{
				}^{  } k(x, \cdot) \left ( \text{d} \mathbb P_1 -
			      \text{d} \mathbb P_0\right )  \right \|^2\text{d}t = \frac{1}{\lambda^2}\left \| \int_{
				}^{  } k(x, \cdot) \left ( \text{d} \mathbb P_1 -
			      \text{d} \mathbb P_0\right )  \right \|^2.
\end{aligned}
\end{equation*}
Now, let $ \nu \in \Gamma(\mathbb{ P }_1, \mathbb{ P }_0) $.  Then one has:
\begin{equation*} 
\begin{aligned}
  \int_{  }^{  } k(x, \cdot) \left ( \text{d} \mathbb P_0 - \text{d} \mathbb P_1 \right
    ) &= \int_{  }^{  } \left ( k(x, \cdot) - k(y, \cdot) \right ) d \nu(x, y)\\
  \left \|\int_{  }^{  } k(x, \cdot) \left ( \text{d} \mathbb P_0 - \text{d} \mathbb P_1 
    \right )  \right \|^2  &\leq \int_{  }^{  } \left \|k(x, \cdot) - k(y, \cdot)  \right \|^2
    \text{d} \nu(x, y)\\
		 & \leq K_{1d}\int_{  }^{  } \left \| x - y \right \|^2 \text{d}\nu(x, y)
		 = K_{1d} W_2(\mathbb{ P }_0, \mathbb{ P }_1)^2
\end{aligned}
\end{equation*}
Where we applied first Jensen's inequality and \cref{lemma:differential-kernel}.
\end{proof}

\section{Details of Numerical Experiments and Impact of Noise Injection}\label{app-sec:numerical-details}

\vspace{-1em}

In this section, we provide further details on the experiments in the main
paper. The step size $ \gamma $
used for the KALE particle descent algorithm scales with $ \lambda $ as $
\min_{  }(0.1, \frac{\lambda}{10}) $. For all experiments, we used a Gaussian
kernel $ k(x, y) = \exp(- \frac{ \left \| x - y \right \|^2 }{ 2 \sigma^2 }) $.
The kernel width $ \sigma $ is described for each experiment set.
\paragraph{``Three rings'' experiments}
For this experiment, the number of particles in each distribution was $ N = 300
$, and we used the Newton algorithm to compute the KALE.  We used a kernel width $
\sigma = 0.3 $. We show in \cref{subfig:noise-injection-ring} the impact of
noise injection with a constant noise schedule of $ \beta_n = 0.3 $.

\begin{figure*}[h]
\centering
\begin{subfigure}[t]{0.49\textwidth}
    \centering
    \includegraphics[width=\textwidth]{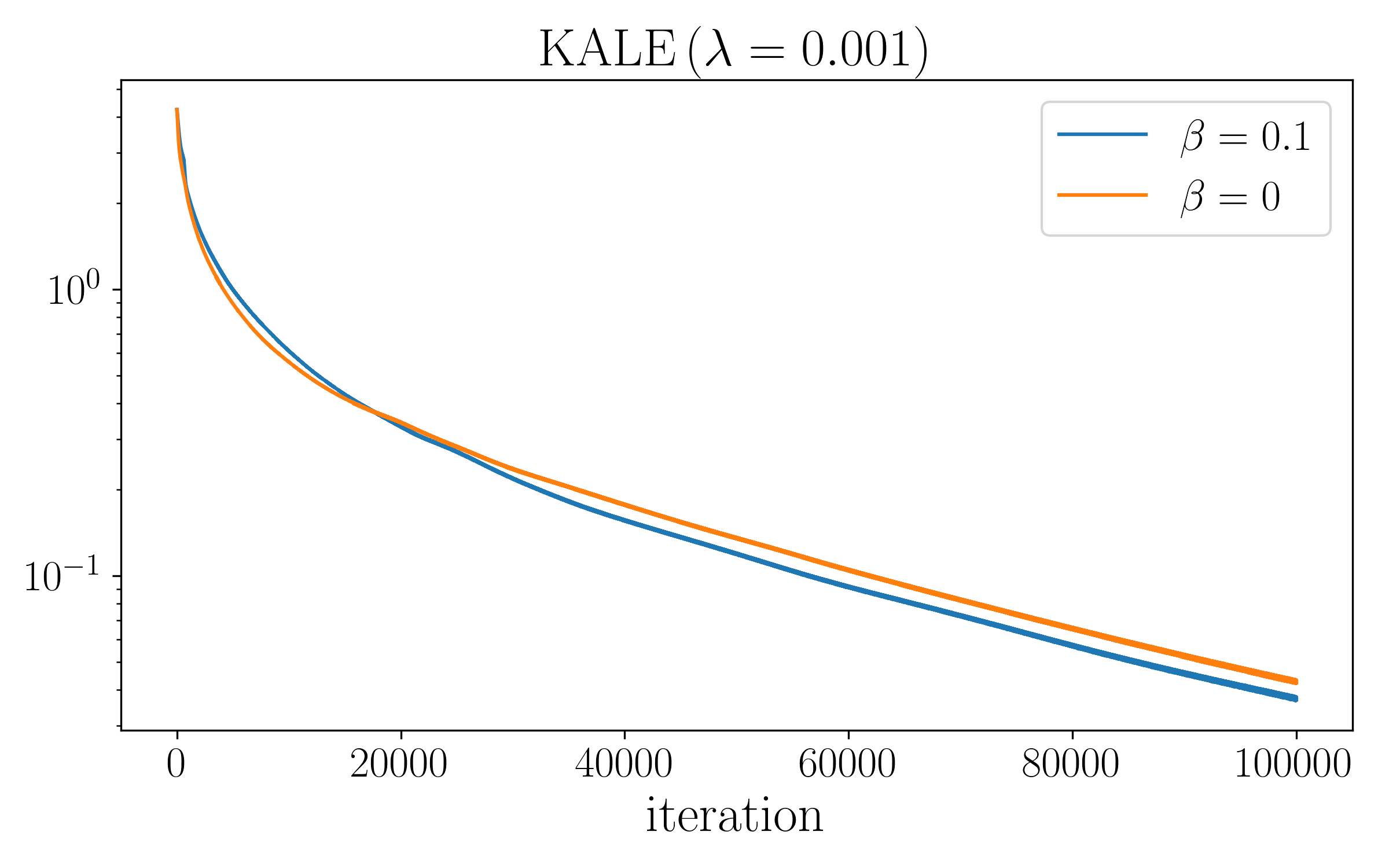} %
    \caption{``Three rings''}

    \label{subfig:noise-injection-ring}
\end{subfigure} %
\begin{subfigure}[t]{0.49\textwidth}
    \includegraphics[width=\textwidth]{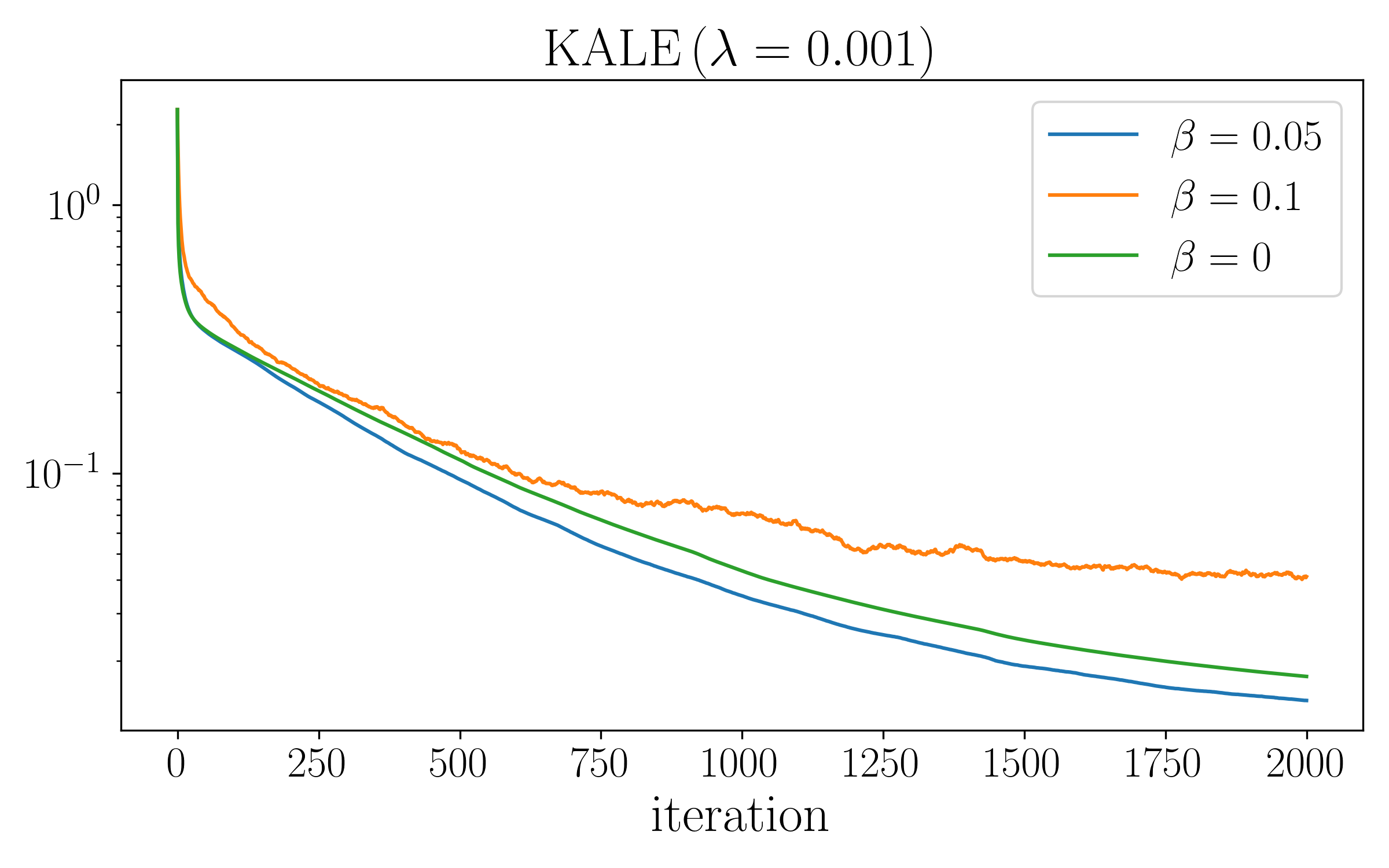} %
    \caption{``Shape transfer''}

    \label{subfig:noise-injection-shape}
\end{subfigure}

\caption{Impact of noise injection on the KALE value during a KALE particle descent algorithm.}
\label{fig:noise-injection-kale-mog-shape}
\end{figure*}
{\bf ``Shape transfer'' experiments} 
For this experiment, we used artificial data from the same
source as \cite{mroueh2020unbalanced}. We sub-sampled both shapes to $ N=2000 $
points, and used a kernel width of $ \sigma = 0.3 $, as well as $
\lambda = 0.001 $.  Because the number of particles is higher in that case, we
used a coordinate descent algorithm to compute KALE, that has a complexity in $
\mathcal  O(N^2) $.  We show in \cref{subfig:noise-injection-shape} the impact of noise injection with a
constant noise schedule of $ \beta_n = 0.05 $. For this experiment, we also
show empirically that while using a small amount of noise \emph{lowers} the
final KALE value when compared to the unregularized KALE flow, a too large
noise level $ \beta_n = 0.1$ results in a \emph{larger} final KALE value. We
hypothesize that that noise schedule did not respect the assumptions made in
\cref{prop:KALE-noise-injection}.

{\bf ``Mixture of Gaussians'' experiments}
For this experiment, we used $ N=240 $ particles for
each distribution, and a standard deviation of $ 0.25 $ for each target
Gaussian.  We used the Unadjusted Langevin Algorithm
\cite{durmus_non-asymptotic_2016} to simulate a KL gradient flow with step size
$ 0.001 $, and the MMD particle descent algorithm of \cite{arbel_maximum_2019_1}
to simulate a MMD flow with step size $0.001$. For both the MMD and the KALE, we
used the same Gaussian kernel with kernel width $ \sigma = 0.35 $. We show the
impact of noise injection for the KALE flow with a constant noise schedule $
\beta_n = 0.3 $ to regularize KALE flow with $ \lambda = 0.001, 0.1 $ and $
10000 $. 

 \begin{figure}[htbp]
   \hspace*{-3em}\includegraphics[width=1.1\textwidth]{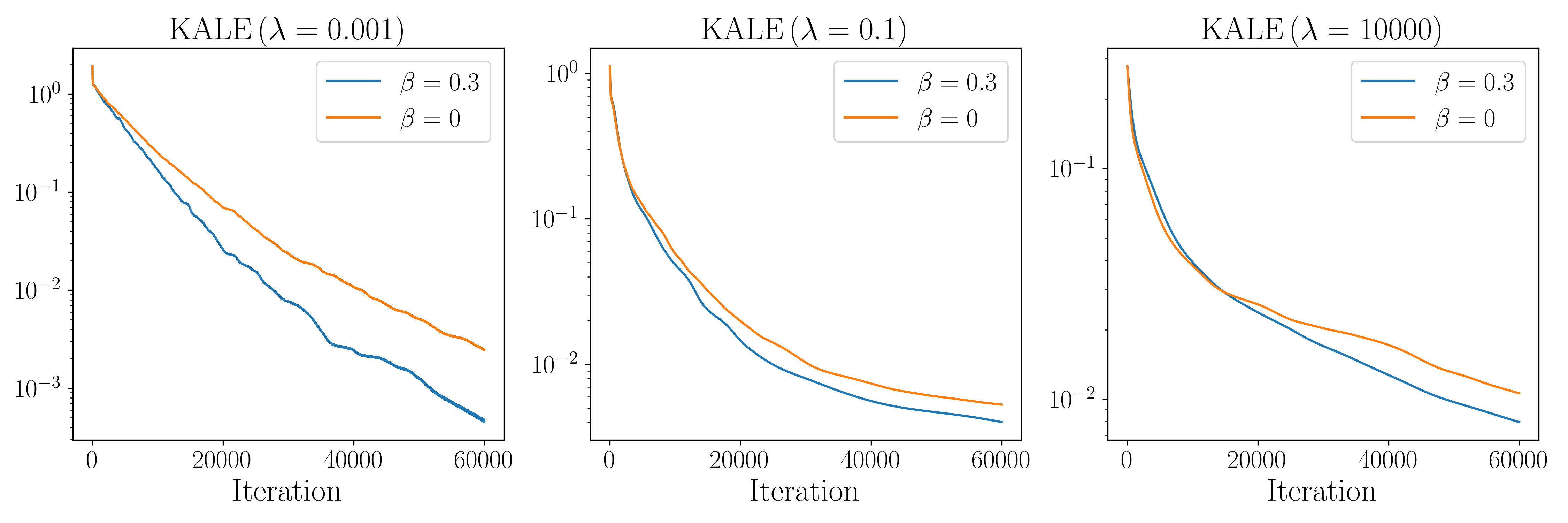} %
    \label{fig:noise-injection-mog}
    \caption{Impact of noise injection: Mixture of Gaussians experiments}
\end{figure}

\end{document}